\newif\iftecrep\tecreptrue    % journal style versus no-style
\newif\ifjmlr\jmlrfalse
\newcommand{\sap}{s\!a^{\!+}}
\newcommand{\sam}{s\!a^{\!-}}
\newcommand{\piucrl}{\pi}
\newcommand{\aO}{\tilde O }
\newcommand{\tM}{{\widetilde M}}
\newcommand{\tV}{{\widetilde V}}
\newcommand{\tp}{{\tilde p}}
\newcommand{\tw}{\tilde w}
\newcommand{\ts}{{\tilde \sigma}}
\newcommand{\hp}{{\hat p}}
\newcommand{\hV}{{\widehat V}}
\newcommand{\hM}{{\widehat M}}
\newcommand{\Mh}{{M_{\operatorname{hard}}}}
\newcommand{\ki}{{\kappa, \iota}}
\newcommand{\KI}{{\K\times\I}}
\newcommand{\SA}{{S \times A}}
\newcommand{\ceil}[1]{\left \lceil {#1} \right\rceil}
\newcommand{\s}[1]{\;#1\;}
\renewcommand{\P}[1]{P\left\{#1\right\}}
\newcommand{\argmax}{\operatornamewithlimits{arg\,max}}
\newcommand{\argmin}{\operatornamewithlimits{arg\,min}}
\newcommand{\ind}[1]{[\![ #1 ]\!]}
\newcommand{\twonorm}[1]{\left\|#1\right\|_2}
\newcommand{\Var}{\operatorname{Var}}
\renewcommand{\O}[1]{O\left(#1 \right)}
\newcommand{\E}{\mathbf E}
\newcommand{\M}{{\mathcal M}}
\newcommand{\K}{{\mathcal K}}
\newcommand{\R}[0]{\mathbb R}
\newcommand{\N}[0]{\mathbb N}
\newcommand{\I}{{\mathcal I}}
\newcommand{\constm}{{20L_1|\KI||\mathcal D|^2 \over \epsilon^2 (1 - \gamma)^{2 + 2/\beta}}}
\newcommand{\constH}{{{1 \over 1 - \gamma} \log{8|S| \over \epsilon(1 - \gamma)}}}
\newcommand{\constiotamax}{{\ceil{{1 \over \log 2} \log {8|S| \over \epsilon(1 - \gamma)^2}}}}
\newcommand{\constN}{{6|\SA|m}}
\newcommand{\constdeltaone}{{\delta \over 2|\SA|\umax}}
\newcommand{\constwmin}{{\epsilon(1 - \gamma) \over 4|S|}}
\newcommand{\constbeta}{\ceil{{1 \over 2 \log 2} \log{1 \over 1 - \gamma}}}
\newcommand{\constexpmax}{4N|\KI|}
\newcommand{\constupdate}{|\SA|\log {|\SA| \over |\KI|}}
\newcommand{\constumax}{|\SA||\KI|}
\newcommand{\umax}{U_{\operatorname{max}}}
\newcommand{\expmax}{E_{\operatorname{{max}}}}
\newcommand{\wmin}{w_{\operatorname{{min}}}}
\newcommand{\iotamax}{{\iota_{\operatorname{max}}}}
\newcommand{\eqr}[1]{Equation (\ref{#1})}
\newcommand{\eqn}[1]{\begin{align}#1\end{align}}
\newcommand{\eq}[1]{\begin{align*}#1\end{align*}}
\theoremstyle{plain}
\newtheorem{theorem}{Theorem}
\newtheorem{lemma}[theorem]{Lemma}
\theoremstyle{definition}
\newtheorem{definition}[theorem]{Definition}
\newtheorem{assumption}[theorem]{Assumption}
\newtheorem{remark}[theorem]{Remark}
\theoremstyle{remark}
\renewcommand{\qedsymbol}{$\blacksquare$}
\newenvironment{proofof}[1]{\par\vspace{1mm}\noindent{\bfseries\upshape Proof#1.}}{\hfill\qedsymbol \par\vspace{2mm}\noindent\ignorespacesafterend}
\renewenvironment{proof}{\par\vspace{1mm}\noindent{\bfseries\upshape Proof.}}{\hfill\qedsymbol \par\vspace{2mm}\noindent\ignorespacesafterend}
\newenvironment{proofsketchof}[1]{\par\vspace{1mm}\noindent{\bfseries\upshape Proof sketch#1.}}{\hfill\qedsymbol \par\vspace{2mm}\noindent\ignorespacesafterend}
\newenvironment{keywords}{\centerline{\bf\small
Keywords}\begin{quote}\small}{\par\end{quote}\vskip 1ex}
\newcommand{\acks}[1]{\subsubsect{Acknowledgements} #1}
\def\subsubsect#1{\vspace{1ex plus 0.5ex minus 0.5ex}\noindent{\bf\boldmath{#1.}}}
\begin{document}
%%%%%%%%%%%%%%%%%%%%%%%%%%%%%%%%%%%%%%%%%%%%%%%%%%%%%%%%%%%%%%%
%%                    Title - Page                           %%
%%%%%%%%%%%%%%%%%%%%%%%%%%%%%%%%%%%%%%%%%%%%%%%%%%%%%%%%%%%%%%%

\title{
\vskip 2mm\bf\Large\hrule height5pt \vskip 4mm
PAC Bounds for Discounted MDPs
\vskip 4mm \hrule height2pt}
\author{{\bf Tor Lattimore}$^1$ and {\bf Marcus Hutter}$^{1,2,3}$ \\[3mm]
\normalsize Research School of Computer Science \\[-0.5ex] %, NICTA \\
\normalsize $^1$Australian National University and $^2$ETH Z{\"u}rich and $^3$NICTA \\[-0.5ex]
\normalsize\texttt{\{tor.lattimore,marcus.hutter\}@anu.edu.au}
}
\date{January 2012}

\maketitle

\begin{abstract}
We study upper and lower bounds on the sample-complexity of learning near-optimal behaviour in finite-state
discounted Markov Decision Processes (MDPs).
For the upper bound we make the assumption that each action leads to at most two possible next-states and
prove a new bound for a UCRL-style algorithm on the number of time-steps when it is not Probably Approximately Correct (PAC).
The new lower bound strengthens previous work by being both more general (it applies to all policies) and tighter.
The upper and lower bounds match up to logarithmic factors.

\iftecrep\def\contentsname{\centering\normalsize Contents}
{\parskip=-2.7ex\tableofcontents}\fi
\end{abstract}

\begin{keywords} %\small
Reinforcement learning;
sample-complexity;
exploration exploitation;
PAC-MDP;
Markov decision processes.
\end{keywords}

\newpage
%%%%%%%%%%%%%%%%%%%%%%%%%%%%%%%%%%%%%%%%%%%%%%%%%%%%%%%%%%%%%%%
\section{Introduction}
%%%%%%%%%%%%%%%%%%%%%%%%%%%%%%%%%%%%%%%%%%%%%%%%%%%%%%%%%%%%%%%

The goal of reinforcement learning is to construct algorithms that learn to act optimally, or nearly so, in unknown environments.
In this paper we restrict our attention to finite state discounted MDPs with unknown transitions.
The performance of reinforcement learning algorithms in this setting can be measured in a number of ways, for instance by using regret or
PAC
bounds \citep{Kak03}. We focus on the
latter, which is a measure of the number of time-steps where an algorithm is not near-optimal with high probability. Many previous algorithms
have been shown to be PAC with varying bounds \citep{Kak03,LS05,Str06,Str09,SS10,Aue11}.

We modify the Upper Confidence Reinforcement Learning (UCRL) algorithm of \citet{AJO10,Aue11,LS08} and, under the assumption
that there are at most two possible next-states for each state/action pair, prove a PAC bound of
\eq{
\aO\left({|\SA| \over \epsilon^2(1-\gamma)^3} \log{1 \over \delta}\right).
}
This bound is an improvement\footnote{In this slightly restricted setting.} on the previous best \citep{Aue11} and published best \citep{SS10}, which are
\eq{
\aO\left({|\SA| \over \epsilon^2(1-\gamma)^4} \log{1 \over \delta}\right)\qquad \text{and} \qquad
\aO\left({|\SA| \over \epsilon^2(1-\gamma)^6} \log{1 \over \delta}\right)
}
respectively.
The additional assumption is unfortunate and is probably unnecessary as discussed in Section \ref{sec_extension}.

We also present a matching (up to logarithmic factors) lower bound that is both larger and more general than the previous best given by
\citet{Str09}. The class of MDPs used in the counter-example satisfy the assumption used in the upper bound.

%%%%%%%%%%%%%%%%%%%%%%%%%%%%%%%%%%%%%%%%%%%%%%%%%%%%%%%%%%%%%%%
\section{Notation}
%%%%%%%%%%%%%%%%%%%%%%%%%%%%%%%%%%%%%%%%%%%%%%%%%%%%%%%%%%%%%%%

Unfortunately, we found it impossible to reduce the amount of notation and number of constants. While we have endeavoured to
define everything before we use it, readers are encouraged to consult the tables of notation and constants found in the appendix.

%-------------------------------%
\subsubsect{General}
%-------------------------------%
$\N = \left\{0,1,2,\cdots\right\}$ is the natural numbers. For the indicator function we write $\ind{x = y} = 1$ if $x = y$ and $0$ if $x \neq y$.
We use $\wedge$ and $\vee$ for logical and/or respectively. If $A$ is a set then $|A|$ is its size and $A^*$ is the set of all finite ordered subsets. Unless otherwise mentioned, $\log$ represents
the natural logarithm. For random variable $X$ we write $\E X$ and $\Var X$ for its expectation and variance respectively.
We make frequent use of the progression $z_i = 2^i - 2$ for $i \geq 1$. Define a set
$\mathcal Z(a) := \left\{z_i : 1 \leq i \leq \argmin_{i} \left\{z_i \geq a\right\} \right\}$.

%-------------------------------%
\subsubsect{Markov Decision Process}
%-------------------------------%
An MDP is a tuple $M = (S, A, p, r, \gamma)$ where $S$
and $A$ are finite sets of states and actions respectively. $r:S\to[0,1]$ is
the reward function. $p:\SA\times S\to[0,1]$ is the transition function and $\gamma \in (0, 1)$ the discount rate.
A stationary policy $\pi$ is a function $\pi:S \to A$ mapping a state to an action.
We write $p_{s,a}^{s'}$ as the probability of moving from state $s$ to $s'$ when taking action $a$
and $p_{s,\pi}^{s'} := p_{s,\pi(s)}^{s'}$.
The value of policy $\pi$ in $M$ and state $s$ is
$V^\pi_M(s) := r(s) + \gamma \sum_{s' \in S} p_{s,\pi}^{s'} V^\pi_M(s')$.
We view $V^\pi_M$ either as a function $V^\pi_M:S \to \R$ or a vector $V^\pi_M \in \R^{|S|}$ and similarly $p_{s,a} \in [0,1]^{|S|}$ is a vector.
The optimal policy of $M$ is defined $\pi^*_M := \argmax_{\pi} V^\pi_M$.
Common MDPs are $M$, $\hM$ and $\widetilde M$, which represent the true MDP, the estimated
MDP using empirical transition probabilities and a model.
We write $V := V_M$, $\widehat V := V_{\widehat M}$ and
$\widetilde V := V_{\widetilde M}$ for their values respectively. Similarly, $\hat \pi^* := \pi^*_{\hM}$ and
in general, variables with an MDP as a subscript will be written with a hat, tilde or nothing
as appropriate and the subscript omitted.

%%%%%%%%%%%%%%%%%%%%%%%%%%%%%%%%%%%%%%%%%%%%%%%%%%%%%%%%%%%%%%%
\section{Estimation}
%%%%%%%%%%%%%%%%%%%%%%%%%%%%%%%%%%%%%%%%%%%%%%%%%%%%%%%%%%%%%%%
In the next section we will introduce the new algorithm, but first we give an intuitive introduction to the type of parameter estimation
required to prove sample-complexity bounds for MDPs. The general idea is to use concentration inequalities to show the empiric estimate of a transition probability
approaches the true probability exponentially fast in the number of samples gathered. There are a wide variety of concentration inequalities, each catering
to a slightly different purpose. We improve on previous work by using Bernstein's inequality, which takes variance
into account (unlike Hoeffding).
The following example demonstrates the need for Bernstein's inequality when estimating the value functions of MDPs.
It also gives insight into the workings of the proof in the next two sections.

\setlength{\intextsep}{0pt}
\begin{wrapfigure}[6]{r}{3.4cm}
\topsep=0.0cm
\vspace{-0.1cm}
\small
\begin{tikzpicture}[->,>=stealth',shorten >=1pt,auto,node distance=2.0cm, semithick]
\tikzstyle{state} = [circle,draw,minimum width=0.8cm, minimum height=1.0cm]
\node[state] (plus) {$\stackrel{s_0} {r = 1}$};
\node[state] (minus) [right of=plus] {$\stackrel{s_1} {r = 0}$};
\path (plus) edge[bend left] node {$1 - p$} (minus)
      (plus) edge[loop below] node {$p$} (plus)
    (minus) edge[bend left] node {$1-q$} (plus)
    (minus) edge[loop below] node {$q$} (minus)
;
\end{tikzpicture}
\end{wrapfigure}
Consider the Markov reward process on the right with two states where rewards are shown inside the states and transition probabilities on the edges.
Note this is not an MDP because there are no actions.
We are only concerned with how well the value can be approximated.
Assume $p > \gamma$, $q$ arbitrarily large (but not $1$) and let $\hat p$ be the empiric estimate of $p$ and consider the error in our estimated value and the true value while in state $s_0$. One can show that
\eqn{\label{eqn-intuition}
\left|V(s_0) - \hV(s_0)\right| \s{\approx} {|\hat p - p| \over (1 - \gamma)^2}.
}
Therefore if $V - \hV$ is to be estimated to within $\epsilon$ accuracy, we need
$|\hat p - p| < \epsilon (1 - \gamma)^2$. Now suppose we bound $|\hat p - p|$ via a standard Hoeffding bound, then with high probability
$|\hat p - p| \lesssim \sqrt{L/n}$
where $n$ is the number of visits to state $s_0$ and $L = \log (1 / \delta)$.
Therefore to obtain an error less than $\epsilon(1 - \gamma)^2$ we need $n > {L \over \epsilon^2 (1 - \gamma)^4}$ visits to state $s_0$,
which is already too many for a bound in terms of $1/(1 - \gamma)^3$.
If Bernstein's inequality is used instead, then $|\hat p - p| \lesssim \sqrt{Lp(1-p) / n}$ and so
$n > {Lp(1-p) \over \epsilon^2 (1 - \gamma)^4}$ is required, but Equation (\ref{eqn-intuition}) depends on $p > \gamma$. Therefore
$n > {L \over \epsilon^2 (1 - \gamma)^3}$ visits are sufficient. If $p < \gamma$ then \eqr{eqn-intuition} can be improved.

%%%%%%%%%%%%%%%%%%%%%%%%%%%%%%%%%%%%%%%%%%%%%%%%%%%%%%%%%%%%%%%
\section{Upper Confidence Reinforcement Learning Algorithm}\label{sec_alg}
%%%%%%%%%%%%%%%%%%%%%%%%%%%%%%%%%%%%%%%%%%%%%%%%%%%%%%%%%%%%%%%
UCRL is based on the optimism principle for solving the exploration/exploitation dilemma. It is model-based
in the sense that at each time-step the algorithm acts according to a model (in this case an MDP, $\tM$) chosen from a model class. The idea
is to choose the smallest model class guaranteed to contain the true model with high probability and act according to the
most optimistic model within this class. With a good choice of model class this guarantees a policy that biases its exploration towards
unknown states that may yield good rewards while avoiding states that are known to be bad. The approach has been successful in obtaining
uniform sample complexity (or regret) bounds in various domains where the exploration/exploitation problem is an issue
\citep{LR85,Agr95,ACF02,LS05,AO07,AJO10,Aue11}.

Unfortunately, to prove our new bound we needed to make an assumption about the transition probabilities of the true MDP. We do not believe
this assumption is crucial, but it substantially eases the analysis by removing some dependencies in the more general problem. In
Section \ref{sec_extension} we present an approach to remove the assumption as well as some intuition into why this ought to be possible, but non-trivial.
\begin{assumption}\label{ass}
The true unknown MDP, $M$, satisfies $p_{s,a}^{s'} = 0$ for all but two $s' \in S$ denoted $\sap, \sam \in S$.\footnote{Note that $\sap$ and $\sam$
{\it are} dependent on $(s, a)$ and are known to the algorithm.}
\end{assumption}
The pseudo-code of UCRL can be found below, but first we define a {\it knownness} index, $\kappa$. If $n$ is the number of times a state/action pair
has been visited then $\kappa(\iota, n)$ is the knownness of that state/action pair at level $\iota$.
The knownness of a state increases with the number of visits, is bounded by $|S|$ and is always a natural number. The reason for defining
these now is that UCRL will only perform an update when the knownness index of some states would be changed by an update. Unfortunately, the
definition below is unlikely to be very intuitive. A more thorough explanation of knownness is given
in Section \ref{sec-upper}.
\begin{definition}[Knownness]
Define constants
\eq{
\wmin &:= \constwmin & w_\iota &:= 2^\iota \wmin & \iotamax &:= \constiotamax \\
\mathcal I &:= \left\{0,1,\cdots, \iotamax\right\} & \mathcal K &:= \mathcal Z(|S|).
}
We define the knownness index, $\kappa:\I \times \N \to \K$ by
\eq{
\kappa(\iota,n) & := \max \left\{ z \in \K : z \leq {n \over w_{\iota} m} \right\},
}
where $m \in \aO\left({{1 \over \epsilon^2(1 - \gamma)^2} \log{|\SA| \over \delta}}\right)$ is defined in Appendix \ref{app_const}.
\end{definition}
Note that the existence of the function \Call{ExtendedValueIteration}{} is proven and an algorithm given by \citet{LS08}.

\begin{algorithm}[H]
\caption{UCRL}
\small
\begin{algorithmic}[1]
\State $t = 1$, $k = 1$, $n(s, a) = n(s, a, s') = 0$ for all $s, a, s'$ and $s_1$ is the start state.
\State $H := \constH$, $L_1 := \log{2 \over \delta_1}$ and $\delta_1 := {\delta \over 2|\SA|^2 |\KI|}$
\Loop
\State $\hp_{s,a}^{\sap} := n(s, a) / \max\left\{1, n(s, a, \sap)\right\}$ and $\hp_{s,a}^{\sam} := 1 - \hp_{s,a}^{\sap}$
\State $\mathcal M_k := \left\{\tM : |\tp_{s,a}^{\sap} - \hp_{s,a}^{\sap}| \leq \Call{ConfidenceInterval}{\tp_{s,a}^{\sap}, n(s,a)},\; \forall (s,a) \right\}$
\State $\tM = \Call{ExtendedValueIteration}{\M_k}$
\State $\pi_k = \tilde \pi^*$
\State $v(s, a) = v(s, a, s') = 0$ for all $s, a, s'$
\While{$\kappa(\iota, n(s, a) + v(s, a)) = \kappa(\iota,n(s, a)) , \forall (s, a), \iota \in \I$}
\State \Call{Act}{}
\EndWhile
\State \Call{Delay}{} and \Call{Update}{}
\EndLoop
\Function{Delay}{}
\For{$j = 1 \to H$}
\State \Call{Act}{}
\EndFor
\EndFunction
\Function{Update}{}
\State $n(s, a) = n(s, a) + v(s, a)$ and $n(s, a, s') = n(s, a, s') + v(s, a, s')\; \forall s,a,s'$ and $k = k + 1$
\EndFunction
\Function{Act}{}
\State $a_t = \pi_k(s_t)$
\State $s_{t+1} \sim p_{s_t,a_t}$ \Comment{Sample from MDP}
\State $v(s_t, a_t) = v(s_t, a_t) + 1$ and $v(s_t, a_t, s_{t+1}) =v(s_t, a_t, s_{t+1}) + 1$ and $t = t + 1$
\EndFunction
\Function{ExtendedValueIteration}{$\mathcal M$}
\State {\bf return} optimistic $\tM \in \M$ such that $V_{\tM}^*(s) \geq V_{\tM'}^*(s)$ for all $s \in S$ and $\tM' \in \M$.
\EndFunction
\Function{ConfidenceInterval}{$p,n$}
\State \Return $\min \left\{\sqrt {{2L_1p(1 - p) \over n} } + {2L_1 \over 3n}, \; \sqrt{{L_1 \over 2n}}\right\}$
\EndFunction
\end{algorithmic}
\end{algorithm}
%%%%%%%%%%%%%%%%%%%%%%%%%%%%%%%%%%%%%%%%%%%%%%%%%%%%%%%%%%%%%%%
\section{Upper PAC Bounds}\label{sec-upper}
%%%%%%%%%%%%%%%%%%%%%%%%%%%%%%%%%%%%%%%%%%%%%%%%%%%%%%%%%%%%%%%

We present two new PAC bounds.
The first improves on all previous analysis, but relies on Assumption \ref{ass}.
The second is completely general, but
gains an additional dependence on $|S|$ leading to a PAC bound in terms of $|S|^2$ and $1/(1 - \gamma)^3$. This bound is worse than
the previous best in terms of $|S|$, but better in terms $1/(1 - \gamma)$.

\begin{theorem}\label{thm_upper}
Let $M$ be the true MDP satisfying Assumption \ref{ass}. Let $\pi$ be the actual (non-stationary) policy of UCRL (Algorithm 1),
then $V^*(s_t) - V^{\pi}(s_t) > \epsilon$ for at most
\eq{
H\umax + H\expmax \in
\O{{|\SA| \over \epsilon^2(1 - \gamma)^3} \log {|\SA| \over \delta\epsilon(1 - \gamma)} \log^2 |S|
\log^2 {|S| \over \epsilon(1 -\gamma)}\log^2 \log{1 \over 1 - \gamma}}
}
time-steps with probability at least $1 - \delta$. ($\umax$ and $\expmax$ are defined in Appendix \ref{app_const}.)
\end{theorem}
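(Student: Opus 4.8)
The plan is to follow the optimism-based template for PAC-MDP analysis, but to exploit Assumption \ref{ass} together with Bernstein's inequality and a weight-stratified notion of knownness in order to shave a factor of $1/(1-\gamma)$ off the value-error analysis. First I would establish the \emph{good event}. The confidence interval used in Algorithm 1 is exactly a Bernstein bound with the two-state variance $p(1-p)$, so I would show that, simultaneously for every episode $k$ and every pair $(s,a)$, the true transition $p_{s,a}^{\sap}$ lies within the interval around $\hp_{s,a}^{\sap}$. A union bound over the $|\SA|$ pairs and the at most $\umax$ episodes, using the per-event failure probability $\delta_1 = \delta/(2|\SA|^2|\KI|)$, keeps the total failure probability below $\delta$. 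On this event $M \in \M_k$ for all $k$, so the optimistic model $\tM$ returned by extended value iteration satisfies $\tV^*(s) \geq V^*(s)$ for all $s$. Since $\pi_k = \tilde\pi^*$ is optimal in $\tM$ we have $\tV^{\pi_k} = \tV^*$, and hence, writing $\pi_k$ for the stationary policy in force at time $t$,
\eq{
V^*(s_t) - V^{\pi_k}(s_t) \;\leq\; \tV^{\pi_k}(s_t) - V^{\pi_k}(s_t).
}

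Next I would bound this model gap by a simulation-lemma-style telescoping identity expressing $\tV^{\pi_k} - V^{\pi_k}$ as the discounted sum, along $\pi_k$-trajectories in $M$, of the one-step discrepancies $(\tp_{s,\pi_k} - p_{s,\pi_k})\cdot \tV^{\pi_k}$. Here the two-state assumption is essential: each discrepancy collapses to $|\tp_{s,a}^{\sap} - p_{s,a}^{\sap}|$ times the spread of $\tV^{\pi_k}$ over $\{\sap,\sam\}$, so the confidence width directly measures the error. Mirroring the motivating computation in \eqr{eqn-intuition}, a Bernstein width $\sqrt{L_1 p(1-p)/n}$ combined with the discounting produces an error scaling like $1/(1-\gamma)^3$ rather than the $1/(1-\gamma)^4$ one obtains from Hoeffding; this is the source of the improved exponent. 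Truncating the discounted sum at the horizon $H = \constH$ loses at most $\epsilon/4$ and also absorbs the passage from the non-stationary $V^\pi$ to the stationary $V^{\pi_k}$, reducing the gap to a weighted sum over state-actions reached within $H$ steps, each weighted by its discounted occupancy.

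The core of the argument is then a counting argument built on the knownness index. I would call a visited pair \emph{sufficiently known} for a trajectory when its count $n(s,a)$ is large relative to the weight $w_\iota = 2^\iota\wmin$ at which it is reached, precisely the condition that $\kappa(\iota, n(s,a))$ be large; stratifying weights into the geometric levels indexed by $\I$ and truncating knownness to the bounded range $\K = \mathcal Z(|S|)$ is what lets a single index capture ``enough samples for the weight carried.'' I would show that whenever every pair carrying non-negligible weight is sufficiently known, the weighted error sum is at most $\epsilon$, so $s_t$ is a good step; consequently a step can be bad only if the $\pi_k$-trajectory from $s_t$ places weight at least $\wmin$ on some insufficiently-known pair within $H$ steps. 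Each such escape is charged, through a martingale concentration over the $H$-step window (which adds to the $\delta$ budget), to an actual visit that increments $n(s,a)$ and drives it toward knownness. Summing the visits needed to raise every $(s,a)$ through every level to full knownness gives the exploration budget $\expmax = \constexpmax$ with $N = \constN$, while the number of episodes is at most $\umax = \constumax$ because each episode ends the moment a knownness index would change. Counting $H$ steps per episode for the \textsc{Delay} phases yields $H\umax$, counting $H$ steps per charged escape yields $H\expmax$, and substituting the definitions of $H$, $\umax$, $\expmax$, $m$ and $N$ and simplifying the logarithms gives the stated $\aO$ expression.

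The main obstacle I anticipate is this counting step: making rigorous the passage from ``the value gap exceeds $\epsilon$'' to ``a definite amount of occupancy weight sits on under-sampled pairs,'' and then charging those escapes to genuine increments of $n(s,a)$ without double counting across the overlapping $H$-step windows and across the levels in $\I$. This is exactly where the weight-stratified knownness index and the progression $z_i$ must be balanced against the Bernstein width so that the budget closes at $1/(1-\gamma)^3$; by comparison, the concentration and optimism steps are routine.
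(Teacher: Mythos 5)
Your proposal reproduces the paper's architecture almost exactly: a good event established by Bernstein plus a union bound over pairs and episodes (Lemma \ref{lem_model}), optimism reducing $V^*-V^\pi$ to the model gap $\tV^{\pi_k}-V^{\pi_k}$ with the horizon $H$ absorbing the non-stationarity (Lemma \ref{lem_optimism}), the telescoping identity expressing that gap as an occupancy-weighted sum of one-step discrepancies (Lemma \ref{lem_transform}), the two-state collapse of each discrepancy to a Bernstein width (Lemma \ref{lem_estimate_p_hp}), knownness-stratified counting of exploration phases (Lemmas \ref{lem_hard}--\ref{lem_actual_visits}), and the final count $H\umax + H\expmax$. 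At the skeleton level this is the paper's proof.

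There is, however, a genuine gap in the one step that actually produces the $1/(1-\gamma)^3$ rate. You justify it by the per-state heuristic of \eqr{eqn-intuition}: a Bernstein width $\sqrt{L_1 p(1-p)/n}$ ``combined with the discounting'' yields $1/(1-\gamma)^3$. That heuristic relies on the self-loop probability exceeding $\gamma$, so that $p(1-p)\leq 1-\gamma$; in a general MDP satisfying Assumption \ref{ass} the local variance $\ts^{\pi_k}(s)^2 = \tp(1-\tp)\bigl(\tV(\sap)-\tV(\sam)\bigr)^2$ can be of order $1/(1-\gamma)^2$ at individual states. Summing the occupancy-weighted widths with only the trivial bounds $\ts(s)^2\leq 1/(1-\gamma)^2$ and $\sum_s w_t(s)\leq 1/(1-\gamma)$ gives exactly the paper's ``naive bound'' (Equations (\ref{eqh-1})--(\ref{eqh-4})), which the paper explicitly notes is insufficient: it loses a factor $1/(1-\gamma)$, so your budget closes at $1/(1-\gamma)^4$, not $1/(1-\gamma)^3$. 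The missing idea is an \emph{aggregate} variance argument: Cauchy-Schwartz over the partition $K_t(\ki)$ combined with the law-of-total-variance bound $\sum_s \tw_t(s)\,\ts^{\pi}(s)^2 \leq 1/(\gamma^2(1-\gamma)^2)$ (Lemma \ref{lem_bound}, due to Sobel), which your proposal never invokes. Using it creates a second difficulty you do not anticipate: Lemma \ref{lem_bound} holds for the model weights $\tw$, while the telescoping identity produces the true weights $w$, and controlling $\sum_s (w(s)-\tw(s))\ts(s)^2$ is itself a value-difference problem for the ``reward'' $\ts^2\in[0,1/(1-\gamma)^2]$ --- this is precisely why the paper introduces the higher-moment value functions and the recurrence $\Delta_{2d+2}$ of Appendix \ref{app_hard}. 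By contrast, the step you flag as the main obstacle (charging escapes to visits without double counting) is comparatively routine and is dispatched by Bernstein on visit counts in Lemma \ref{lem_actual_visits}.
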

Note that although $\pi_k$ is stationary, the global policy of UCRL is non-stationary. Despite this, we will abuse notation
by allowing ourselves to write $V^\pi(s_t)$, whereas really $V^\pi$ should depend on the entire history.
Fortunately, when UCRL is not delaying, the policy $\pi$ is nearly stationary in the sense that it
will be so for the next $H$ time-steps. This allows us to work almost entirely with stationary policies and so discard
the cumbersome notation required for non-stationary policies.

\begin{theorem}\label{thm_upper2}
Let $M$ be the true MDP (possibly not satisfying Assumption \ref{ass}) then there exists a policy $\pi$ such that
$V^*(s_t) - V^\pi(s_t) > \epsilon$ for at most
$|S|\log^3|S| (\expmax H + \umax H)$ time-steps with probability at least $1 - \delta$.
\end{theorem}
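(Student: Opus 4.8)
The plan is to prove Theorem \ref{thm_upper2} by a \emph{binarisation} reduction: I convert an arbitrary MDP $M$ into one satisfying Assumption \ref{ass}, apply Theorem \ref{thm_upper} to it, and translate the resulting guarantee back into the parameters of $M$. Set $d := \ceil{\log_2 |S|}$ and build an augmented MDP $M'$ by replacing each stochastic jump of $M$ with a depth-$d$ binary tree of fresh auxiliary states: the real state $s$ deterministically enters the root of the tree for $(s,a)$, each internal node branches to two children with probabilities chosen so that the induced leaf distribution equals $p_{s,a}$, and each leaf deterministically maps to the next real state it encodes. Padding every root-to-leaf path to the common length $d$ keeps $\O{|S|}$ auxiliary states per pair, so $|S'|,|\SA'| \in \O{|\SA|\,|S|}$, and by construction every state/action pair of $M'$ has at most two successors, i.e.\ $M'$ satisfies Assumption \ref{ass}.

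I would give the auxiliary states zero reward and choose the discount $\gamma' := \gamma^{1/d}$, so that traversing one tree costs discount $\gamma'^{d} = \gamma$. The key structural fact to verify is that for every real state $s$ and every policy, $V^\pi_{M'}(s) = V^\pi_M(s)$: auxiliary states contribute no reward and a real transition in $M$ corresponds bijectively to a root-to-leaf path in $M'$, so the recursions coincide. Since the only genuine decision points of $M'$ are the real states, the optimal values agree, $V^*_{M'}(s) = V^*_M(s)$, and the $\epsilon$-accuracy requirement is preserved unchanged. The policy $\pi$ witnessing the theorem is then the one induced on $M$ by \emph{simulating} UCRL on $M'$: each observed transition $s \to s'$ in $M$ determines the whole root-to-leaf path it would have taken in $M'$, so one real sample yields one sample of every binary decision along that path, and Algorithm~1 can be run on $M'$ using only its observations in $M$.

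Applying Theorem \ref{thm_upper} to $M'$ bounds the number of $M'$-time-steps on which the value gap exceeds $\epsilon$ by $H'\umax' + H'\expmax'$, where primed quantities are evaluated at the parameters of $M'$. Every non-near-optimal real-state visit in $M$ occurs as a real-state step of $M'$ on which the gaps agree, and these form a subset of all non-near-optimal $M'$-steps, so the count for $M$ is bounded by the same quantity. It remains to re-express $H'\umax' + H'\expmax'$ in the parameters of $M$. The dominant contributions are the replacement $|\SA| \to |\SA'| \in \O{|\SA|\,|S|}$, which contributes a factor $|S|$, and the three occurrences of $1/(1-\gamma)$ in the cubic horizon dependence of $H\expmax$, each of which becomes $1/(1-\gamma')$, of order $d/(1-\gamma)$, and hence contributes $d^3 \in \O{\log^3|S|}$. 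Collecting these and absorbing the remaining logarithmic discrepancies between $|\K'|,|\I'|$ and $|\K|,|\I|$ yields the claimed multiplier $|S|\log^3|S|$ on $\expmax H + \umax H$.

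The hard part will be the faithful translation of the sample-complexity count rather than the combinatorics of the tree. I must argue carefully that simulating UCRL on $M'$ is legitimate given only $M$-observations (the shared upper tree nodes accumulate counts from several distinct next-states), that the per-step value gaps genuinely coincide on real states so that no near-optimal step of $M$ is miscounted, and that the high-probability guarantee survives the change of $\delta_1$, which now depends on $|\SA'|$ but only through logarithmic terms. A secondary subtlety is confirming that $1-\gamma^{1/d}$ scales like $(1-\gamma)/d$ uniformly in the regime of interest, so that the $d^3$ blow-up is tight and no hidden polynomial factor in $1/(1-\gamma)$ creeps in.
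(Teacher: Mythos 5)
Your proposal is correct and follows essentially the same route as the paper: the paper's (omitted) proof likewise converts $M$ into a functionally equivalent MDP satisfying Assumption \ref{ass} by attaching a tree of $O(|S|)$ states to each state/action pair and rescaling $\gamma$, then invokes Theorem \ref{thm_upper}. Your accounting of the blow-up --- a factor $|S|$ from the enlarged state/action space and $\log^3|S|$ from $1-\gamma^{1/d} \approx (1-\gamma)/d$ in the cubic horizon dependence --- matches the claimed bound $|S|\log^3|S|\,(\expmax H + \umax H)$.
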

The proof of Theorem \ref{thm_upper2} is omitted, but follows easily by converting an arbitrary MDP with $|S|$ states
into a functionally equivalent MDP with $O(|S|^2)$ states that satisfies Assumption \ref{ass}. This is done by adding a tree
of $2|S|$ states for each state/action pair and rescaling $\gamma$.

%-------------------------------%
\subsubsect{Proof Overview}
%-------------------------------%
The proof of Theorem \ref{thm_upper} borrows components from the work of \citet{AJO10}, \citet{LS08} and \citet{SS10}.
\begin{enumerate}
\item Bound the number of updates by $\constupdate$, which follows from the algorithm and the definition of knownness. This
bounds the number of delaying time-steps to $\aO({1 \over 1 -\gamma} \constupdate)$ time-steps, which is insignificant from the point of
view of Theorem \ref{thm_upper}.
\item Show that the true MDP remains in the model class $\mathcal M_k$ for all $k$.
\item Use the optimism principle to show that if $M \in \mathcal M_k$ and $V^* - V^\piucrl > \epsilon$ then $|\tV^{\pi_k} - V^{\pi_k}| > \epsilon/2$.
This key fact shows that if $\piucrl$ is not nearly-optimal at some time-step $t$ then the true value and model value of $\pi_k$ differ and
so some information is (probably) gained by following this policy.
\item The final component is to bound the number of time-steps when $\pi$ is not nearly-optimal.
\end{enumerate}

%-------------------------------%
\subsubsect{Episodes and phases}
%-------------------------------%
UCRL operates in {\it episodes}, which are blocks of time-steps ending when \Call{update}{} is called. The length of each episode is not fixed, instead, an
episode ends when the knownness of a state changes. We often refer to time-step $t$ and episode $k$ and unless there is ambiguity we will not define $k$ and
just assume it is the episode in which $t$ resides.
A {\it delay phase} is the period of $H$ contiguous time-steps where UCRL is in the function \Call{delay}{}, which happens immediately before an update.
An {\it exploration phase}
is a period of $H$ time-steps starting at $t$ where $t$ is not in a delay phase and where
$\tV^{\pi_k}(s_t) - V^{\pi_k}(s_t) \geq \epsilon / 2$. Exploration phases do note overlap. More formally, the starts of exploration phases, $t_1, t_2, \cdots$,
are defined inductively
\eq{
t_1 &:= \min \left\{t : \tV^{\pi_k}(s_t) - V^{\pi_k}(s_t) \geq \epsilon/2 \wedge t \text{ is not in a delay phase} \right\}  \\
t_i &:= \min \left\{t : t \geq t_{i-1} + H \wedge \tV^{\pi_k}(s_t) - V^{\pi_k}(s_t) \geq \epsilon/2 \wedge t \text{ is not in a delay phase}  \right\}.
}
Note there need not, and with high probability will not, be infinitely many such $t_i$. The exploration phases are only used in the
analysis, they are not known to UCRL.

%-------------------------------%
\subsubsect{Weights and variances}
%-------------------------------%
We define the weight\footnote{Also called the discounted future state distribution in \citet{Kak03}.} of state/action pair $(s, a)$ as follows.
\eq{
w^\pi(s, a | s') &\s{:=} \ind{(s', \pi(s')) = (s, a)} + \gamma \sum_{s''} p_{s',\pi(s')}^{s''} w^\pi(s,a|s'') & w_t(s) := w^{\pi_k}(s, \pi_k(s) | s_t).
}
As usual, $\tilde w$ and $\hat w$ are defined as above but with $p$ replaced by $\tilde p$ and $\hat p$ respectively.
Think of $w_t(s)$ as the expected number of discounted visits to state/action pair $(s, \pi_k(s))$ while following policy $\pi_k$ starting
in state $s_t$.
The important point is that this value is approximately equal to the expected number of visits to state/action pair
$(s, \pi_k(s))$ within the next $H$ time-steps.
We also define the local variance of the value function. These measure the variability of values while following policy $\pi$.
\eq{\sigma^\pi(s)^2 &:= p_{s,\pi} \cdot V^{\pi^2} - [p_{s,\pi} \cdot V^\pi]^2 &
\tilde \sigma^\pi(s)^2 &:= \tp_{s,\pi} \cdot \tV^{\pi^2} - [\tp_{s,\pi}\cdot \tV^\pi]^2.
}

%-------------------------------%
\subsubsect{The active set}
%-------------------------------%
We will shortly see that states with small $w_t(s)$ cannot influence the differences in value functions. Thus we define an {\it active}
set of states where $w_t(s)$ is not tiny.
At each time-step $t$ define the {\it active} set $X_t$ by
\eq{
X_t \s{:=} \left\{s : w_t(s) > \constwmin =: w_{min}\right\}.
}

%-------------------------------%
\subsubsect{Knownness}
%-------------------------------%
We now expand on the concept of knownness and explain its purpose.
We write $n_t(s, a)$ for the value of $n(s,a)$ at time-step $t$ and $n_t(s) := n_t(s, \pi_k(s))$ where
$k$ is the episode associated with time-step $t$.
Let $t$ be some non-delaying time-step and suppose $s$ is active ($s \in X_t$). Now let $\iota_t(s) := \argmin_{\iota} {w_t(s) > w_\iota}$
and note that $\iota_t(s) \in \mathcal I$. We define a partition of the active set $X_t$ by
\eq{
K_t(\ki) := \left\{s \in X_t: \iota_t(s) = \iota \wedge \kappa_t(\iota_t(s), n_t(s)) = \kappa \right\}.
}
The set $K_t(\ki)$ represents a set of states that have comparable weights and visit counts.
We will show that if $|K_t(\ki)| \leq \kappa$ for all $\ki$ then the values $\tV$ and $V$ are reasonably close.
This result forms a key stage in the proof of Theorem \ref{thm_upper} because it shows that if $\pi$ is not nearly-optimal at
time-step $t$ then there exists a
$K_t(\ki)$ that is quite large and where states have not been visited sufficiently. Furthermore, the weights $w_t(s)$ where
$s \in K_t(\ki)$ are large enough that some learning is expected to occur.

%-------------------------------%
\subsubsect{Analysis}
%-------------------------------%
The proof of Theorem \ref{thm_upper} follows easily from three key lemmas.

\begin{lemma}\label{lem_optimism}
The following hold:
\begin{enumerate*}
\item The total number of updates is bounded by $\umax := \constupdate$.
\item If $M \in \mathcal M_k$ and $t$ is not in a delay phase and
$V^*(s_t) - V^{\piucrl}(s_t) > \epsilon$ then
\eq{\tV^{\pi_k}(s_t) - V^{\pi_k}(s) > \epsilon/2.
}
\end{enumerate*}
\end{lemma}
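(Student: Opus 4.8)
The plan is to prove the two claims separately: the first is a bookkeeping argument about how episodes terminate, and the second is the heart of the optimism principle.

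For the first claim I would argue by counting threshold crossings. An episode terminates, triggering an update, precisely when incorporating the current episode's counts $v(s,a)$ into $n(s,a)$ would alter $\kappa(\iota, n(s,a))$ for some $(s,a)$ and $\iota \in \I$. Since each $n(s,a)$ is non-decreasing and $\kappa(\iota, \cdot)$ is monotone, every such alteration corresponds to $n(s,a)$ crossing one of the thresholds $z\,w_\iota m$ with $z \in \K$, and each threshold is crossed at most once. Hence the number of updates is bounded by the total number of such thresholds over all $(s,a,\iota)$. The geometric spacing of both $z_i = 2^i - 2$ and $w_\iota = 2^\iota \wmin$, together with the cap $\kappa \leq |S|$, collapses this count to $\umax = \constupdate$. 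This step is purely combinatorial and carries no probabilistic content.

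For the second claim I would chain optimism with a horizon-truncation estimate. Because $\pi_k = \tilde\pi^*$ is optimal in $\tM$, we have $\tV^{\pi_k} = \tV^* = V^*_{\tM}$. Since $\tM$ is chosen to be optimistic over $\mathcal M_k$, the hypothesis $M \in \mathcal M_k$ yields $\tV^{\pi_k}(s_t) = V^*_{\tM}(s_t) \geq V^*_M(s_t) = V^*(s_t)$, so that $\tV^{\pi_k}(s_t) - V^{\pi_k}(s_t) \geq V^*(s_t) - V^{\pi_k}(s_t)$. Next I would replace the non-stationary value $V^{\piucrl}(s_t)$ from the hypothesis by the stationary value $V^{\pi_k}(s_t)$. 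Since $t$ is not in a delay phase, UCRL follows the stationary policy $\pi_k$ for at least the next $H$ time-steps (the remainder of the episode followed by the length-$H$ delay phase), so $V^{\piucrl}(s_t)$ and $V^{\pi_k}(s_t)$ share their first $H$ discounted rewards and differ only in a discounted tail; thus $|V^{\piucrl}(s_t) - V^{\pi_k}(s_t)| \leq \gamma^H/(1-\gamma)$. The choice $H = \constH$ gives $\gamma^H/(1-\gamma) \leq \epsilon/(8|S|) \leq \epsilon/8 < \epsilon/2$. Combining, $\tV^{\pi_k}(s_t) - V^{\pi_k}(s_t) \geq V^*(s_t) - V^{\pi_k}(s_t) \geq (V^*(s_t) - V^{\piucrl}(s_t)) - |V^{\piucrl}(s_t) - V^{\pi_k}(s_t)| > \epsilon - \epsilon/2 = \epsilon/2$.

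The main obstacle is the stationary approximation $|V^{\piucrl}(s_t) - V^{\pi_k}(s_t)| \leq \epsilon/2$: this is where the non-stationarity of the global UCRL policy must be controlled, and it is exactly why the $\epsilon/2$ slack (rather than $\epsilon$) and the specific value of $H$ are needed. The optimism inequality itself is essentially immediate once $M \in \mathcal M_k$ is assumed, so the delicate bookkeeping lies in bounding the discounted tail and in the counting argument of the first claim.
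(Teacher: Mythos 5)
Your proposal is correct and follows essentially the same route as the paper: part 1 by counting monotone increases (threshold crossings) of the knownness indices over all $(s,a,\iota)$, and part 2 by chaining the optimism inequality $\tV^{\pi_k}(s_t) = V^*_{\tM}(s_t) \geq V^*(s_t)$ with the horizon-truncation bound $|V^{\piucrl}(s_t) - V^{\pi_k}(s_t)| \leq \gamma^H/(1-\gamma) < \epsilon/2$ justified by the choice of $H$ and the fact that $\pi$ stays equal to $\pi_k$ for the next $H$ steps. Your write-up is in fact slightly more explicit than the paper's (it spells out the tail bound that the paper compresses into ``using the definition of the horizon''), and the ambiguity in the stated constant ($\constupdate$ versus the $|\SA||\KI|$ your count actually yields) is an inconsistency present in the paper itself, not a flaw in your argument.
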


\begin{lemma}\label{lem_model}
$M \in \M_k$ for all $k$ with probability at least $1 - \delta/2$.
\end{lemma}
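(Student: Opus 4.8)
The plan is to reduce Lemma \ref{lem_model} to a fixed sample-size concentration bound for each state/action pair and then to eliminate the randomness in the visit counts by a union bound over pairs and episodes. Since membership in $\M_k$ is tested by substituting $\tp = p$ into its definition, it suffices to prove that, simultaneously over every $(s,a)$ and every episode $k$,
\eq{
\left|\hp_{s,a}^{\sap} - p_{s,a}^{\sap}\right| \;\leq\; \min\left\{\sqrt{\frac{2L_1\, p_{s,a}^{\sap}(1 - p_{s,a}^{\sap})}{n(s,a)}} + \frac{2L_1}{3 n(s,a)},\; \sqrt{\frac{L_1}{2 n(s,a)}}\right\},
}
where $n(s,a)$ is the count at the start of episode $k$ (the bound is vacuous when $n(s,a)=0$). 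Write $A_n$ and $B_n$ for the Bernstein and Hoeffding radii on the right, viewed as functions of a fixed sample size $n$.

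First I would fix a pair $(s,a)$. By Assumption \ref{ass} every transition out of $(s,a)$ lands in $\{\sap, \sam\}$, so the indicator of a transition to $\sap$ is a Bernoulli$(p_{s,a}^{\sap})$ trial, and by the Markov property the outcomes of successive visits to $(s,a)$ form an i.i.d.\ sequence that is independent of the (adaptive) rule deciding \emph{when} $(s,a)$ is visited. Consequently, for any \emph{fixed} sample size $n$, the estimate $\hp_{s,a}^{\sap}$ is the empirical mean of $n$ i.i.d.\ Bernoulli variables and the radii $A_n,B_n$ are deterministic given $n$ and $p_{s,a}^{\sap}$. The failure event $\{|\hp - p| > \min\{A_n,B_n\}\} = \{|\hp - p| > A_n\} \cup \{|\hp - p| > B_n\}$ coincides with whichever of the two has the smaller threshold, so its probability is controlled by the binding two-sided tail. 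Since $L_1 = \log(2/\delta_1)$ gives $e^{-L_1} = \delta_1/2$, standard Bernstein (using the true variance $p_{s,a}^{\sap}(1-p_{s,a}^{\sap})$) at radius $A_n$ and Hoeffding at radius $B_n$ each have two-sided probability at most $\delta_1$; hence the displayed inequality fails at a fixed count with probability at most $\delta_1$.

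It remains to remove the restriction to a fixed $n$. The estimate $\hp_{s,a}^{\sap}$ is recomputed only at the start of an episode, and each episode ends because some knownness index changes; since $\kappa(\iota,\cdot)$ is non-decreasing, $\K$-valued, and maintained at $|\I|$ levels for each of the $|\SA|$ pairs, the total number of knownness changes, and hence of episodes, is at most $|\SA||\KI|$. A union bound over the $|\SA|$ pairs and the at most $|\SA||\KI|$ episodes thus involves at most $|\SA|^2|\KI|$ failure events, each of probability at most $\delta_1 = \delta/(2|\SA|^2|\KI|)$, so the total failure probability is at most $\delta/2$; on the complementary event $M \in \M_k$ for every $k$.

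The step I expect to be the real obstacle is this last one, because the count $n(s,a)$ queried at the start of episode $k$ is itself random and determined by the whole trajectory (episode boundaries are stopping times), whereas Bernstein's and Hoeffding's inequalities concern a fixed sample size. The i.i.d.\ structure isolated in the second paragraph is exactly what makes the fix rigorous: because the outcomes of successive visits to $(s,a)$ are i.i.d.\ irrespective of the visiting times, the fixed-$n$ failure events can be union-bounded over the bounded family of counts that can ever be queried (equivalently, over the bounded number of updates), and $\delta_1 = \delta/(2|\SA|^2|\KI|)$ is engineered precisely to absorb this union. It is worth emphasising that Assumption \ref{ass} is what collapses each pair to a single Bernoulli parameter, so that a one-dimensional confidence interval suffices; without it one would need a concentration bound for the entire next-state distribution together with a correspondingly larger union.
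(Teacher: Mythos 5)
Your proof takes essentially the same route as the paper's: apply Bernstein/Hoeffding (with the true variance, correctly matching the model-class test at $\tp = p$) to each fixed $(s,a)$, then union bound over the $|\SA|$ pairs and the at most $\umax = |\SA||\KI|$ episodes, with $\delta_1 = \delta/(2|\SA|^2|\KI|)$ engineered to absorb exactly this union. The adaptive-count subtlety you flag is real, but the paper's own four-line proof glosses over it in precisely the same way, so it is not a gap relative to the paper.
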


\begin{lemma}\label{lem_explore}
The number of exploration phases is bounded by $\expmax$ with probability at least $1 - \delta/2$.
\end{lemma}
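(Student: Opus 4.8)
The plan is to show that each exploration phase forces a quantifiable amount of learning on an under-explored group of states, and that the total amount of such learning is capped by the finitely many knownness thresholds. Throughout I condition on the event $M \in \M_k$ for all $k$ (Lemma \ref{lem_model}), which costs $\delta/2$; the remaining $\delta/2$ is spent on the concentration argument below.

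\emph{Step 1 (a witness cell).} At the start $t := t_i$ of an exploration phase we have $\tV^{\pi_k}(s_t) - V^{\pi_k}(s_t) \geq \epsilon/2$. I would first invoke (or establish) the closeness result foreshadowed in the discussion of knownness: if $|K_t(\ki)| \leq \kappa$ for every $(\kappa,\iota) \in \KI$, then $\tV^{\pi_k}(s_t) - V^{\pi_k}(s_t) < \epsilon/2$. This is where Assumption \ref{ass}, Bernstein's inequality, the active-set truncation at $\wmin$, and the local variances $\sigma^\pi,\ts^\pi$ enter, exactly as in the intuition of Equation~(\ref{eqn-intuition}). Its contrapositive yields, for each phase, a \emph{witness} cell $(\kappa_i,\iota_i)$ with $|K_{t_i}(\kappa_i,\iota_i)| > \kappa_i$, i.e.\ strictly more than $\kappa_i$ active states all sitting at level $\iota_i$ (so $w_{t_i}(s) > w_{\iota_i}$) and sharing the same visit-count level $\kappa_i$.

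\emph{Step 2 (each phase deposits visits).} Because no update occurs within the $H$ steps of a phase, $\pi_k$---and hence each $w_t(s) = w^{\pi_k}(s,\pi_k(s)|s_t)$---is constant over the window $[t_i, t_i + H)$, and the discounted weight approximates the expected number of genuine visits there up to a truncation error governed by $H = \constH$. Summing over the witness states, the conditional expected number of visits to $\{(s,\pi_k(s)) : s \in K_{t_i}(\kappa_i,\iota_i)\}$ during the phase is at least $(\kappa_i+1)w_{\iota_i}$ minus this small error. I would then pass from conditional expectations to actual counts by forming the martingale of per-phase deviations and applying a Freedman/Bernstein inequality, union-bounded over the $|\KI|$ cells (each tail set to $\delta/(2|\KI|)$, absorbed into $m$). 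This is what buys the $1-\delta/2$ in the statement, and it must hold uniformly over the random, stopping-time number of phases.

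\emph{Step 3 (counting against the knownness budget).} Fix a cell $(\kappa,\iota)$ and let $E_{\kappa,\iota}$ be the number of phases it witnesses. A fixed pair $(s,a)$ can be counted in $K_t(\kappa,\iota)$ only while $n_t(s,a)$ lies in the window $[\kappa\, w_\iota m,\ (\kappa+2)w_\iota m)$ forced by $\kappa = z_i$ and $z_{i+1}-z_i = 2^i = \kappa+2$; since $n$ is nondecreasing this window is traversed once, absorbing at most $(\kappa+2)w_\iota m$ visits, so the total visits ever depositable on level-$\iota$, knownness-$\kappa$ states is at most $|\SA|(\kappa+2)w_\iota m$. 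Combining with the $\geq \tfrac12(\kappa_i+1)w_{\iota_i}$ actual visits per witnessed phase from Step 2 gives $E_{\kappa,\iota} \lesssim |\SA| m \cdot \tfrac{\kappa+2}{\kappa+1} \leq 2|\SA|m$; the geometric spacing $z_i = 2^i - 2$ is exactly what keeps $\tfrac{\kappa+2}{\kappa+1}$ bounded, so summing over the $|\KI|$ cells yields $\expmax = 4N|\KI|$ with $N = 6|\SA|m$, the loose constants leaving room for the truncation and concentration slack. The main obstacle is Step 2: converting the per-phase, conditional-expectation lower bound on discounted weight into a \emph{uniform} high-probability lower bound on actual visits---simultaneously controlling the truncation error against $\wmin$ through $H$, exploiting near-stationarity of $\pi$ across each window, and running the martingale concentration over a random number of phases with a union bound over cells so the total failure probability stays below $\delta/2$. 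By comparison, Step 1 is an appeal to the separately proved closeness result and Step 3 is a deterministic pigeonhole over the finitely many thresholds.
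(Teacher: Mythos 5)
Your proposal is correct and follows essentially the same route as the paper: Step 1 is the contrapositive of Lemma \ref{lem_hard}, Step 2 is exactly the content of Lemmas \ref{lem_visit_weights} and \ref{lem_actual_visits} (the paper uses Bernstein's inequality on the per-phase deviations $\nu_i - \E\nu_i$, union-bounded over the $|\KI|$ cells with tail $\delta_1$ each), and Step 3 is the paper's ``bounding the number of useful visits'' pigeonhole over the knownness window of length $(2\kappa+2)w_\iota m$. The only discrepancies are constant-factor bookkeeping (and the window endpoints, which should be $[\kappa w_\iota m, (2\kappa+2)w_\iota m)$), all of which you correctly flag as absorbed by the slack in $N = \constN$ and $\expmax = \constexpmax$.
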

The proofs of the lemmas are delayed while we apply them to prove Theorem \ref{thm_upper}.

\begin{proofof}{ of Theorem \ref{thm_upper}}
By Lemma \ref{lem_model}, $M \in M_k$ for all $k$ with probability $1 - \delta/2$. By Lemma \ref{lem_explore} we have that the number
of exploration phases is bounded by $\expmax$ with probability $1 - \delta / 2$.
Now if $t$ is not in a delaying or exploration phase and $M \in \M_k$ then by Lemma \ref{lem_optimism}, $\pi$ is nearly-optimal.
Finally note that the number of updates is bounded by $\umax$ and so the number of time-steps in delaying phases is
at most $H\umax$. Therefore UCRL is nearly-optimal for all but
$H\umax + H\expmax$
time-steps with probability $1 - \delta$.
\end{proofof}
We now turn our attention to proving Lemmas \ref{lem_optimism}, \ref{lem_model} and \ref{lem_explore}. Of these, only Lemma \ref{lem_explore}
presents a substantial challenge.

\begin{proofof}{ of Lemma \ref{lem_optimism}}
For part 1 we note that for $\iota \in \mathcal I$ the knownness of a state/action pair at level $\iota$ satisfies $\kappa \in \K$. Since
the knownness index for each $\iota$ is non-decreasing and an update only occurs when an index is increased, the total number
of updates is bounded by $\umax := \constumax$.

The proof of part 2 is closely related to the approach taken by \citet{LS08}. Recall that $\tM$ is chosen optimistically by extended value
iteration. This generates an MDP, $\tM$, such that $V^*_{\tM}(s) \geq V^*_{\tM'}(s)$ for all $\tM' \in \mathcal M_k$. Since we have assumed
$M \in \mathcal M_k$ we have that $\tV^{\pi_k}(s) \equiv V^*_{\tM}(s) \geq V_M^*(s)$. Therefore $\tV^{\pi_k}(s_t) - V^{\pi}(s_t) > \epsilon$.
Finally note that $t$ is a non-delaying time-step and so policy $\pi$ will remain stationary and equal to $\pi_k$ for at least $H$ time-steps.
Using the definition of the horizon, $H$, we have that $|V^{\pi}(s_t) - V^{\pi_k}(s_t)| < \epsilon/2$. Therefore
$\tV^{\pi_k}(s_t) - V^{\pi_k}(s_t) > \epsilon/2$ as required.
\end{proofof}

\begin{proofof}{ of Lemma \ref{lem_model}}
In the previous lemma we showed that there are at most $\umax$ updates. Therefore we only need to
check $M \in \M_k$ for each $k$ up to $\umax$. Fix an $(s, a)$ pair and apply the best of either Bernstein or Hoeffding inequalities
to show that $|\hp_{s,a}^{\sap} - p_{s,a}^{\sap}| \leq \Call{ConfidenceInterval}{\hp_{s,a}^{\sap} - p_{s,a}^{\sap}, n(s,a))}$
with probability $1 - \delta_1$.
Setting $\delta_1 := {\delta \over 2|\SA|\umax} \equiv {\delta \over 2|\SA|^2|\KI|}$ and applying the union bound completes the proof.
\end{proofof}
We are now ready to work on Lemma \ref{lem_explore}. The proof follows from two lemmas:
\begin{enumerate}
\item If $t$ is the start of an exploration phase then there exists a $(\ki)$ such that $|K_t(\ki)| > \kappa$.
\item If $|K_t(\ki)| > \kappa$ for sufficiently many $t$ then sufficient information is gained that some state/action pair
must have an increase in knownness.
\end{enumerate}
\begin{lemma}\label{lem_hard}
Let $t$ be a non-delaying time-step and assume $M \in \M_k$.
If $|K_t(\ki)| \leq \kappa$ for all $\ki \in \K$ then
$|\tV^{\pi_k}(s_t) - V^{\pi_k}(s_t)| \leq \epsilon / 2$.
\end{lemma}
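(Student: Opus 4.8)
The plan is to expand the value difference $\tV^{\pi_k}(s_t)-V^{\pi_k}(s_t)$ as a weighted sum of per-state transition errors, reduce the sum to the active set $X_t$, and then exploit Assumption~\ref{ass} together with the hypothesis $|K_t(\ki)|\le\kappa$ to bound everything by a discounted sum of local variances. First I would subtract the two Bellman equations. Writing $\Delta(s):=\tV^{\pi_k}(s)-V^{\pi_k}(s)$ and using $\tp\,\tV-p\,V=(\tp-p)\tV+p(\tV-V)$ gives the fixed point $\Delta(s)=\gamma\sum_{s'}p_{s,\pi_k}^{s'}\Delta(s')+\gamma\sum_{s'}(\tp_{s,\pi_k}^{s'}-p_{s,\pi_k}^{s'})\tV^{\pi_k}(s')$, which I solve against the true discounted weights $w_t$ (exactly the weights used to define $X_t$ and the cells) to obtain
\eq{
\tV^{\pi_k}(s_t)-V^{\pi_k}(s_t)=\gamma\sum_s w_t(s)\sum_{s'}(\tp_{s,\pi_k}^{s'}-p_{s,\pi_k}^{s'})\,\tV^{\pi_k}(s').
}
Under Assumption~\ref{ass} each $(s,\pi_k(s))$ has only the successors $\sap,\sam$, so the inner sum collapses to $d_s\,(\tV^{\pi_k}(\sap)-\tV^{\pi_k}(\sam))$ with $d_s:=\tp_{s,\pi_k}^{\sap}-p_{s,\pi_k}^{\sap}$, and the two-point identity gives $|\tV^{\pi_k}(\sap)-\tV^{\pi_k}(\sam)|=\ts^{\pi_k}(s)/\sqrt{\tilde p(1-\tilde p)}$ where $\tilde p:=\tp_{s,\pi_k}^{\sap}$. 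Since $M\in\M_k$, both $\tp$ and $p$ lie in the confidence set, so $|d_s|$ is at most (twice) the Bernstein interval $\sqrt{2L_1\tilde p(1-\tilde p)/n_t(s)}+\tfrac{2L_1}{3n_t(s)}$. This is the step where Bernstein beats Hoeffding: the factor $\sqrt{\tilde p(1-\tilde p)}$ in the interval cancels the $1/\sqrt{\tilde p(1-\tilde p)}$ in the value gap, leaving a clean per-state bound of order $\sqrt{L_1/n_t(s)}\,\ts^{\pi_k}(s)$ plus a lower-order $L_1/(n_t(s)(1-\gamma))$ term.

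Next I would split the sum at the active set. States with $w_t(s)\le\wmin$ number at most $|S|$ and carry total weight at most $|S|\wmin=\epsilon(1-\gamma)/4$, so using only $|d_s|\le1$ and a value gap $\le 1/(1-\gamma)$ their total contribution is $O(\epsilon)$ by the choice $\wmin=\constwmin$. It therefore suffices to control $X_t$. On $X_t$ I partition by the cells $K_t(\ki)$: within a cell $w_t(s)\asymp w_\iota$ and the knownness equality forces $n_t(s)\gtrsim\kappa\, w_\iota m$. Substituting these, each cell contributes about $\sqrt{2L_1 w_\iota/(\kappa m)}\sum_{s\in K_t(\ki)}\ts^{\pi_k}(s)$; applying Cauchy--Schwarz with the hypothesis $|K_t(\ki)|\le\kappa$ replaces $\sum_{s\in K_t(\ki)}\ts^{\pi_k}(s)$ by $\sqrt\kappa\,(\sum_{s\in K_t(\ki)}\ts^{\pi_k}(s)^2)^{1/2}$, the factors of $\kappa$ cancel, and re-expressing $w_\iota\sum\ts^2\asymp\sum w_t(s)\ts^2$ followed by a second Cauchy--Schwarz over the $|\KI|$ cells bounds the whole active contribution by
\eq{
\gamma\sqrt{{2L_1|\KI|\over m}}\,\Big(\sum_{s\in X_t} w_t(s)\,\ts^{\pi_k}(s)^2\Big)^{1/2}+O(\epsilon).
}
The lower-order Bernstein-bias term is handled identically, using $|K_t(\ki)|\le\kappa$ and the size of $m$ to make it $O(\epsilon)$.

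The hard part will be bounding the discounted variance sum $\sum_s w_t(s)\,\ts^{\pi_k}(s)^2$. A law-of-total-variance argument in $\tM$ shows that the variance of the (bounded) discounted return decomposes into a sum of the $\ts^{\pi_k}(s)^2$, but weighted by the $\gamma^2$-discounted occupancy rather than by the $\gamma$-discounted weight $w_t(s)$. The naive bounds $\ts^2\le(1-\gamma)^{-2}$ and $\sum_s w_t(s)=(1-\gamma)^{-1}$ only give $(1-\gamma)^{-3}$, which would forfeit the whole improvement. To recover $(1-\gamma)^{-2}$ up to logarithmic factors I would decompose the horizon into the dyadic time-levels indexed by the set $\mathcal D$, bound the $\gamma$-versus-$\gamma^2$ discrepancy level by level, and truncate the tail using $\beta=\constbeta$, which is chosen precisely so that $(1-\gamma)^{1/\beta}$ is a constant; this is exactly what produces the factors $|\mathcal D|^2$ and $(1-\gamma)^{2/\beta}$ in $m$. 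A parallel technical point, addressed alongside, is that the identity mixes the true weight $w_t$ with the model variance $\ts^{\pi_k}$, whereas the total-variance bound is cleanest for the model's own $\gamma^2$-weights; since $t$ is non-delaying and $M\in\M_k$, the policy $\pi_k$ is near-stationary and $\tp$ is close to $p$ on visited states, so $w_t$ and the model weights are comparable on $X_t$ and the two quantities can be reconciled routinely but carefully. Once the variance sum is bounded by $\tilde O((1-\gamma)^{-2-2/\beta})$ with the $|\mathcal D|$ factors absorbed, substituting $m=\constm$ makes the leading term at most $\epsilon/4$, and combining with the $O(\epsilon)$ active- and inactive-set remainders yields $|\tV^{\pi_k}(s_t)-V^{\pi_k}(s_t)|\le\epsilon/2$, as required.
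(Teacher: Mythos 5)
Your first two paragraphs track the paper's own proof almost exactly: the expansion via Lemma \ref{lem_transform}, the cancellation of $\sqrt{\tp(1-\tp)}$ between the two-point identity $|\tV(\sap)-\tV(\sam)|=\ts(s)/\sqrt{\tp(1-\tp)}$ and the Bernstein radius (this is Lemmas \ref{lem_estimate_bootstrap} and \ref{lem_estimate_p_hp_hard}), the $|S|\wmin=\epsilon(1-\gamma)/4$ truncation to the active set, and the cell-wise Cauchy--Schwarz using $|K_t(\ki)|\le\kappa$ are all present in Appendix \ref{app_hard}. The genuine gap is in the step you label a ``parallel technical point'' and dismiss as routine: replacing the true weights $w_t$ by the model weights $\tw_t$ in the variance sum. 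The substitution error $\sum_s (w_t(s)-\tw_t(s))\,\ts^{\pi_k}(s)^2$ is itself a difference of value functions for the reward function $r'(s):=\ts^{\pi_k}(s)^2$, whose values reach $(1-\gamma)^{-2}$; it is therefore a quantity of exactly the same kind as the one Lemma \ref{lem_hard} is bounding, only with a reward $(1-\gamma)^{-2}$ times larger. Arguing ``$\tp$ is close to $p$ on visited states, hence $w_t\approx\tw_t$'' just reproduces the original problem one level up, since any such comparison goes back through Lemma \ref{lem_transform} and picks up the same $1/(1-\gamma)$ amplification. Quantitatively, if you terminate after one level by bounding this error term by its trivial magnitude $(1-\gamma)^{-3}$, the resulting contribution is of order $\epsilon(1-\gamma)^{-1/2}$ (up to logarithmic factors) given the stated $m$, so the proof does not close. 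This is precisely why the paper introduces the higher-moment machinery of Definition \ref{def_recurrence}: the recurrence $r_{2d+2}:=\sigma^\pi_d{}^2$ makes the level-$d$ substitution error equal to $\Delta_{2d+2}$, yielding the recursion $\Delta_d \lesssim \epsilon/(4(1-\gamma)^d)+B_d+C_d+\sqrt{8L_1|\KI|/(m(1-\gamma)^{2d+2})}+\sqrt{(8L_1|\KI|/m)\,\Delta_{2d+2}}$, which is unrolled through the levels $d\in\mathcal D=\mathcal Z(\beta)$ (the progression $0,2,6,14,\dots$, closed under $d\mapsto 2d+2$) and terminated at $d=\beta$ with the naive bound, whose damage is tamed to a constant $(1-\gamma)^{-1/(\beta+2)}$ by the exponent $2/(\beta+2)$ it acquires in the expansion.

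Two misdiagnoses follow from missing this. First, $\mathcal D$ and $\beta$ are not a dyadic decomposition of the time horizon; they index moment levels of the value function, and the factors $|\mathcal D|^2$ and $(1-\gamma)^{-2/\beta}$ in $m=\constm$ are the cost of unrolling and terminating that recursion, not of any time-level argument. Second, the $\gamma$-versus-$\gamma^2$ discounting discrepancy that your dyadic scheme is meant to repair is a non-issue in the paper: Lemma \ref{lem_bound} (cited to Sobel) is stated directly for the $\gamma$-discounted weights $\tw_t$ and costs only the benign factor $1/\gamma^2$. So the single place where your plan says ``routinely but carefully'' is exactly where the paper deploys its only genuinely new device, and without that device (or a substitute of comparable strength) the argument as proposed would fail to reach $\epsilon/2$.
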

The full proof is long, technical and has been relegated to Appendix \ref{app_tech}. We provide a sketch, but first we
need some useful results about MDPs and the differences in value functions.

\begin{lemma}\label{lem_transform}
Let $M$ and $\tM$ be two Markov decision processes differing only in transition probabilities and $\pi$ be a stationary policy
then
\eq{
V^\pi(s_t) - \tV^\pi(s_t) \s{=} \gamma \sum_{s} w_t(s) (p_{s,\pi} - \tp_{s,\pi}) \cdot \tV^\pi.
}
\end{lemma}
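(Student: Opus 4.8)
The plan is to reduce everything to the two Bellman equations and a single telescoping step, after which the weights $w_t$ reappear as the discounted occupancy measure of $\pi$ under the \emph{true} transitions. Since $M$ and $\tM$ share the same reward $r$, I would write their Bellman equations in vector form as $V^\pi = r + \gamma P_\pi V^\pi$ and $\tV^\pi = r + \gamma \tp_\pi \tV^\pi$, where $P_\pi$ and $\tp_\pi$ are the $|S|\times|S|$ transition matrices whose rows are $p_{s,\pi}$ and $\tp_{s,\pi}$. Subtracting and inserting $\pm\,\gamma P_\pi \tV^\pi$ regroups the difference as
\[
V^\pi - \tV^\pi \;=\; \gamma P_\pi\,(V^\pi - \tV^\pi) \;+\; \gamma\,(P_\pi - \tp_\pi)\,\tV^\pi .
\]
The reason for attaching the recursive factor $P_\pi$ (the true transitions) to $V^\pi - \tV^\pi$, while leaving $\tV^\pi$ on the discrepancy term $(P_\pi - \tp_\pi)$, is precisely that the geometric operator built from $P_\pi$ is what defines $w_t$.

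Since $\gamma < 1$ and $P_\pi$ is row-stochastic, $I - \gamma P_\pi$ is invertible with convergent Neumann series $\sum_{j\geq 0}\gamma^j P_\pi^j$, so rearranging gives
\[
V^\pi - \tV^\pi \;=\; \gamma\,(I - \gamma P_\pi)^{-1}\,(P_\pi - \tp_\pi)\,\tV^\pi .
\]
The only step that uses the specific definition is the identification of the $s_t$-row of $(I - \gamma P_\pi)^{-1}$ with the weights $w_t$. Fixing the target pair $(s, \pi(s))$ and viewing $f(s') := w^{\pi_k}(s,\pi_k(s)\,|\,s')$ as a function of the starting state $s'$, the defining recursion reads $f = e_s + \gamma P_\pi f$; here the indicator collapses to $\ind{s' = s}$ because $\pi_k$ is deterministic, so $(s',\pi(s')) = (s,\pi(s))$ exactly when $s' = s$. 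Hence $f = (I - \gamma P_\pi)^{-1} e_s$, and in particular $w_t(s) = f(s_t) = [(I - \gamma P_\pi)^{-1}]_{s_t,s}$, i.e. the row vector $(w_t(s))_{s}$ equals $e_{s_t}^\top (I - \gamma P_\pi)^{-1}$.

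Evaluating the previous display at $s_t$ and substituting this row-vector identity then yields
\[
V^\pi(s_t) - \tV^\pi(s_t) \;=\; \gamma \sum_{s} w_t(s)\,(p_{s,\pi} - \tp_{s,\pi})\cdot \tV^\pi ,
\]
which is the claim. I expect the only genuine subtlety to be bookkeeping rather than depth: making sure the geometric factor is attached to $P_\pi$ and not $\tp_\pi$ (so that $w_t$, which is built from the true $p$, is the object that drops out), and checking that the deterministic policy reduces the indicator in the weight recursion to $\ind{s'=s}$ so that $f = (I-\gamma P_\pi)^{-1}e_s$ holds cleanly. Everything beyond that is routine linear algebra.
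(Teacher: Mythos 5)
Your proof is correct and takes essentially the same route as the paper's: the paper's sketch performs the identical decomposition $V^\pi - \tV^\pi = \gamma P_\pi (V^\pi - \tV^\pi) + \gamma (P_\pi - \tilde P_\pi)\tV^\pi$ (with the true transitions attached to the difference term) and then ``continues to expand'' the recursive term, which is precisely the Neumann series for $(I - \gamma P_\pi)^{-1}$ that you invoke. Your matrix formulation, together with the explicit check that the weight recursion collapses to $f = (I - \gamma P_\pi)^{-1} e_s$, simply makes rigorous what the paper leaves as a sketch.
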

\begin{proofsketchof}{}
Drop the $\pi$ superscript and write
$V(s_t) = r(s_t) + \gamma \sum_{s_{t+1}} p_{s_t,\pi}^{s_{t+1}} V(s_{t+1})$. Then
$V(s_t) - \tV(s_t) \s{=} \gamma [p_{s_t,\pi} - \tp_{s_t,\pi}] \cdot \tV + \gamma \sum_{s_{t+1}}
p_{s_t,\pi}^{s_{t+1}} [V(s_{t+1}) - \tV(s_{t+1})]$.
The result is obtained by continuing to expand the second term of the right hand side.
\end{proofsketchof}

\begin{lemma}\label{lem_estimate_p_hp}
If $M \in \M_k$ at time-step $t$ and $\tV := \tV^{\pi_k}$ then
\eq{
|(p_{s,\pi_k} - \tp_{s,\pi_k}) \cdot \tV| \leq \sqrt{8L_1 \ts^{\pi_k}(s)^2 \over n_t(s)} + {2 \over 1 - \gamma}\left({L_1 \over n_t(s)}\right)^{3/4} + {4L_1 \over 3n_t(s)(1 - \gamma)},
}
where $\ts^{\pi_k}(s)^2 := \tp_{s,a}\cdot \tV^2 - \left[\tp_{s,a} \cdot \tV\right]^2$.
\end{lemma}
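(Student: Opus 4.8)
The plan is to exploit Assumption~\ref{ass} to collapse the $|S|$-dimensional inner product to a single scalar. Writing $a := \pi_k(s)$, $p := p_{s,a}^{\sap}$ and $\tp := \tp_{s,a}^{\sap}$, the identities $p_{s,a}^{\sam} = 1 - p$ and $\tp_{s,a}^{\sam} = 1 - \tp$ give
\eq{
(p_{s,\pi_k} - \tp_{s,\pi_k}) \cdot \tV = (p - \tp)\left[\tV(\sap) - \tV(\sam)\right],
}
so the quantity to bound factors as $|p - \tp| \cdot |\tV(\sap) - \tV(\sam)|$. Next I would invoke the two-point variance identity: since $\tp_{s,a}$ is supported on $\{\sap,\sam\}$, $\ts^{\pi_k}(s)^2$ is just the variance of a random variable equal to $\tV(\sap)$ with probability $\tp$ and $\tV(\sam)$ with probability $1-\tp$, whence $\ts^{\pi_k}(s)^2 = \tp(1-\tp)\left[\tV(\sap) - \tV(\sam)\right]^2$. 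This yields both $|\tV(\sap) - \tV(\sam)| = \ts^{\pi_k}(s)/\sqrt{\tp(1-\tp)}$ and, since rewards lie in $[0,1]$ so that $\tV \in [0,\,1/(1-\gamma)]^{|S|}$, the crude bound $|\tV(\sap) - \tV(\sam)| \leq 1/(1-\gamma)$.

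To control $|p - \tp|$ I would use that both $M$ and $\tM$ lie in $\M_k$, so by the triangle inequality through the empirical $\hp$ and the \textsc{ConfidenceInterval} function, applying its Bernstein branch to both endpoints,
\eq{
|p - \tp| \leq \sqrt{2L_1 p(1-p) \over n} + \sqrt{2L_1 \tp(1-\tp) \over n} + {4L_1 \over 3n}, \qquad n := n_t(s).
}
Multiplying this by $\ts^{\pi_k}(s)/\sqrt{\tp(1-\tp)}$ produces three pieces. The $\tp$-variance piece collapses cleanly to $\sqrt{2L_1 \ts^{\pi_k}(s)^2/n}$, and the additive Bernstein piece, using $\ts^{\pi_k}(s)/\sqrt{\tp(1-\tp)} \leq 1/(1-\gamma)$, gives exactly the final term $4L_1/(3n(1-\gamma))$.

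The delicate piece, and the main obstacle, is the remaining $p$-variance term: $\ts^{\pi_k}(s)$ is defined through $\tp$, whereas the Bernstein interval for $p$ carries $p(1-p)$, and reconciling the two is precisely what manufactures the unusual $3/4$ exponent. I would bridge the two variances with the elementary identity $p(1-p) - \tp(1-\tp) = (p-\tp)(1 - p - \tp)$, which gives $p(1-p) \leq \tp(1-\tp) + |p-\tp|$ and hence $\sqrt{p(1-p)} \leq \sqrt{\tp(1-\tp)} + \sqrt{|p-\tp|}$. The $\sqrt{\tp(1-\tp)}$ summand combines with the $\tp$-variance piece above to give $2\sqrt{2L_1\ts^{\pi_k}(s)^2/n} = \sqrt{8L_1\ts^{\pi_k}(s)^2/n}$, the stated leading term. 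For the $\sqrt{|p-\tp|}$ summand I would rewrite $\sqrt{|p-\tp|}\,\ts^{\pi_k}(s)/\sqrt{\tp(1-\tp)} = \sqrt{|p-\tp|}\,|\tV(\sap)-\tV(\sam)| \leq \sqrt{|p-\tp|}/(1-\gamma)$, and then bound $|p-\tp|$ via the \emph{Hoeffding} branch, $|p-\tp| \leq 2\sqrt{L_1/(2n)} = \sqrt{2L_1/n}$. This yields $(2L_1/n)^{3/4}/(1-\gamma)$, and since $2^{3/4} \leq 2$ it is at most $\frac{2}{1-\gamma}(L_1/n)^{3/4}$. Summing the three contributions gives the claim.
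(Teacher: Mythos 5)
Your proposal is correct and takes essentially the same route as the paper, which proves a scalar bound on $|p-\tp|$ (Lemma \ref{lem_estimate_bootstrap}: triangle inequality through $\hp$ using the Bernstein branches, plus the Hoeffding branch to convert $p(1-p)$ into $\tp(1-\tp)$ at the cost of the $(L_1/n)^{3/4}$ term) and then multiplies by $\tV(\sap)-\tV(\sam)$ using Assumption \ref{ass} and the two-point variance identity $\tp(1-\tp)(\tV(\sap)-\tV(\sam))^2 = \ts^{\pi_k}(s)^2$ (Lemma \ref{lem_estimate_p_hp_hard} with $d=0$). The only cosmetic differences are the order of operations (you multiply by the value gap before reconciling the two variances) and your use of $p(1-p)\leq \tp(1-\tp)+|p-\tp|$ in place of the paper's sign-based without-loss-of-generality argument; the constants come out identically.
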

The idea is to note that $M, \tM$ are in $\M_k$ and apply the definition of the confidence intervals. The full
proof is subsumed in the proof of the more general Lemma \ref{lem_estimate_p_hp_hard} in Appendix \ref{app_hard}.
The following lemma bounds the expected total discounted local variance.
\begin{lemma}\label{lem_bound}
For any stationary $\pi$ and $\tM$,
$\sum_{s \in S} \tilde w_t(s) \tilde \sigma^\pi(s)^2 \leq {1 \over \gamma^2(1 - \gamma)^2}$.
\end{lemma}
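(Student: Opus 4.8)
The plan is to prove the bound through the law of total variance for the Markov reward process induced by $\pi$ in $\tM$, combined with a telescoping identity that reconciles the two discount rates that appear. I work entirely inside $\tM$, write $\tV:=\tV^\pi$, abbreviate $(P\nu)(s):=\sum_{s'}\tp_{s,\pi}^{s'}\nu(s')$, and use the occupancy reading of the weights noted in the text, namely $\tw_t(s)=\sum_{i\ge 0}\gamma^i\,\Pr[S_i=s\mid S_0=s_t]$ under the dynamics of $\tM$. Two elementary facts will be used: $\sum_s\tw_t(s)=1/(1-\gamma)$, and, since $r$ takes values in $[0,1]$, the discounted return $G:=\sum_{i\ge0}\gamma^i r(S_i)$ lies in $[0,1/(1-\gamma)]$.

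First I would introduce the return-variance function $\nu(s):=\Var(G\mid S_0=s)$. Writing $G=r(s)+\gamma G'$ with $G'$ the return from the next state, conditioning on $S_1$ and applying the law of total variance gives the Bellman-type identity $\nu(s)=\gamma^2\big(\ts^\pi(s)^2+(P\nu)(s)\big)$: the term $\ts^\pi(s)^2$ is exactly the variance of $\tV(S_1)$ under $\tp_{s,\pi}$ (the ``explained'' part), while $(P\nu)(s)=\E[\nu(S_1)\mid S_0=s]$ is the expected residual variance. Rearranging yields the pointwise expression $\ts^\pi(s)^2=\nu(s)/\gamma^2-(P\nu)(s)$, which I substitute into the target sum.

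The crux is the next step. Multiplying by $\tw_t(s)$ and summing, the first piece is $\gamma^{-2}\sum_s\tw_t(s)\nu(s)$. For the second piece I telescope using the occupancy interpretation: $\sum_s\tw_t(s)(P\nu)(s)=\sum_{i\ge0}\gamma^i\,\E[\nu(S_{i+1})]$, and re-indexing $j=i+1$ shows this equals $\gamma^{-1}\big(\sum_s\tw_t(s)\nu(s)-\nu(s_t)\big)$. Combining gives the exact identity $\sum_s\tw_t(s)\,\ts^\pi(s)^2=\frac{1-\gamma}{\gamma^2}\sum_s\tw_t(s)\nu(s)+\frac{1}{\gamma}\nu(s_t)$, in which both terms are nonnegative. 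I expect this to be the only delicate point: one must retain the boundary term $\nu(s_t)$ produced by the re-indexing and check that the surviving coefficient $\gamma^{-2}-\gamma^{-1}=(1-\gamma)/\gamma^2$ is positive, so that crude pointwise bounds on $\nu$ may be applied afterwards without reversing an inequality. Intuitively, this manipulation is precisely what converts the discount-$\gamma^2$ weighting produced by the variance recursion into the discount-$\gamma$ weighting carried by $\tw_t$, at the cost of the factor $(1-\gamma)/\gamma^2$.

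Finally I would bound the factors crudely. Since $G\in[0,1/(1-\gamma)]$, we have $\nu(s)\le \tfrac14(1-\gamma)^{-2}$ for all $s$, so $\sum_s\tw_t(s)\nu(s)\le \tfrac14(1-\gamma)^{-2}\sum_s\tw_t(s)=\tfrac14(1-\gamma)^{-3}$ and likewise $\nu(s_t)\le\tfrac14(1-\gamma)^{-2}$. Substituting into the identity gives $\sum_s\tw_t(s)\ts^\pi(s)^2\le \frac{1+\gamma}{4\gamma^2(1-\gamma)^2}\le \frac{1}{\gamma^2(1-\gamma)^2}$, as required. The law-of-total-variance step and these final estimates are routine; essentially all the content sits in the telescoping identity of the third paragraph.
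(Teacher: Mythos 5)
Your proof is correct, but it is worth noting that the paper does not actually prove this lemma at all: it simply writes ``See the paper of \citet{Sob82} for a proof.'' What Sobel's paper supplies is precisely your law-of-total-variance recursion $\nu(s)=\gamma^2\bigl(\ts^\pi(s)^2+(P\nu)(s)\bigr)$ for the variance of the discounted return; the remaining content of the lemma --- converting that pointwise identity into a bound on the $\tw_t$-weighted sum --- is exactly your telescoping step, which the paper leaves implicit. Your derivation of the identity $\sum_s\tw_t(s)\,\ts^\pi(s)^2=\frac{1-\gamma}{\gamma^2}\sum_s\tw_t(s)\nu(s)+\frac{1}{\gamma}\nu(s_t)$ is sound (it is the componentwise form of applying $(I-\gamma P)^{-1}$ to $\ts^2=\frac{1-\gamma}{\gamma^2}\nu+\frac{1}{\gamma}(I-\gamma P)\nu$, and the re-indexing is justified since all series converge absolutely on a finite state space), and you correctly keep the boundary term $\nu(s_t)$ with its positive coefficient. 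One small but essential observation: the Popoviciu factor of $\tfrac14$ in $\nu(s)\le\tfrac14(1-\gamma)^{-2}$ is genuinely needed in your final step --- with the cruder bound $\nu(s)\le(1-\gamma)^{-2}$ the right-hand side becomes $\frac{1+\gamma}{\gamma^2(1-\gamma)^2}$, which exceeds the claimed constant --- so your use of it is not merely cosmetic. In short, you have produced a self-contained proof of a statement the paper outsources to a citation, which is a strictly more complete treatment.
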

See the paper of \citet{Sob82} for a proof.

\begin{proofsketchof}{ of Lemma \ref{lem_hard}}
For ease of notation we drop references to $\pi_k$. We approximate
$w(s) \approx \tilde w(s)$ and $|(p_{s,\pi_k} - \tp_{s,\pi_k})\cdot \tV| \lesssim \sqrt{L_1\tilde \sigma(s)^2 \over n(s)}$.
Using Lemma \ref{lem_transform}
\eqn{
\label{eq-s1} |\tV(s_t) - V(s_t)|
&\s{\equiv} \left|\gamma \sum_{s \in S} w_t(s) (p_{s,\pi_k} - \tp_{s,\pi_k}) \cdot \tV\right|
\s{\lesssim} \left| \sum_{s \in X} w_t(s) (p_{s,\pi_k} - \tp_{s,\pi_k}) \cdot \tV \right| \\
\label{eq-s2} &\s{\lesssim} \sum_{s \in X} w_t(s) \sqrt{L_1 \ts(s)^2 \over n(s)}
\s{\lesssim} \sum_{\ki \in \KI} \sum_{s \in K(\ki)} \sqrt{L_1 \tilde w_t(s) \ts(s)^2 \over \kappa m} \\
\label{eq-s3} &\s{\leq} \sum_{\ki \in \KI} \sqrt{ {L_1|K(\ki)| \over \kappa m} \sum_{s \in K(\ki)} \tilde w_t(s) \ts(s)^2}
\s{\leq} \sqrt{L_1|\KI| \over m\gamma^2(1 - \gamma)^2},
}
where in \eqr{eq-s1} we used Lemma \ref{lem_transform} and the fact that states not in $X$ are visited very infrequently. In \eqr{eq-s2}
we used the approximations for $(p - \tp) \cdot \tV$, the definition of $K(\ki)$ and the approximation $w \approx \tilde w$.
In \eqr{eq-s3} we used the Cauchy-Schwartz inequality,\footnote{$\left|\left<\mathds{1},v \right>\right| \leq \twonorm{\mathds{1}} \twonorm{v}$.}
the fact that $\kappa \geq |K(\ki)|$ and Lemma \ref{lem_bound}.
Substituting
$m \s{:=} \constm$
completes the proof. The extra terms in $m$ are needed to cover the errors in the
approximations made here.
\end{proofsketchof}
The full proof requires formalising the approximations made at the start of the sketch above. The second approximation is comparatively easy while
the showing that $w(s) \approx \tilde w(s)$ requires substantial work.

The following lemmas are used to show that $|K_t(\ki)|$ cannot be larger than $\kappa$ for too many time-steps with high probability.
Combined with Lemma \ref{lem_hard} above this will be sufficient to bound the number of exploration phases.
Let $t$ be the start of an exploration phase and define $\nu_t(s)$ to be the number of visits
to state $s$ within the next $H$ time-steps. Formally, $\nu_t(s) := \sum_{i=t}^{t+H-1} \ind{s_t = s}$.
\begin{lemma}\label{lem_visit_weights}
Let $t$ be the start of an exploration phase and $w_t(s) \geq \wmin$ then
$\E \nu_t(s) \geq w_t(s) / 2$.
\end{lemma}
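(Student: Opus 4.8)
The plan is to identify both quantities with occupancy measures of the Markov chain induced by $\pi_k$ and the true transitions $p$, and then control the error coming from discounting and from truncating at horizon $H$. Conditioning on the history up to the (stopping) time $t$, the state $s_t$, the episode index $k$ and the policy $\pi_k$ are all determined. Since $t$ is not in a delay phase, the structure of the algorithm guarantees that the policy stays equal to $\pi_k$ for at least the next $H$ time-steps: whatever remains of the \textsc{while}-loop is followed by the $H$-step delay phase, and the update (hence any change of policy) happens only afterwards. So over the whole window $[t,t+H-1]$ the trajectory evolves according to $\pi_k$ and $p$. Writing $p_j := \P{s_{t+j} = s}$ for this chain, unrolling the recursion defining $w^{\pi_k}(s,\pi_k(s)\,|\,\cdot)$ (exactly as in the expansion used for Lemma \ref{lem_transform}) gives $w_t(s) = \sum_{j=0}^{\infty} \gamma^j p_j$, whereas $\E \nu_t(s) = \sum_{j=0}^{H-1} p_j$.

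Next I would compare the two sums termwise. Because $\gamma^j \le 1$ and $p_j \ge 0$ for every $j$,
\[
\E \nu_t(s) = \sum_{j=0}^{H-1} p_j \;\ge\; \sum_{j=0}^{H-1} \gamma^j p_j \;=\; w_t(s) - \sum_{j=H}^{\infty} \gamma^j p_j .
\]
Bounding the tail probabilities trivially by $p_j \le 1$ yields $\sum_{j=H}^{\infty} \gamma^j p_j \le \gamma^H/(1-\gamma)$, and therefore $\E \nu_t(s) \ge w_t(s) - \gamma^H/(1-\gamma)$. Thus the whole claim reduces to showing that the discarded discounted tail is at most half of $w_t(s)$.

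For that last step I would use $\gamma^H \le e^{-(1-\gamma)H}$ together with the definition $H = \constH$; the logarithmic choice of horizon is made precisely so that $\gamma^H/(1-\gamma)$ is driven below $\wmin/2$, and since $w_t(s) \ge \wmin$ by hypothesis this gives $\gamma^H/(1-\gamma) \le w_t(s)/2$ and hence $\E \nu_t(s) \ge w_t(s)/2$. The only genuinely delicate point is the first step, namely justifying that $\E \nu_t(s)$ really is the truncated $\pi_k$-occupancy: this rests on the policy being frozen to $\pi_k$ throughout $[t,t+H-1]$ and on the unrolling that turns the recursive definition of $w_t$ into a discounted sum of visit probabilities. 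Once that identification is secured the remaining estimates are elementary, the horizon length $H$ being exactly what renders the truncation error negligible against $\wmin$.
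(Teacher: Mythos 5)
Your overall route is the same as the paper's (the paper's own proof is only a two-line sketch: truncate $w_t(s)$ at the horizon $H$ and compare with $\E \nu_t(s)$), and the core identifications are sound: the policy-freezing argument on $[t,t+H-1]$, the unrolling $w_t(s)=\sum_{j\ge 0}\gamma^j p_j$ with $p_j := \P{s_{t+j}=s}$, and $\E\nu_t(s)=\sum_{j=0}^{H-1}p_j$. The gap is in the one step you declared routine. With the paper's constant $H=\constH$ you get $\gamma^H \le e^{-(1-\gamma)H} = \epsilon(1-\gamma)/(8|S|) = \wmin/2$, which looks promising, but the tail you must control is an infinite geometric sum, and bounding $p_j\le 1$ termwise gives
\[
\sum_{j=H}^{\infty}\gamma^j p_j \;\le\; \frac{\gamma^H}{1-\gamma} \;\le\; \frac{\epsilon}{8|S|} \;=\; \frac{\wmin}{2(1-\gamma)},
\]
a factor $1/(1-\gamma)$ \emph{larger} than the $\wmin/2$ you claim. (This is not slack in the estimate $\gamma^H\le e^{-(1-\gamma)H}$, which is essentially tight for $\gamma$ near $1$.) Consequently your chain of inequalities establishes $\E\nu_t(s)\ge w_t(s)/2$ only for states with $w_t(s)\ge \epsilon/(4|S|)=\wmin/(1-\gamma)$, not for all $w_t(s)\ge\wmin$ as the lemma requires.

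Nor can this be repaired by a sharper tail estimate inside your framework: your own reduction shows the claim is equivalent to $\sum_{j\ge H}\gamma^j p_j \le w_t(s)/2$, and the bound $\gamma^H/(1-\gamma)$ is attained by sequences in which $p_j\approx 0$ for $j<H$ and $p_j\approx 1$ for $j\ge H$ (all visits to $s$ deferred just past the horizon into a near-absorbing state), in which case the truncated part is essentially zero while $w_t(s)\approx \gamma^H/(1-\gamma)$ can still exceed $\wmin$ once $\gamma\ge 1/2$. What closes the argument is a larger horizon: with $H' = \frac{1}{1-\gamma}\log\frac{8|S|}{\epsilon(1-\gamma)^2}$ one gets $\gamma^{H'}/(1-\gamma)\le \epsilon(1-\gamma)/(8|S|)=\wmin/2$, exactly as you want; note that $\iotamax$ in the paper already carries this $(1-\gamma)^2$ inside its logarithm while $H$ does not, and the enlargement would change Theorem \ref{thm_upper} only by logarithmic factors. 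So your proof needs (and in fact exposes the need for) a bigger $H$; with the constants as stated, its final step fails — a defect the paper's own sketch glosses over as well.
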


\begin{proofsketchof}{}
Use the definition of the horizon to show that $w_t(s)$ is not much larger than a bounded-horizon version.
Compare $\E\nu_t(s, \pi_t(s))$ and the definition of $w_t(s)$.
\end{proofsketchof}{}
\begin{lemma}\label{lem_actual_visits}
Let $N$ be as in Appendix \ref{app_const}.
If $|K_{t_i}(\ki)| > \kappa$ for $4N$ exploration phases $t_1, t_2, \cdots, t_{4N}$ then
$\sum_{i=1}^{4N} \sum_{s \in K_{t_i}(\ki)} \nu_{t_i}(s,\pi(s)) \s{\geq} N \kappa w_{\iota}$
with probability at least $1 - \delta_1$.
\end{lemma}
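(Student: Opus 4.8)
The plan is to reduce the claim to a lower-tail concentration bound for a sum of conditionally-bounded random variables whose conditional means are controlled by Lemma~\ref{lem_visit_weights}. Let $\mathcal F_t$ denote the history generated by UCRL up to time-step $t$, and write $t_1 < t_2 < \cdots < t_{4N}$ for the given exploration-phase starts. Because the windows $[t_i, t_i + H)$ are disjoint (by definition $t_{i+1} \geq t_i + H$) and each phase start $t_i$ together with the partition cell $K_{t_i}(\ki)$ is $\mathcal F_{t_i}$-measurable, I would work with the per-phase totals
\eq{
Y_i \s{:=} \sum_{s \in K_{t_i}(\ki)} \nu_{t_i}(s, \pi(s)).
}
Each $s \in K_{t_i}(\ki)$ is active, so $w_{t_i}(s) \geq \wmin$ and Lemma~\ref{lem_visit_weights} gives $\E[\nu_{t_i}(s,\pi(s)) \mid \mathcal F_{t_i}] \geq w_{t_i}(s)/2$. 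Moreover $s \in K_{t_i}(\ki)$ forces $\iota_{t_i}(s) = \iota$, hence $w_{t_i}(s) > w_\iota$, while $|K_{t_i}(\ki)| > \kappa$ by hypothesis. Summing over the cell therefore yields the conditional-mean bound $\E[Y_i \mid \mathcal F_{t_i}] > \kappa w_\iota / 2$.

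Summing these conditional means over all $4N$ phases gives $\sum_{i=1}^{4N} \E[Y_i \mid \mathcal F_{t_i}] > 2N\kappa w_\iota$, which is twice the quantity $N\kappa w_\iota$ we must reach; it then suffices to show that $\sum_i Y_i$ does not fall below half of this lower bound. Setting $Z_i := Y_i - \E[Y_i \mid \mathcal F_{t_i}]$ produces a martingale-difference sequence adapted to $(\mathcal F_{t_i})_i$, and since every window contains exactly $H$ visits we have $0 \leq Y_i \leq H$, so the increments are bounded. Applying a martingale tail inequality — Azuma--Hoeffding, or better a Bernstein/Freedman bound exploiting $\Var[Z_i \mid \mathcal F_{t_i}] \leq H\,\E[Y_i \mid \mathcal F_{t_i}]$ in keeping with the variance-aware theme of the paper — one obtains
\eq{
\P{\sum_{i=1}^{4N} Z_i \s{<} -N\kappa w_\iota} \leq \delta_1,
}
provided $N$ is taken as large as the value fixed in Appendix~\ref{app_const}. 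On the complementary event $\sum_i Y_i > 2N\kappa w_\iota - N\kappa w_\iota = N\kappa w_\iota$, which is the assertion.

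The main obstacle is that the exploration phases are selected adaptively by the algorithm, so the $Y_i$ are neither independent nor identically distributed and the disjointness of the windows alone is not enough. The remedy is exactly the conditioning on $\mathcal F_{t_i}$ above: it renders each $K_{t_i}(\ki)$ and each $w_{t_i}(s)$ measurable while leaving the within-window randomness free, so that Lemma~\ref{lem_visit_weights} applies \emph{conditionally} at every phase start and $(Z_i)$ is a genuine martingale difference. The remaining care is in choosing the concentration inequality so that the number of phases $N$ matches the appendix definition; using Bernstein rather than Hoeffding here is what keeps $N$ small enough that the resulting exploration-phase bound $\expmax = \constexpmax$ carries the improved dependence on $1/(1-\gamma)$.
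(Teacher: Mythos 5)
Your proposal is correct and follows essentially the same route as the paper's proof: the same per-phase visit totals, the same conditional-mean bound $\E[Y_i \mid \mathcal F_{t_i}] > \kappa w_\iota/2$ obtained from Lemma \ref{lem_visit_weights} together with $|K_{t_i}(\ki)| > \kappa$, the same reduction to showing the sum exceeds half its mean, and the same variance-aware Bernstein-type tail bound (with $\Var \leq H \cdot \E$) verified against the value of $N$ in Appendix \ref{app_const}. The only difference is that the paper applies its Bernstein inequality for independent variables directly to the centered per-phase sums, whereas you recast them as a martingale-difference sequence and invoke a Freedman-type bound to handle the adaptive selection of phases --- a dependence issue the paper silently glosses over, so your treatment is, if anything, the more rigorous of the two.
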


\begin{proof}
As in the previous proof we drop $\pi$ superscripts and denote $K_i := K_{t_i}(\ki)$. Define
\eq{
\nu_{i} &\s{:=} \sum_{s,a \in K_i} \nu_{t_i}(s) & \E \nu_i &\s{=} \sum_{s \in K_i} \E \nu_{t_i}(s).
}
Now $|K_i| > \kappa$ and so by
Lemma \ref{lem_visit_weights} we have
$\E \nu_i \s{\geq} \kappa w_{\iota} / 2$.
We now prepare to use Bernstein's inequality.
Let $X_i = \nu_i - \E \nu_i$, $\mu := {1 \over 4N} \sum_{i=1}^{4N} \E \nu_i$ and
$\sigma^2 := {1 \over 4N} \sum_{i=1}^{4N} \Var X_i$ then
\eq{
\P{\sum_{i=1}^{4N} \nu_i \s{\leq} Nw_\iota \kappa} &\s{\leq} \P{\sum_{i=1}^{4N} \nu_i \s{\leq} \sum_{i=1}^{4N} \E \nu_i / 2} \\
&\s{=} \P{\sum_{i=1}^{4N} [\nu_i - \E \nu_i] \s{\leq} - \sum_{i=1}^{4N} \E \nu_i / 2}
\s{\leq} 2\exp\left(-{4N\mu^2 \over 8\sigma^2 + {16\mu \over 3(1 - \gamma)}}\right).
}
Setting this equal to $\delta_1$ and solving for $4N$ gives
\eq{
4N \s{\geq} {8\sigma^2 + {16\mu \over 3(1 - \gamma)} \over \mu^2} \log{2 \over \delta_1}
\s{=} \left[{8\sigma^2 \over \mu^2} + {16\over 3\mu(1 - \gamma)}\right] \log{2 \over \delta_1}.
}
Naively bounding $\sigma^2 / \mu^2 \leq 1/((1 - \gamma)\mu)$ and noting that $\mu \geq \wmin / 2$ leads to
\eq{
4N \s{\geq} {14 |\SA| \over \epsilon (1 - \gamma)^2} \log{2 \over \delta_1}.
}
Since $4N$ satisfies this, the result is complete.
\end{proof}

\begin{proofof}{ of Lemma \ref{lem_explore}}
We proceed in two stages. First we bound the total number of {\it useful} visits before $|K(\ki)| \leq \kappa$. We then show this number of
visits occurs after $\aO(m)$ exploration phases with high probability.

%-------------------------------%
\subsubsect{Bounding the number of useful visits}
%-------------------------------%
A visit to state/action pair $(s, a)$ in time-step $t$ is {\it $(\ki)$-useful} if $\kappa(\iota, n_t(s, a)) = \kappa$.
Fixing a $(\ki)$ we bound the number of $(\ki)$-useful visits to state/action pair $(s, a)$.
Suppose $t_1 < t_2$ and $\kappa(\iota,n_{t_1}(s,a)) = \kappa$ and $n_{t_2}(s, a) - n_{t_1}(s,a) \geq m w_\iota (2 \kappa + 2)$ then
$\kappa(\iota,n_{t_3}(s, a)) > \kappa$ for all $t_3 \geq t_2$.
Therefore for each $(\ki)$ pair there at most $6|\SA|mw_\iota \kappa$ visits that are $(\ki)$-useful.

%-------------------------------%
\subsubsect{Bounding the number of exploration phases}
%-------------------------------%
Let $N := \constN$ and
$t$ be the start of an exploration phase. Therefore $\tV^{\pi_k}(s_t) - V^{\pi_k}(s_t) > \epsilon / 2$
and so by Lemma \ref{lem_hard} there exists a $(\ki) \in \K$ such that
$|S| \geq |K(\ki)| > \kappa$.
If $|K_{t_i}(\kappa, \iota)| > \kappa$ at
the start of $4N$ exploration phases, $t_1, t_2, \cdots, t_{4N}$ then by Lemma \ref{lem_actual_visits}
\eq{
\P{\sum_{i=1}^{4N} \sum_{s,a \in K_{t_i}(\kappa,\iota)} v_{t_i}(s, a) \leq N w_\iota \kappa} \s{\leq} \delta_1.
}
Therefore by the union bound there are at most $\expmax := \constexpmax$ exploration phases with probability
$1 - \delta_1 |\KI| \equiv 1 - |\KI|\constdeltaone > 1 - \delta/2$.
\end{proofof}

\vspace{-2mm}
%%%%%%%%%%%%%%%%%%%%%%%%%%%%%%%%%%%%%%%%%%%%%%%%%%%%%%%%%%%%%%%
\section{Eliminating the Assumption}\label{sec_extension}
%%%%%%%%%%%%%%%%%%%%%%%%%%%%%%%%%%%%%%%%%%%%%%%%%%%%%%%%%%%%%%%

The upper bound in the previous section could only be proven using Assumption \ref{ass}. In this section we describe a possible approach to
generalising the proof and why this may be non-trivial. In the work above we used the assumption to bound
$(p_{s,\pi} - \tp_{s,\pi}) \cdot \tV^* \lesssim \sqrt{L_1 \ts^\pi(s)^2 / n}$. A natural approach to generalising this comes from
Bernstein's inequality (Theorem \ref{thm_bernstein}). If $V^\pi \in \R^{|S|}$ is a value function independent of $\hp$ then Bernstein's inequality
can be used to show that $(p_{s,\pi} - \hp_{s,\pi}) \cdot V^\pi \lesssim \sqrt{L_1 \sigma^\pi(s)^2 / n}$.
This suggests we adjust our model class by letting $\pi := \tilde \pi^*$ and changing the condition to $(\tp_{s,\pi} - \hp_{s,\pi}) \cdot \tV^* \lesssim \sqrt{L_1 \ts^\pi(s)^2 / n}$.
We might then bound
$(p_{s,\pi} - \tp_{s,\pi}) \cdot \tV^* \equiv (p_{s,\pi} - \hp_{s,\pi}) \cdot \tV^* + (\hp_{s,\pi} - \tp_{s,\pi}) \cdot \tV^*$. The right term
is then bounded by the conditions on the model class and the left term can perhaps be bounded by noting that $p_{s,\pi}$ is the true probability
distribution. Unfortunately, there are a few problems with this approach:
\begin{enumerate}
\item Bounding $(p_{s,\pi} - \hp_{s,\pi}) \cdot \tV^*$ does not result in a bound in terms of $\ts^\pi(s)^2$.
This issue can be solved by again applying Bernstein's inequality to bound $(p_{s,a} - \hp_{s,a})\cdot {\tV^{*^2}}$.
\item The value $\tV^*$ is {\it not} in general independent of $\hp$. This is because $\tM$ must be chosen to satisfy the conditions on
$(\hp_{s,\pi} - \tp_{s,\pi})\cdot \tV^*$, which depends on $\hp$. This dependence violates the conditions of Bernstein's inequality when
trying to bound $(p_{s,\pi} - \hp_{s,\pi}) \cdot \tV^*$.
The dependence is intuitively quite weak, but nevertheless presents problems
for rigorous proof.
\item The last problem is that extended value iteration is no longer a trivial operation (even granting infinite computation). The problem is that
the condition $(p_{s,a} - \hp_{s,a}) \cdot V^*$ is not local to $(s, a)$, it also depends on the choice of $p_{s',a'}$ for $(s',a') \in \SA$. This
complication is probably resolvable, but the formal demonstration of extended value iteration is no longer so easy.
\end{enumerate}

%-------------------------------%
\subsubsect{Progress}
%-------------------------------%
The first issue above can be solved, as remarked, by bounding $(p_{s,a} - \hp_{s,a}) \cdot {\tV^{*^2}}$ using another Bernstein inequality. The problem
here is that this condition must now be added to the definition of the model class. The second issue is non-trivial and we cannot claim to have made
progress there.
We did manage to show that extended value iteration can be extended to the case where the only constraints take the form
$(\tp_{s,\pi} - \hp_{s,\pi}) \cdot \tV^* \lesssim \sqrt{L_1 \ts^\pi(s)^2 / n}$.
In this case it can be shown the existence of a globally
optimistic MDP. Unfortunately if you add constraints on higher moments, $(\tp_{s,\pi} - \hp_{s,\pi}) \cdot \tV^2$ then results become substantially
more complex. Note that in the complete proof of Lemma \ref{lem_hard} we used higher moments still, but this is not required. Lemma \ref{lem_hard}
can be proven using only bounds on $(\tp - \hp) \cdot \tV^*$ and $(\tp - \hp)\cdot {\tV^{*^2}}$.

%%%%%%%%%%%%%%%%%%%%%%%%%%%%%%%%%%%%%%%%%%%%%%%%%%%%%%%%%%%%%%%
\section{Lower PAC Bound}\label{sec_lower}
%%%%%%%%%%%%%%%%%%%%%%%%%%%%%%%%%%%%%%%%%%%%%%%%%%%%%%%%%%%%%%%

We now turn our attention to proving a matching lower bound. The approach is
similar to that of \citet{Str09}, but we make two refinements
to improve the bound to depend on $1/(1 - \gamma)^3$ and remove the policy restrictions. The first is to add a delaying state where no information
can be gained, but where an algorithm may still fail to be PAC. The second is more subtle and will be described in the proof.

\begin{definition}
A non-stationary policy is a function $\pi:S^* \to A$.
\end{definition}

\begin{theorem}\label{thm_lower}
Let $\pi$ be a (possibly non-stationary) policy depending on $S, A, r, \gamma, \epsilon$ and $\delta$,
then there exists a Markov decision process $\Mh$ such that
$V^*(s_t) - V^\pi(s_t) > \epsilon$ for at least $N$ time-steps with probability at least $\delta$ where
\eq{
N \s{:=} {c_1 |\SA| \over \epsilon^2(1-\gamma)^3} \log {c_2 \over\delta}
}
and $c_1,c_2 > 0$ are independent of the policy $\pi$ as well as all inputs $S, A, \epsilon, \delta, \gamma$.
\end{theorem}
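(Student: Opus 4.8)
The plan is to construct, for the given policy $\pi$, a family of hard MDPs (each satisfying Assumption \ref{ass}) and then select one member $\Mh$ on which $\pi$ provably fails. The skeleton consists of roughly $|\SA|$ independent ``bandit'' states, two absorbing reward states (a high-reward ``heaven'' with $r=1$ and a low-reward ``hell'' with $r=0$), and a single \emph{delay} state. In each bandit state exactly one action is \emph{good}: taking it sends the agent toward heaven with probability $p$, while every other action sends it toward heaven with the slightly smaller probability $p-\Delta$, the remaining mass going toward hell, so each action has exactly two successors. I would fix $p$ to a constant near $1/2$ (to maximise variance) and set $\Delta := c\,\epsilon(1-\gamma)$, which makes the value gap incurred by a single wrong action equal to $\gamma\Delta/(1-\gamma) = \Theta(\epsilon)$, hence $>\epsilon$ after choosing constants. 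The delay state has a self-loop taken with probability $q := 1 - c'(1-\gamma)$ and otherwise routes the agent back to a bandit state; solving its Bellman equation shows its value is a constant fraction of a bandit state's value, so the gap there is still $>\epsilon$ whenever $\pi$ would act suboptimally on the bandit states it is about to revisit. Crucially, each excursion through the delay state lasts $1/(1-q) = \Theta(1/(1-\gamma))$ time-steps during which \emph{no} information about the good actions is revealed.

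The argument then has two layers. The statistical layer lower bounds the number of visits a policy needs before it can reliably identify the good actions: distinguishing $p$ from $p-\Delta$ in one bandit state requires $\Omega(p(1-p)\Delta^{-2}\log(1/\delta)) = \Omega(\epsilon^{-2}(1-\gamma)^{-2}\log(1/\delta))$ samples, and since the $\Theta(|\SA|)$ bandit sub-problems are independent and share the agent's sampling budget, $\pi$ needs $\Omega(|\SA|\,\epsilon^{-2}(1-\gamma)^{-2}\log(1/\delta))$ informative visits in total. The amplification layer converts visits into time-steps: because every informative visit is separated from the next by a delay excursion of expected length $\Theta(1/(1-\gamma))$, and because the value gap exceeds $\epsilon$ at \emph{every} time-step (bandit or delay) at which $\pi$ has not yet learned the relevant good action, the number of non-PAC time-steps exceeds the number of informative visits by a factor $\Theta(1/(1-\gamma))$, yielding the claimed $N = \Theta(|\SA|\,\epsilon^{-2}(1-\gamma)^{-3}\log(1/\delta))$.

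To make the statistical layer rigorous for an arbitrary, history-dependent $\pi$ and to obtain failure probability at least $\delta$ (rather than a bound in expectation), I would use a two-point change-of-measure argument. Pairing each MDP with the one obtained by swapping the good action in a single bandit state, the divergence decomposition for MDPs bounds the KL divergence between the two induced trajectory distributions by $\mathbb{E}[\text{visits to that state}]\cdot \mathrm{kl}(p\,\|\,p-\Delta)$, with $\mathrm{kl}(p\,\|\,p-\Delta)=\Theta(\Delta^2)$. If the visit count stays below the threshold above, this KL is $O(\log(1/\delta))$, so by a standard change-of-measure (Bretagnolle--Huber) bound $\pi$ must act wrongly with probability at least $\delta$ on one MDP of the pair. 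Averaging over the family (the probabilistic method) then produces a single $\Mh$ for which, with probability at least $\delta$, $\pi$ fails to gather enough samples and is therefore non-PAC for at least $N$ time-steps.

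The main obstacle is the amplification layer combined with the removal of the policy restriction. The delicate point is that ``being non-PAC'' at a delay time-step is a statement about $\pi$'s \emph{future} (history-dependent) behaviour on the bandit states, so one must argue that, on the event that too few informative samples have been collected, $\pi$ remains committed to a suboptimal action long enough for the full $\Theta(1/(1-\gamma))$ delay steps to count as genuine non-PAC steps, and that this holds simultaneously with probability $\geq\delta$ across enough sub-problems to recover the $|\SA|$ factor. Coordinating the independence of the bandit sub-problems, the averaging over the family, and the per-excursion discounting so that no factor of $(1-\gamma)$ is lost is where the careful bookkeeping (and the ``more subtle'' second refinement over \citet{Str09}) resides.
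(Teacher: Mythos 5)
Your construction and two-layer plan share the paper's architecture: bandit-style decision states feeding two nearly-absorbing reward states, plus a delay state whose excursions of expected length $\Theta(1/(1-\gamma))$ supply the extra $1/(1-\gamma)$ factor. Your statistical layer is a legitimate alternative to the paper's: the paper invokes the Mannor--Tsitsiklis bandit lower bound (Theorem \ref{thm_bandit}) as a black box, via an explicit reduction (Algorithm 2) that uses the MDP policy $\pi$ to drive a bandit strategy, whereas you would re-derive that bound from scratch with a divergence-decomposition/Bretagnolle--Huber argument. That substitution could be made to work, at the cost of re-proving a known theorem.

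The genuine gap is in the amplification layer, and it is exactly the point you flag as ``the main obstacle'' and then leave unresolved. Counting realized informative visits cannot be converted into non-PAC delay time-steps in the way you suggest: at a delay time-step $t$ the gap is an expectation over the \emph{random} exit time, $\Delta(s_{1:t})=\sum_{k\geq 0}p^k(1-p)\gamma^k\,\Delta(s_{1:t}0^k1)$, so the fact that $\pi$ actually played a wrong action at the realized exit does not make the preceding delay steps $\epsilon$-suboptimal; an arbitrary history-dependent policy can plan to act optimally at most counterfactual exit times, and nothing forces it to ``remain committed'' to a mistake. The paper's resolution replaces commitment by the weight $w_t(a):=\sum_k p^k(1-p)\gamma^k\ind{\pi(s_{1:t}0^k1)=a}$: (i) the gap at a delay step is at least $8\epsilon\sum_{a\neq a^*}w_t(a)$ (via Lemma \ref{lem_tech2}); (ii) if the suboptimal weight is at least $1/4$ at the start of a phase it stays at least $1/8$ for the first $1/[16(1-\gamma)]$ steps --- an algebraic property of the discounted weight (Lemma \ref{lem_weights}), not a behavioural one; and (iii) the bandit reduction is run at the level of weights --- the bandit strategy outputs the majority over phases of $\argmax_a w_{t_i^0}(a)$ --- so the bandit lower bound directly yields, with probability at least $\delta$, at least $N$ phases whose \emph{starting} suboptimal weight exceeds $1/4$ (Lemma \ref{lem_bandit}); combining with (i), (ii) and the Hoeffding bound on phase lengths (Lemma \ref{lem_time}) gives the $1/(1-\gamma)^3$ count. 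Without this mechanism (or an equivalent one) your statistical layer and your amplification layer do not compose, and the event ``too few samples were gathered'' never gets translated into per-time-step value gaps; so the proposal as written does not yet prove the theorem.
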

The proof can found in Appendix \ref{app_lower}, but we give the counter-example MDP and intuition.

\setlength{\intextsep}{0pt}
\begin{wrapfigure}[14]{r}{5.3cm}
\topsep=0.0cm
\vspace{0.3cm}
\hspace{-0.3cm}
\small
\begin{tikzpicture}[->,>=stealth',shorten >=1pt,auto,node distance=2.7cm, semithick]
\tikzstyle{state} = [circle,draw,minimum width=1.0cm, minimum height=1.0cm]
\node[state] (error) {\small $\stackrel{1} {r = 0}$};
\node[state] (minus) [right of=error] {\small $\stackrel{\ominus} {r = 0}$};
\node[state] (plus) [below of=error] {\small $\stackrel{\oplus} {r = 1}$};
\node[state] (hard) [right of=plus] {\small $\stackrel{0} {r = 0}$};
\path (hard) edge node[swap] {$1 - p$} (error)
      (hard) edge[loop below] node {$p := 1/(2 - \gamma)$} (hard)
      (error) edge node {${1 \over 2} - \epsilon(a)$} (minus)
      (error) edge node[swap] {${1 \over 2} + \epsilon(a)$} (plus)
      (minus) edge[loop above] node[swap] {$q := 2 - 1/\gamma$} (minus)
      (minus) edge node {$1-q$} (hard)
      (plus) edge[loop below] node[swap] {$q$} (plus)
      (plus) edge node[swap] {$1 - q$} (hard)
;
\end{tikzpicture}
\begin{center}
Figure 1: Hard MDP
\end{center}
\end{wrapfigure}
%-------------------------------%
\subsubsect{Counter Example}
%-------------------------------%
We prove Theorem \ref{thm_lower} for a class of MDPs
where $S = \left\{0,1,\oplus,\ominus\right\}$ and $A = \left\{1,2,\cdots, |A|\right\}$. The rewards and transitions for a single
action are depicted in the diagram on the right where $\epsilon(a^*) = 16\epsilon(1 - \gamma)$ for some $a^* \in A$ and $\epsilon(a) = 0$
for all other actions.
Some remarks:
\begin{enumerate*}
\item States $\oplus$ and $\ominus$ are almost completely absorbing and confer maximum/minimum rewards respectively.
\item The transitions are independent of actions for all states except state $1$. From this state, actions lead
uniformly to $\oplus$/$\ominus$ except for one action, $a^*$, which has a slightly higher probability of transitioning
to state $\oplus$. Thus $a^*$ is the optimal action in state $1$.
\item State $0$ has an absorption rate such that, on average, a policy will stay there for $1/(1 - \gamma)$ time-steps.
\end{enumerate*}

%-------------------------------%
\subsubsect{Intuition}
%-------------------------------%
The MDP above is very bandit-like in the sense that once a policy reaches state $1$ it should choose the action most likely to lead to
state $\oplus$ whereupon it will either be rewarded or punished (visit state $\oplus$ or $\ominus$). Eventually it will return to state $1$
when
the whole process repeats. This suggests a PAC-MDP algorithm can be used to learn the bandit with $p(a) := p_{1,a}^{\oplus}$.
We can then make use of a theorem of \citet{MT04} on bandit sample-complexity to show
that the number of times $a^*$ is not selected is at least
\eqn{\label{eq-lower}
\aO\left({1 \over \epsilon^2(1 - \gamma)^2} \log{1 \over \delta}\right).
}
Improving the bound to depend on $1/(1 - \gamma)^3$ is intuitively easy, but technically somewhat annoying. The idea is to
consider the value differences in state $0$ as well as state $1$. State $0$ has the following properties:
\begin{enumerate*}
\item The absorption rate is sufficiently large that any policy remains in state $0$ for around $1/(1 - \gamma)$ time-steps.
\item The absorption rate is sufficiently small that the difference in values due to bad actions planned in state $1$ still matter while in state $0$.
\end{enumerate*}
While in state $0$ an agent cannot make
an error in the sense that $V^*(0) - Q^*(0, a) = 0$ for all $a$. But we are measuring $V^*(0) - V^\pi(0)$ and so an agent can be penalised if
its policy upon reaching state $1$ is to make an error.
Suppose the
agent is in state $0$ at some time-step before moving to state $1$ and making a mistake. On average it will stay in state $0$ for roughly $1/(1-\gamma)$
time-steps during which time it will plan a mistake upon reaching state $1$. Thus the bound in \eqr{eq-lower} can be multiplied
by $1/(1 - \gamma)$.
The proof is harder because an agent need not plan to make a mistake in all future time-steps when reaching state $1$ before eventually doing so
in one time-step.
Note that \citet{Str09} proved their theorem for a specific class of policies while Theorem \ref{thm_lower} holds for all policies.

%%%%%%%%%%%%%%%%%%%%%%%%%%%%%%%%%%%%%%%%%%%%%%%%%%%%%%%%%%%%%%%
\section{Conclusion}
%%%%%%%%%%%%%%%%%%%%%%%%%%%%%%%%%%%%%%%%%%%%%%%%%%%%%%%%%%%%%%%

%-------------------------------%
\subsubsect{Summary}
%-------------------------------%
We presented matching upper and lower bounds on the number of time-steps when a reinforcement learning algorithm can be nearly-optimal with high
probability. While the lower bound is completely general, the upper bound depends on the assumption that there are at most two next-states for each
state/action pair. This assumption aside, the new upper bound improves on the previously best known bound of \citet{Aue11}. If the assumption is
dropped then the new proof can be used to construct an algorithm that is better than the bound of \citet{Aue11} in terms of $1/(1 - \gamma)$, but
worse in $|S|$. The lower bound, which comes without assumptions, improves on the work of \citet{Str09} by being both larger and more general. The
class of MDPs used for the counter-example do satisfy Assumption \ref{ass} and so the upper and lower bounds now match in this restricted case.

%-------------------------------%
\subsubsect{Running Time}
%-------------------------------%
We did not analyze the running time of our version of UCRL, but expect analysis similar to that of \citet{LS08} can be used to show that
UCRL can be approximated to run in polynomial time with no cost to sample-complexity.

%-------------------------------%
\acks{Thanks to Peter Sunehag for his careful reading and useful suggestions.}
%-------------------------------%

%%%%%%%%%%%%%%%%%%%%%%%%%%%%%%%%%%%%%%%%%%%%%%%%%%%%%%%%%%%%%%%
%%                 B i b l i o g r a p h y                   %%
%%%%%%%%%%%%%%%%%%%%%%%%%%%%%%%%%%%%%%%%%%%%%%%%%%%%%%%%%%%%%%%

\addcontentsline{toc}{section}{\refname}
\begin{small}

\end{small}

\appendix
%%%%%%%%%%%%%%%%%%%%%%%%%%%%%%%%%%%%%%%%%%%%%%%%%%%%%%%%%%%%%%%
\section{Proof of Lower PAC Bound}\label{app_lower}
%%%%%%%%%%%%%%%%%%%%%%%%%%%%%%%%%%%%%%%%%%%%%%%%%%%%%%%%%%%%%%%

The proof makes use of a simple form of bandit and Theorem \ref{thm_bandit}, which lower
bounds the sample-complexity of bandit algorithms. We need some new notation required for non-stationary policies
and bandits.

%-------------------------------%
\subsubsect{History Sequences}
%-------------------------------%
We write $s_{1:t} = s_1, s_2, \cdots, s_t$ for the history sequence of length $t$. Histories can be concatenated, so
$s_{1:t}\oplus = s_1,s_2,\cdots, s_t,\oplus$ where $\oplus \in S$.

%-------------------------------%
\subsubsect{Bandits}
%-------------------------------%
An {\it $A$-armed bandit} is a vector $p : A \to [0,1]$. A policy interacts with a bandit sequentially. In time-step $t$ some arm $a_t$ is
played whereupon the policy
receives reward $1$ with probability $p(a)$ and reward $0$ otherwise. This is repeated over all time-steps. More formally, a bandit policy
is a function $\pi : \left\{0,1\right\}^* \to A$.  The optimal arm is defined $a^* := \argmax_a p(a)$.
A policy dependent on $\epsilon, \delta$ and $A$ has sample-complexity $T:=T(A, \epsilon, \delta)$ if for all bandits the arm chosen on
time-step $T$ satisfies $p(a^*) - p(a_T) \leq \epsilon$ with probability at least $1 - \delta$.

\begin{theorem}[Mannor and Tsitsiklis, 2004]\label{thm_bandit}
There exist positive constants $c_1$, $c_2$, $\epsilon_0$, and $\delta_0$, such that for every $A \geq 2$, $\epsilon \in (0, \epsilon_0)$ and
$\delta \in (0, \delta_0)$ there exists a bandit $p \in [0, 1]^A$ such that
\eq{
T(A, \epsilon, \delta) \s{\geq} c_1 {|A| \over \epsilon^2} \log {c_2 \over \delta}
}
with probability at least $\delta$.
\end{theorem}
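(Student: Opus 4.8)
The plan is to prove this as an information-theoretic lower bound via a change-of-measure argument over a family of nearly indistinguishable bandits. First I would construct $A$ hard instances $\nu_1,\dots,\nu_A$ together with a base instance $\nu_0$. In $\nu_0$ every arm is $\mathrm{Bernoulli}(1/2)$; in $\nu_j$ arm $j$ is $\mathrm{Bernoulli}(1/2+2\epsilon)$ while every other arm remains $\mathrm{Bernoulli}(1/2)$. Choosing $\epsilon_0$ small guarantees $1/2+2\epsilon<1$ and keeps the per-pull information small. The reason for using a gap of $2\epsilon$ is that in $\nu_j$ the unique $\epsilon$-optimal arm is $j$ itself, since every other arm has suboptimality gap $2\epsilon>\epsilon$. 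Hence any policy with sample complexity $T$ must, under $\nu_j$, output arm $j$ at time $T$ with probability at least $1-\delta$; if the horizon $T$ is too small, I will instead exhibit an instance on which the output is non-$\epsilon$-optimal with probability exceeding $\delta$, which is exactly the failure event quantified in the statement.

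Second comes the averaging step that produces the factor $|A|$. Running the policy to horizon $T$ and letting $N_j$ be the number of pulls of arm $j$, the fixed horizon gives $\sum_j \E_{\nu_0}[N_j] = T$ under the base instance, so by Markov's inequality at least half of the arms — call this set $G$, with $|G|\geq A/2$ — satisfy $\E_{\nu_0}[N_j]\leq 2T/A$.

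Third is the change of measure. The laws of the observed action/reward history under $\nu_0$ and $\nu_j$ differ only through pulls of arm $j$, so by the chain rule for relative entropy $D_{\mathrm{KL}}(\nu_0\|\nu_j)=\E_{\nu_0}[N_j]\,d\big(\tfrac12\big\|\tfrac12+2\epsilon\big)$, and the Bernoulli divergence is at most $c\epsilon^2$ for $\epsilon<\epsilon_0$. Thus for $j\in G$ we have $D_{\mathrm{KL}}(\nu_0\|\nu_j)\leq D:=2cT\epsilon^2/A$. To convert this information bound into an error bound with the correct $\log(1/\delta)$ scaling I would apply the Bretagnolle--Huber inequality to the event $E_j=\{\text{policy outputs }j\}$, giving $\P{E_j\mid\nu_0}+\P{E_j^c\mid\nu_j}\geq\tfrac12\exp(-D_{\mathrm{KL}}(\nu_0\|\nu_j))\geq\tfrac12 e^{-D}$. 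Correctness on $\nu_j$ forces $\P{E_j^c\mid\nu_j}\leq\delta$, so $\P{E_j\mid\nu_0}\geq\tfrac12 e^{-D}-\delta$ for every $j\in G$. Summing over $G$ and using $\sum_j\P{E_j\mid\nu_0}\leq 1$ yields $1\geq\tfrac{A}{2}\big(\tfrac12 e^{-D}-\delta\big)$. If $T$ were below the target threshold then $D$ would be small, the bracket would stay bounded below by a positive constant, and the inequality would fail for large $A$; locating the threshold where the bracket turns non-positive, namely $e^{-D}\approx 2\delta$, i.e. $D\approx\log(1/\delta)$, pins down the constants and the bound $T\geq c_1\tfrac{|A|}{\epsilon^2}\log\tfrac{c_2}{\delta}$. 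Small $A$ is handled separately by the two-hypothesis version ($\nu_0$ versus $\nu_1$), which already gives $T\gtrsim\tfrac{1}{\epsilon^2}\log\tfrac{1}{\delta}$ and is absorbed into the constants. This bandit bound then feeds into Theorem \ref{thm_lower} through the reduction sketched there.

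The main obstacle is extracting the $\log(1/\delta)$ factor rather than a mere constant. A Pinsker-based estimate $|\P{E_j\mid\nu_0}-\P{E_j\mid\nu_j}|\leq\sqrt{D_{\mathrm{KL}}/2}$ saturates and delivers only $T\gtrsim|A|/\epsilon^2$; capturing the logarithm genuinely requires an inequality that remains sharp for small error probabilities — the Bretagnolle--Huber inequality, or equivalently a direct likelihood-ratio truncation argument — and getting that step right is the technical heart of the proof. The remaining work is bookkeeping: choosing $\epsilon_0,\delta_0$ so the Bernoulli divergence bound and the regime $\delta\lesssim 1/A$ behave as claimed, and checking that the constants $c_1,c_2$ can indeed be taken independent of $A,\epsilon,\delta$.
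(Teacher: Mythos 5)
There is a genuine gap here, and one contextual point first: the paper does not prove this statement at all. Theorem \ref{thm_bandit} is imported verbatim from \citet{MT04} and used as a black box in Appendix \ref{app_lower}; the only in-paper content is the remark identifying the hard instance as $p(a)=\tfrac12$ for all arms except $p(a^*)=\tfrac12+\epsilon$. So your proposal must stand on its own, and it fails exactly at the step you yourself call the technical heart. Because your base instance $\nu_0$ sets every arm to $\mathrm{Bernoulli}(1/2)$, \emph{every} arm is $\epsilon$-optimal under $\nu_0$, so correctness places no constraint on the output distribution under $\nu_0$; the only fact available is $\sum_j \P{E_j\mid\nu_0}\leq 1$. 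Feeding that into your summed Bretagnolle--Huber inequality $1\geq \tfrac{A}{2}\bigl(\tfrac12 e^{-D}-\delta\bigr)$ forces only $\tfrac12 e^{-D}-\delta\leq 2/A$, i.e. $D\geq \log\bigl(1/(4/A+2\delta)\bigr)$; the sum does \emph{not} force the bracket to be non-positive, which is what your ``locate the threshold where the bracket turns non-positive'' step silently assumes. The bound you actually get is $T\gtrsim (A/\epsilon^2)\min\{\log A,\log(1/\delta)\}$: in the regime $\delta\ll 1/A$ the $\log(1/\delta)$ factor is irretrievably lost, and the theorem requires constants valid for all $\delta\in(0,\delta_0)$ uniformly in $A$ (take $\delta$ exponentially small in $A$ to see the failure). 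Your two-hypothesis fallback for small $A$ breaks for the same reason: with the all-$\tfrac12$ base, $\P{E_1\mid\nu_0}$ is unconstrained (it may equal $1$), so Bretagnolle--Huber yields nothing there either.

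The fix, which is the actual structure of Mannor and Tsitsiklis's argument and is reflected in the paper's remark, is to give the \emph{base} instance a unique $\epsilon$-optimal arm: e.g. $p(a^*)=\tfrac12+2\epsilon$ and $p(a)=\tfrac12$ otherwise, with the alternative $\nu_j$ raising arm $j$ to $\tfrac12+4\epsilon$ so that $a^*$ ceases to be $\epsilon$-optimal under $\nu_j$. Then correctness under the base gives the per-arm constraint $\P{E_j\mid\nu_0}\leq\delta$ for every $j\neq a^*$ individually --- not merely a sum bounded by one --- and Bretagnolle--Huber applied arm by arm yields $2\delta\geq \tfrac12\exp\bigl(-\E_{\nu_0}[N_j]\, d(\tfrac12\Vert\tfrac12+4\epsilon)\bigr)$, hence $\E_{\nu_0}[N_j]\geq c\,\epsilon^{-2}\log\tfrac{1}{4\delta}$ for each suboptimal arm separately. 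Summing over the $A-1$ suboptimal arms gives $T\geq\sum_{j\neq a^*}\E_{\nu_0}[N_j]\geq c_1 (A/\epsilon^2)\log(c_2/\delta)$, with no Markov averaging step needed at all. In short: your machinery (divergence decomposition plus Bretagnolle--Huber) is the right kind, but an all-equal base instance cannot produce the $\log(1/\delta)$ factor; the per-arm change of measure from a base with a unique best arm is what does.
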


\begin{remark}
The bandit used in the proof of Theorem \ref{thm_bandit} satisfies $p(a) = {1 \over 2}$ for all $a$ except $a^*$ which has
$p(a^*) := {1 \over 2} + \epsilon$.
\end{remark}
We now prepare to prove Theorem \ref{thm_lower}. For the remainder of this section let $\pi$ be an arbitrary policy
and $\Mh$ be the MDP of Figure 2. As in previous work we write $V^\pi := V^\pi_{\Mh}$.
The idea of the proof will be to use Theorem \ref{thm_bandit} to  show that $\pi$ cannot be approximately correct in state $1$ too often. Then
use this to show that while in state $0$ before-hand it is also not approximately correct.

\begin{definition}
Let $s_{1:\infty} \in S^\infty$ be the sequence of states seen by policy $\pi$ and for arbitrary history $s_{1:t}$ let
\eq{
\Delta(s_{1:t}) := V^*(s_{1:t}) - V^\pi(s_{1:t}).
}
\end{definition}

\begin{lemma}\label{lem_tech}
If $\gamma \in (0, 1)$, $p := 1/(2 - \gamma)$ and $q := 2 - 1/\gamma$ then
\eq{
p^{1 \over 4(1 - \gamma)}  \s{>} 3/4 \quad\text{ and } \quad
\sum_{t=0}^\infty p^t (1 - p) \gamma^t \s{=} {1 \over 2}.
}
\end{lemma}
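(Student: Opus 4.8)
The plan is to treat the two claims separately, since the second is a pure geometric-series computation while the first reduces to a standard logarithmic inequality together with a single numerical comparison.

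For the second identity I would recognize the sum as a geometric series. Factoring out $(1-p)$ gives $\sum_{t=0}^\infty p^t(1-p)\gamma^t = (1-p)\sum_{t=0}^\infty (p\gamma)^t = (1-p)/(1-p\gamma)$, which converges since $p\gamma < 1$ for $\gamma \in (0,1)$. Substituting $p = 1/(2-\gamma)$ yields $1-p = (1-\gamma)/(2-\gamma)$ and $1 - p\gamma = (2-2\gamma)/(2-\gamma) = 2(1-\gamma)/(2-\gamma)$, so the ratio collapses to exactly $1/2$. This part presents no real obstacle.

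For the first inequality I would rewrite everything in terms of $x := 1-\gamma \in (0,1)$, observing that $p = 1/(2-\gamma) = 1/(1+x)$, so the claim $p^{1/(4(1-\gamma))} > 3/4$ is equivalent to $(1+x)^{1/(4x)} < 4/3$. Taking logarithms, it suffices to show $\frac{\log(1+x)}{4x} < \log(4/3)$. Here I would invoke the standard bound $\log(1+x) < x$ for all $x > 0$, which immediately gives $\frac{\log(1+x)}{4x} < \frac{1}{4}$, and then close the argument using the numerical fact $\frac{1}{4} < \log(4/3)$.

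The only point requiring any care, and the closest thing to an obstacle, is confirming that $\frac{1}{4} < \log(4/3)$ strictly, since the crude bound $\log(1+x) < x$ leaves only a narrow margin; this follows directly from $\log(4/3) = \log 4 - \log 3 \approx 0.288$. It is worth noting that the binding regime is $x \to 0^+$ (that is, $\gamma \to 1$): there $\frac{\log(1+x)}{x} \to 1$, so the left-hand side approaches $1/4$ from below, which is precisely why the constant $3/4$ rather than something closer to $1$ appears in the statement.
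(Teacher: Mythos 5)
Your proposal is correct and follows essentially the same route as the paper, whose proof is only the one-line sketch ``both results follow from the geometric series and easy calculus'': your geometric-series computation and the bound $\log(1+x) < x$ combined with $\tfrac{1}{4} < \log\tfrac{4}{3}$ are exactly the details that sketch leaves implicit. Your observation that the binding regime is $\gamma \to 1$, where the quantity tends to $e^{-1/4} \approx 0.78 > 3/4$, is a nice confirmation that the constant $3/4$ is essentially tight.
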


\begin{proofsketchof}{}
Both results follow from the geometric series and easy calculus.
\end{proofsketchof}
The following lemma lower-bounds $\Delta(s_{1:t})$ if sub-optimal action $a \neq a^*$ is taken in state $1$.
\begin{lemma}\label{lem_tech2}
Let $s_{1:t}$ be a history such that $s_t = 1$ and $a := \pi(s_{1:t}) \neq a^*$ then
\eq{
\Delta(s_{1:t}) \geq 8\epsilon.
}
\end{lemma}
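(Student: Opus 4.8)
The plan is to reduce the claim to a single ``advantage gap'' computation at state $1$, exploiting that everything downstream of the current step can be bounded by optimality alone, so that the non-stationarity of $\pi$ never has to be tracked. First I would solve the Bellman equations for the two (near-)absorbing states $\oplus$ and $\ominus$. Using $q = 2 - 1/\gamma$, so that $\gamma q = 2\gamma - 1$ and $\gamma(1-q) = 1-\gamma$, the equations $V^*(\oplus) = 1 + \gamma q\,V^*(\oplus) + \gamma(1-q)V^*(0)$ and $V^*(\ominus) = \gamma q\,V^*(\ominus) + \gamma(1-q)V^*(0)$ both rearrange to the form $(2-2\gamma)V = (\text{reward}) + (1-\gamma)V^*(0)$. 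Subtracting, the common dependence on $V^*(0)$ cancels and I obtain the key identity
\eq{
V^*(\oplus) - V^*(\ominus) = \frac{1}{2(1-\gamma)}.
}
The important structural point is that I never need to evaluate $V^*(0)$ or $V^*(1)$ explicitly; only this gap matters, and it is insensitive to the slow dynamics at state $0$.

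Second, I would form the gap between the optimal action $a^*$ and the played action $a$ at state $1$. Since state $1$ sends the process to $\oplus$ with probability $\tfrac12 + \epsilon(a)$ and to $\ominus$ with probability $\tfrac12 - \epsilon(a)$, and $\epsilon(a) = 0$ while $\epsilon(a^*) = 16\epsilon(1-\gamma)$, the optimal action-values satisfy
\eq{
Q^*(1, a^*) - Q^*(1, a) = \gamma\,\epsilon(a^*)\,\bigl(V^*(\oplus) - V^*(\ominus)\bigr) = \gamma \cdot 16\epsilon(1-\gamma)\cdot\frac{1}{2(1-\gamma)} = 8\gamma\epsilon.
}

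Third, I would connect this to $\Delta(s_{1:t})$. Because $s_t = 1$ and $\pi$ plays $a := \pi(s_{1:t}) \neq a^*$ now, $V^\pi(s_{1:t})$ decomposes over the one forced transition followed by whatever $\pi$ does thereafter; but from either successor the value of any (even non-stationary) continuation is dominated by the optimal value, so $V^\pi(s_{1:t}) \le Q^*(1,a)$. Since $V^*(s_{1:t}) = V^*(1) = Q^*(1,a^*)$, this gives
\eq{
\Delta(s_{1:t}) = V^*(1) - V^\pi(s_{1:t}) \ge Q^*(1,a^*) - Q^*(1,a) = 8\gamma\epsilon.
}
The domination step is precisely what lets me ignore the non-stationarity of $\pi$: I discard the (nonnegative) future suboptimality of $\pi$ and retain only the guaranteed one-step advantage.

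The main obstacle is the final inequality: the honest computation yields $8\gamma\epsilon$, whereas the statement asks for $8\epsilon$. Since the construction requires $q = 2 - 1/\gamma \in [0,1]$, and hence $\gamma \ge 1/2$, I expect the intended reconciliation is that the factor $16$ in $\epsilon(a^*)$ is chosen with the discount in mind and that the lower bound is only of interest as $\gamma \to 1$, where $8\gamma\epsilon \to 8\epsilon$; I would double-check that the surrounding argument uses only a fixed constant multiple of $\epsilon$, so the stray discount factor can be absorbed into the constants of Theorem \ref{thm_lower}. Everything else is a short deterministic calculation, and the one genuinely delicate ingredient remains the cancellation of $V^*(0)$ in the first step.
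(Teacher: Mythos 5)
Correct, and essentially the paper's own argument: both proofs expand the Bellman equation one step at state $1$, discard the nonnegative continuation differences (your domination step $V^\pi \leq Q^*(1,a)$ is exactly the paper's dropping of the term $\frac{\gamma}{2}[V^*-V^\pi](\oplus) + \frac{\gamma}{2}[V^*-V^\pi](\ominus) \geq 0$), and reduce everything to the gap $\gamma\epsilon(a^*)\bigl(V^*(\oplus)-V^*(\ominus)\bigr)$ with $V^*(\oplus)-V^*(\ominus) = \frac{1}{2(1-\gamma)}$. Your worry about the stray factor $\gamma$ is well founded but is not a defect of your route in particular: the paper's proof contains the same factor (obscured by an algebra slip that writes $\gamma\epsilon(a^*)V^*(s_{1:t}\oplus)$ where $\gamma\epsilon(a^*)\bigl[V^*(s_{1:t}\oplus)-V^*(s_{1:t}\ominus)\bigr]$ is meant, before asserting $\geq 8\epsilon$), and the discrepancy is harmless exactly as you suggest --- the construction already forces $\gamma \geq 1/2$ (and Lemma \ref{lem_weights} assumes $\gamma > 3/4$), so the factor is absorbed into the constants $c_1, c_2$ of Theorem \ref{thm_lower}.
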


\begin{proof}
The result essentially follows from the definition of the value function.
\eq{
\Delta(s_{1:t}) &\equiv V^*(s_{1:t}) - V^\pi(s_{1:t}) \\
&= \gamma \left[p_{1,a^*}^{\oplus} V^*(s_{1:t}\oplus) + p_{1,a^*}^{\ominus} V^*(s_{1:t}\ominus)\right]
- \gamma \left[p_{1,a}^{\oplus} V^\pi(s_{1:t}\oplus) + p_{1,a}^{\ominus}V^\pi(s_{1:t}\ominus)\right] \\
&= {\gamma \over 2} \left[V^*(s_{1:t}\oplus) - V^\pi(s_{1:t}\oplus) + V^*(s_{1:t}\ominus) - V^\pi(s_{1:t}\ominus)\right] + \gamma\epsilon(a^*) V^*(s_{1:t}\oplus) \\
&\geq 8\epsilon,
}
where we used the definition of the value function and MDP, $\Mh$.
\end{proof}
We now define time-intervals where the policy is in state $0$. Recall we chose the absorption in this state such that the expected
number of time-steps a policy remains there is approximately $1/(1 - \gamma)$. We define the intervals starting
when a policy arrives in state $0$ and ending when it leaves to state $1$.
\begin{definition}
Define $t_1^0 := 1$ and
\eq{
t_i^0 &:= \min \left\{t : t > t_{i-1} \wedge s_t = 0 \wedge s_{t-1} \neq 0 \right\} &
t_i^1 &:= \min\left\{t - 1 : s_t = 1 \wedge t > t_i^0\right\}.
}
Define the intervals $I_i := [t_i^0, t_i^1] \subseteq \N$. We call interval $I_i$ the $i$th {\it phase}.
\end{definition}
Note the following facts:
\begin{enumerate*}
\item Since all transition probabilities are non-zero, $t^0_i$ and $t^1_i$ exist for all $i \in \N$ with probability $1$.
\item $|I_i|$ is the number of time-steps spent in state $0$ before moving to state $1$.
\item The values $|I_i|$ are independent of $\pi$ and each other.
\end{enumerate*}
\begin{definition}
Suppose $t \in \N$ and $s_t = 0$ and define the {\it weight} of action $a$, $w_t(a)$ by
\eq{
w_t(a) := \sum_{k=0}^\infty p^k (1 - p) \gamma^k \ind{\pi(s_{1:t}0^k1) = a}.
}
\end{definition}

\begin{lemma}\label{lem_weights2} $\sum_{a \in A} w_t(a) = {1 \over 2}$ for all $t$ where $s_t = 0$.
\end{lemma}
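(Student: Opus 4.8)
The plan is to compute the sum directly by interchanging the order of summation and exploiting the fact that a policy selects exactly one action for each history. Writing out the definition,
\eq{
\sum_{a \in A} w_t(a) \s{=} \sum_{a \in A} \sum_{k=0}^\infty p^k (1 - p) \gamma^k \ind{\pi(s_{1:t}0^k1) = a}.
}
First I would justify swapping the two sums. Since every term is non-negative this is unproblematic (Tonelli), and in fact the inner sum over $a$ is finite, so no subtlety arises.

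After the interchange the key observation is that, for each fixed $k$, the extended history $s_{1:t}0^k1$ is a single deterministic sequence (the state $0$ repeated $k$ times followed by $1$, appended to the observed history), and $\pi$ maps it to exactly one action. Hence $\sum_{a \in A} \ind{\pi(s_{1:t}0^k1) = a} = 1$ for every $k$, leaving
\eq{
\sum_{a \in A} w_t(a) \s{=} \sum_{k=0}^\infty p^k (1 - p) \gamma^k.
}

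Finally I would invoke Lemma \ref{lem_tech}, which states precisely that $\sum_{k=0}^\infty p^k (1 - p)\gamma^k = \tfrac{1}{2}$ for the choice $p = 1/(2 - \gamma)$, completing the proof. There is essentially no obstacle here: the only things to check are the legitimacy of the summation swap and that the indicator sum collapses to $1$ because $\pi$ is a well-defined function of the history. The analytic content is entirely deferred to Lemma \ref{lem_tech}, so this lemma is just the bookkeeping step that reorganises the definition of $w_t$ into the geometric series already evaluated there.
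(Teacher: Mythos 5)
Your proof is correct and follows exactly the paper's argument: exchange the sums, collapse $\sum_{a \in A} \ind{\pi(s_{1:t}0^k1) = a} = 1$ since $\pi$ selects a single action per history, and invoke Lemma \ref{lem_tech} for the geometric series. The only difference is that you spell out the justification for the interchange (Tonelli / non-negativity), which the paper leaves implicit.
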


\begin{proof}
We use Lemma \ref{lem_tech}.
\eq{
\sum_{a \in A} w_i(a) &\equiv \sum_{a \in A} \sum_{k=0}^\infty p^k(1 - p) \gamma^k \ind{\pi(s_{1:t}0^k1) = a} \\
&= \sum_{k=0}^\infty p^k(1 - p)\gamma^k = {1 \over 2}
}
as required.
\end{proof}

\begin{definition}
Define random variables $A_i$ and $X_i$ by
\eq{
A_i &\s{:=} \ind{|I_i| \geq 1/[16(1 - \gamma)] \wedge \sum_{a \neq a^*} w_{t^0_i}(a) \geq 1/4} &
X_i &\s{:=} \ind{|I_i| \geq 1/[4(1 - \gamma)]}
}
\end{definition}
Intuitively, $X_i$ is the event that the $i$th phase lasts at least $1/[4(1 - \gamma)]$ time-steps. $A_i$
is the event that the $i$th phase lasts at least $1/[16(1 - \gamma)]$ time-steps and the combined weight
of sub-optimal actions at the start of a phase is at least $1/4$.
The following lemma shows that at least two thirds of all phases have $X_i = 1$ with high probability.
\begin{lemma}\label{lem_time}
For all $n \in \N$, $\P{\sum_{i=1}^n X_i \leq {2 \over 3} n} \leq 2e^{-n/72}$.
\end{lemma}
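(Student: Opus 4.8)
The plan is to realise $\sum_{i=1}^n X_i$ as a sum of independent bounded random variables whose mean sits comfortably above $2n/3$, and then to control the lower tail by a standard concentration inequality.

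First I would establish the probabilistic structure of the $X_i$. Each $X_i = \ind{|I_i| \geq 1/[4(1-\gamma)]}$ is a deterministic function of the phase length $|I_i|$, and it was already noted (among the facts following the definition of the phases) that the lengths $|I_i|$ are independent of $\pi$ and of one another. Hence $X_1, \dots, X_n$ form an independent family of $\{0,1\}$-valued random variables, so a Hoeffding-type bound will apply once the mean is pinned down.

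Second I would compute $\E X_i$. Whenever the process sits in state $0$ it loops with probability $p = 1/(2-\gamma)$ and escapes to state $1$ with probability $1-p$ at each step, so $|I_i|$ is geometric with $\P{|I_i| \geq k} = p^{k-1}$. Consequently $\E X_i = \P{|I_i| \geq 1/[4(1-\gamma)]} \geq p^{1/[4(1-\gamma)]} > 3/4$, where the last inequality is exactly the first claim of Lemma \ref{lem_tech}. In particular $\E \sum_{i=1}^n X_i > \tfrac{3}{4} n$.

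Third I would apply the lower-tail Hoeffding bound to the independent variables $X_i \in [0,1]$. Since the target threshold $\tfrac{2}{3}n$ lies below the mean by at least $\bigl(\tfrac{3}{4} - \tfrac{2}{3}\bigr)n = n/12$, Hoeffding yields $\P{\sum_i X_i \leq \tfrac{2}{3}n} \leq \exp\bigl(-2(n/12)^2/n\bigr) = e^{-n/72} \leq 2e^{-n/72}$, which is the claim (the factor $2$ is free slack). The only real subtlety — and thus the main step to get right — is the second one: correctly identifying the geometric law of $|I_i|$ from the dynamics of state $0$ and matching the threshold $1/[4(1-\gamma)]$ to the inequality $p^{1/[4(1-\gamma)]} > 3/4$ supplied by Lemma \ref{lem_tech}. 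Once $\E X_i > 3/4$ is secured, the concentration step is entirely routine.
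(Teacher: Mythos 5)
Your proposal is correct and follows essentially the same route as the paper: establish $\E X_i = \P{|I_i| \geq 1/[4(1-\gamma)]} > 3/4$ via the geometric law of the phase lengths and Lemma \ref{lem_tech}, then apply Hoeffding's bound to the independent indicators with deviation $n/12$ below the mean, giving the exponent $2(1/12)^2 = 1/72$. The only (immaterial) difference is that you invoke the one-sided Hoeffding bound to get $e^{-n/72}$ and absorb the factor $2$ as slack, whereas the paper uses its stated two-sided inequality and reports $2e^{-n/72}$ directly.
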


\begin{proof}
Preparing to use Hoeffding's bound,
\eq{
\P{X_i = 1} := \P{|I_i| \geq 1/[4(1 - \gamma)]} = p^{1/[4(1 - \gamma)]} > 3/4,
}
where we used the definitions of $X_i$, $I_i$ and Lemma \ref{lem_tech}. Therefore $\E X_i > 3/4$.
\eq{
\P{\sum_{i=1}^n X_i \leq {2 \over 3}n} &\leq \P{\sum_{i=1}^n X_i \leq {1 \over 12}n + n \E X_i}
= \P{\sum_{i=1}^n X_i - \E X_i \leq {1 \over 12} n} \leq 2 e^{-n / 72}
}
where we applied basic inequalities followed by Hoeffding's bound.
\end{proof}

\begin{lemma}\label{lem_weights}
If $\gamma > {3 \over 4}$ and $\sum_{a \neq a^*} w_t(a) \geq {1 \over 4}$ then
$\sum_{a\neq a^*} w_{t+k}(a) \geq {1 \over 8}$ for all $t \in \N$ and $k$ satisfying $0 \leq k \leq 1/[16(1 - \gamma)]$.
\end{lemma}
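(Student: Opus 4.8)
The plan is to exploit the fact that, as long as the policy remains in state $0$, the weight $w_{t+k}(a)$ is nothing more than a rescaled tail of the geometric series defining $w_t(a)$. Write $b_m := \ind{\pi(s_{1:t}0^m 1) \neq a^*}$ and set $S_t := \sum_{a \neq a^*} w_t(a) = \sum_{m \geq 0} p^m(1-p)\gamma^m b_m$. Since being in state $0$ at both times $t$ and $t+k$ forces the intervening history to be exactly $s_{1:t}0^k$ (from state $0$ the only transitions are the self-loop and the move to state $1$), the history feeding $w_{t+k}$ is $s_{1:t}0^k$, and reindexing the defining sum by $m = k+j$ gives the identity $S_{t+k} = (p\gamma)^{-k}\sum_{m \geq k} p^m(1-p)\gamma^m b_m$.

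First I would subtract the truncated initial segment: bounding $b_m \leq 1$ yields $(p\gamma)^k S_{t+k} = S_t - \sum_{m=0}^{k-1} p^m(1-p)\gamma^m b_m \geq S_t - \sum_{m=0}^{k-1}p^m(1-p)\gamma^m$. Using the geometric sum together with the identity $(1-p)/(1-p\gamma) = 1/2$ recorded in Lemma~\ref{lem_tech}, the truncated mass equals $\tfrac12(1-(p\gamma)^k)$, so the hypothesis $S_t \geq \tfrac14$ gives $(p\gamma)^k S_{t+k} \geq \tfrac12(p\gamma)^k - \tfrac14$, that is $S_{t+k} \geq \tfrac12 - \tfrac{1}{4(p\gamma)^k}$.

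The remaining ingredient is the elementary bound $(p\gamma)^k \geq 2/3$ over the stated range $0 \leq k \leq 1/[16(1-\gamma)]$. Since $p\gamma = \gamma/(2-\gamma) \in (0,1)$, the quantity $(p\gamma)^k$ is decreasing in $k$, so it suffices to verify the endpoint $(p\gamma)^{1/[16(1-\gamma)]} \geq 2/3$; this is a one-variable calculus fact entirely analogous to the inequality $p^{1/[4(1-\gamma)]} > 3/4$ established in Lemma~\ref{lem_tech}, and the assumption $\gamma > 3/4$ supplies the needed margin. Substituting $(p\gamma)^k \geq 2/3$ into $S_{t+k} \geq \tfrac12 - \tfrac{1}{4(p\gamma)^k}$ then produces $S_{t+k} \geq \tfrac12 - \tfrac38 = \tfrac18$, which is exactly the claim.

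I expect the only genuine obstacle to be getting the shift-and-rescale identity right: recognizing that $w_{t+k}$ carries the factor $(p\gamma)^{-k}$ relative to the tail of the series defining $w_t$, and justifying that remaining in state $0$ makes the intervening history precisely $0^k$ so that the two weight functions share the same indicator values $b_m$. Once that identity is in hand the estimate is a two-line computation plus the scalar bound on $(p\gamma)^k$, which I would defer to a short Lemma~\ref{lem_tech}-style calculation rather than grinding out explicitly.
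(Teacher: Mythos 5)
Your proposal is correct and follows essentially the same route as the paper's proof: split the defining geometric series at index $k$, identify the tail as $(p\gamma)^k \sum_{a \neq a^*} w_{t+k}(a)$ via the shift identity, bound the indicators in the head by $1$, and finish with the geometric series (using $(1-p)/(1-p\gamma) = 1/2$) together with a lower bound on $(p\gamma)^k$ over the stated range of $k$. The paper compresses these last steps into ``rearranging\ldots and using the geometric series,'' but the decomposition and the scalar estimate are the same, and your deferred bound $(p\gamma)^k \geq 2/3$ does hold (in fact $(p\gamma)^k \geq e^{-1/4}$ for $\gamma > 3/4$ and $k \leq 1/[16(1-\gamma)]$).
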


\begin{proof}
Working from the definitions.
\eq{
{1 \over 4} \leq \sum_{a \neq a^*} w_{t^0_i}(a) &\equiv \sum_{j=0}^\infty p^j(1-p) \gamma^j \ind{\pi(s_{1:t^0_i}0^j) \neq a^*} \\
&= \sum_{j=0}^{k-1} p^j(1-p) \gamma^j \ind{\pi(s_{1:t^0_i}0^j) \neq a^*} + p^k \gamma^k \sum_{a \neq a^*} w_a(s_{1:t^0_i} 0^k) \\
&\leq (1 - p) \sum_{j=0}^{k-1} p^j \gamma^j + p^k \gamma^k \sum_{a \neq a^*} w_a(s_{1:t^0_i} 0^k)
}
Rearranging,
setting $0 \leq k \leq 1/[16(1-\gamma)]$ and using the geometric series completes the proof.
\end{proof}
So far, none of our results have been especially surprising. Lemma \ref{lem_time} shows that at least two thirds of all phases have length exceeding $1/[4(1 - \gamma)]$
with high probability. Lemma \ref{lem_weights} shows that if at the start of a phase $\pi$ assigns a high weight to the sub-optimal actions, then it
does so throughout the entire phase. The following lemma is more fundamental. It shows that the number of phases where $\pi$ assigns a high
weight to the sub-optimal actions is of order ${1 \over \epsilon^2(1 - \gamma)^2} \log{1 \over \delta}$ with high probability.

\begin{lemma}\label{lem_bandit}
Let $N := {c_1 A \over \epsilon^2 (1 - \gamma)^2} \log {c_2 \over \delta}$ with constants as in Theorem \ref{thm_bandit} then
\eq{
\left|\left\{i : \sum_{a \neq a^*} w_{t^0_i}(a) > {1 \over 4} \wedge i < 2N + 1\right\} \right| > N
}
with probability at least $\delta$.
\end{lemma}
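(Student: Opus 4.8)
The plan is to reduce the policy $\pi$ to a bandit policy and invoke the Mannor--Tsitsiklis lower bound (Theorem \ref{thm_bandit}). State $1$ of $\Mh$ behaves exactly like the hard bandit of the remark following Theorem \ref{thm_bandit}: arm $a$ corresponds to pulling action $a$ in state $1$, giving reward $1$ (transition to $\oplus$) with probability $p_{1,a}^{\oplus} = \tfrac12 + \epsilon(a)$, the optimal arm is $a^*$, and the gap is $\epsilon' := \epsilon(a^*) = 16\epsilon(1-\gamma)$. The quantity $\sum_{a \neq a^*} w_{t_i^0}(a)$ is a $\gamma$-discounted measure of how likely $\pi$ is to pull a suboptimal arm the next time it reaches state $1$, so counting the ``bad'' phases is precisely a statement about how long such a bandit policy keeps planning suboptimal pulls. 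Since the gap is $\epsilon' \asymp \epsilon(1-\gamma)$, the bandit threshold $c_1 A / \epsilon'^2 \log(c_2/\delta)$ is exactly of the claimed order $N \asymp A/(\epsilon^2(1-\gamma)^2)\log(1/\delta)$, the absolute factor $16^2$ being absorbed into $c_1,c_2$.

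First I would rewrite the discounted weight as an honest probability. By Lemma \ref{lem_weights2}, $\sum_a w_{t_i^0}(a) = \tfrac12$, so $\tilde P(k) := 2 p^k(1-p)\gamma^k$ is a probability distribution on $k \in \N$, and in fact the choice $p = 1/(2-\gamma)$ makes $\tilde P$ \emph{exactly} the geometric law with escape probability $1 - p\gamma$ (one checks $2(1-p) = 1 - p\gamma$). Consequently
\[
2\sum_{a\neq a^*} w_{t_i^0}(a) \;=\; P_{K\sim \tilde P}\!\left\{\pi(s_{1:t_i^0}\,0^K 1) \neq a^*\right\},
\]
so the event $\sum_{a\neq a^*} w_{t_i^0}(a) > \tfrac14$ is the event that, with the state-$0$ sojourn drawn from $\tilde P$, $\pi$ plans a suboptimal action upon reaching state $1$ with probability exceeding $\tfrac12$.

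Next I would build a bandit policy $\pi_B$ that reproduces the phase-start history chain of the true run. In round $i$, $\pi_B$ holds the reconstructed history $s_{1:t_i^0}$, samples a state-$0$ sojourn, queries $\pi$ for the action at state $1$ and pulls that arm; it then uses the observed bandit reward to fix the simulated $\oplus/\ominus$ transition, samples the (geometric) absorbing sojourn, and returns to state $0$ to begin the next round. Because the phase lengths $|I_i|$ are independent of $\pi$ and of one another (noted after the definition of $I_i$), this simulation faithfully reproduces the law of the trajectory, so the bad-phase indicators have the same joint law under $\pi_B$ (with the hard bandit) as in the true run. Applying Theorem \ref{thm_bandit} to $\pi_B$ with $\epsilon \mapsto \epsilon'$ then shows $\pi_B$ cannot be reliably near-optimal before $\Theta(N)$ rounds, whence a counting argument over the window $i < 2N+1$ yields more than $N$ bad phases with probability at least $\delta$.

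The main obstacle is reconciling the \emph{discounting} built into $w_{t_i^0}$ with the \emph{undiscounted} law of an honest bandit pull: Theorem \ref{thm_bandit} naturally controls the arms actually pulled, whose sojourns are geometric with self-loop $p$, whereas a bad phase is defined through the tilted geometric law $\tilde P$ with self-loop $p\gamma$. Bridging the two is exactly what the auxiliary results are for — the identity $2(1-p) = 1-p\gamma$ realises $\tilde P$ as a genuine geometric sojourn, while Lemmas \ref{lem_time} and \ref{lem_weights} guarantee that whenever the start-of-phase weight is at least $\tfrac14$ it stays at least $\tfrac18$ throughout a phase of length $\Theta(1/(1-\gamma))$, so a high planning weight is not washed out by the discount. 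I expect the remaining work to be (i) this coupling between the tilted and true sojourn laws, and (ii) converting the sample-complexity lower bound of Theorem \ref{thm_bandit} into the stated ``more than $N$ of the first $2N$'' count, which I would do by a martingale concentration bound on $\sum_i\big(\ind{a_i \neq a^*} - 2\sum_{a\neq a^*} w_{t_i^0}(a)\big)$; everything else is bookkeeping.
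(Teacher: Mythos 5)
Your high-level reduction---simulate the bandit $p(a) := p_{1,a}^{\oplus}$ through $\pi$, feed the bandit reward back in as the $\oplus/\ominus$ transition, invoke Theorem \ref{thm_bandit} with gap $16\epsilon(1-\gamma)$, and count bad phases among the first $2N$---is indeed the paper's strategy, and your identity $2(1-p) = 1-p\gamma$ (so that $\tilde P(k) = 2p^k(1-p)\gamma^k$ is a genuine geometric law) is correct. The gap is in how you extract the count from Theorem \ref{thm_bandit}. The paper never relates the arms \emph{actually pulled} to the weights. Instead, it makes the bandit policy's final \emph{recommendation} a function of the weights themselves: $a_{\operatorname{best}} := \argmax_a \sum_{i=1}^{2N} \ind{\bar a_i = a}$ where $\bar a_i := \argmax_{a'} w_{t_i^0}(a')$, which is a legitimate bandit policy because $w_{t_i^0}$ is computable from $\pi$ and the observed history. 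Theorem \ref{thm_bandit} then forces $a_{\operatorname{best}} \neq a^*$ with probability at least $\delta$, and the rest is deterministic pigeonhole: a majority-vote failure means $\bar a_i \neq a^*$ for at least $N$ indices, and $\bar a_i \neq a^*$ together with Lemma \ref{lem_weights2} gives $w_{t_i^0}(a^*) \leq 1/4$, hence $\sum_{a\neq a^*} w_{t_i^0}(a) \geq 1/4$. No concentration inequality is needed, and you never specify any recommendation rule, which is exactly the missing idea.

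Your substitute step fails on its own terms. For the lemma to be a statement about the true run of $\pi$ in $\Mh$, the simulation must be faithful, so the state-$0$ sojourn has self-loop $p$ and $\E\left[\ind{a_i \neq a^*} \mid \mathcal F_{t_i^0}\right] = \sum_k p^k(1-p)\ind{\pi(s_{1:t_i^0}0^k1) \neq a^*}$, the \emph{undiscounted} average; your compensator $2\sum_{a\neq a^*} w_{t_i^0}(a) = \sum_k (p\gamma)^k(1-p\gamma)\ind{\pi(s_{1:t_i^0}0^k1) \neq a^*}$ is the tilted one. The two sojourn laws differ by the factor $2\gamma^k$, so your sum is not a martingale and need not concentrate near zero; re-tilting the simulation's sojourns to restore the martingale property changes the measure, so the conclusion would no longer concern the true process. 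Moreover, the implication obtainable from pulls runs the wrong way: a large weight forces a large suboptimal-pull probability (the tilted mass is at most twice the untilted mass), but the converse fails---a policy playing $a^*$ for short sojourns and suboptimally for sojourns of order $1/(1-\gamma)$ incurs a constant fraction of suboptimal pulls while its weight-sum stays below $1/4$ in every phase. Even granting your martingale, ``at least $N$ of $2N$ pulls suboptimal'' would only lower-bound the \emph{average} weight-sum by roughly $1/4$, which is consistent with no single phase exceeding the threshold $1/4$, so the stated count does not follow. Finally, Lemmas \ref{lem_time} and \ref{lem_weights} cannot bridge pulls and weights: they are used afterwards, in the proof of Theorem \ref{thm_lower}, to convert high-weight phases into $\epsilon$-errors.
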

\newcommand{\abest}{a_{\operatorname{best}}}
The idea is similar to that in \citep{Str09}. Assume a policy exists that doesn't satisfy the condition above and then
use it to learn the bandit defined by $p(a) := p_{1,a}^{\oplus}$.
\begin{proof}
Let $p(a) := p_{1,a}^{\oplus}$ be a bandit and use $\pi$ to learn bandit $p$ using Algorithm 2 below,
which returns an action $\abest$ defined
as
\eq{
\abest := \argmax_{a} \sum_{i=1}^{2N} \bar a_i, \quad \bar a_i := \argmax_{a'} w_{t_i^0}(a')
}
By Theorem \ref{thm_bandit}, the strategy in Algorithm 2 must fail with probability at least $\delta$. Therefore with
probability at least $\delta$, $\abest \neq a^*$. However $\abest$ is defined as the majority action of all the $\bar a_i$
and so for at least $N$ time-steps $\bar a_i \neq a^*$.
Suppose $w_{t_i^0}(a) > {1 \over 4}$, then by Lemma \ref{lem_weights2}, $\sum_{a \neq a^*} w_{t_i^0}(a) < {1 \over 4}$ and
$\bar a_i \equiv \argmax_{a} w_{t_i^0}(a) = a^*$.
This implies that with probability $\delta$, for at least $N$ time-steps $\sum_{a \neq a^*} w_{t^0_i}(a) > {1 \over 4}$
as required.
\end{proof}

\begin{algorithm}[H]
\caption{Learn Bandit}
\begin{algorithmic}
\State $t = 1$, $s_t = 0$, $k = 0$
\Loop
\State $a_t = \pi(s_{1:t})$
\If{$s_t=1$}
\State $r \sim p(a_t)$ \Comment{sample from bandit}
\If{$r = 1$}
\State $s_{t+1} = \oplus$
\Else
\State $s_{t+1} = \ominus$
\EndIf
\State $k = k + 1$
\If{$k = 2N$}
\State $a_{\operatorname{best}} = \argmax_a \sum_{i=1}^{2N} \ind{a = \argmax_{a'} w_{t^0_i}(a')}$
\State {\bf exit}
\EndIf
\Else
\State $s_{t+1} \sim p_{s_t,a_t}$ \Comment{sample from MDP}
\EndIf
\EndLoop
\end{algorithmic}
\end{algorithm}
\vspace{1mm}

\begin{proofof}{ of Theorem \ref{thm_lower}}
Suppose $A_i = 1$ and $0 \leq k \leq 1/[16(1 - \gamma)]$ then $s_{1:t_i^0+k} = s_{1:t_i^0}0^k$ and
\eqn{
\label{eq-lower2} \Delta(s_{1:t_i+k}) &= \sum_{t=0}^\infty {p^t(1 - p)\gamma^t} \Delta(s_{1:t_i+k}0^t1) \\
\label{eq-lower3} &\geq \sum_{t=0}^\infty {p^t(1 - p)\gamma^t} \sum_{a \neq a^*} \ind{\pi(s_{1:t_i^0+k}0^t1) = a} 8\epsilon \\
\label{eq-lower4} &\geq \sum_{a \neq a^*} w_{t_i^0 +k}(a) 8\epsilon \\
\label{eq-lower5} &\geq \epsilon,
}
where \eqr{eq-lower2} follows from the definition of $\Mh$ and the value function. \eqr{eq-lower3} by Lemma \ref{lem_tech2}. \eqr{eq-lower4} by
the definition
of $w_{t_i+k}(a)$ and \eqr{eq-lower4} by Lemma \ref{lem_weights}.
Thus for each $i$ where $A_i = 1$, policy $\pi$ makes
at least $1/[16(1 - \gamma)]$ $\epsilon$-errors. The proof is completed by showing that $A_i = 1$ for at least $N/6$ time-steps
with probability at least $\delta$,
which follows easily from Lemma \ref{lem_bandit} and Lemma \ref{lem_time}.

Dependence on $S$ is added trivially by chaining arbitrarily many such Markov decision processes together.
\end{proofof}

\begin{remark}
Dependence on $S \log S$ can possibly be added by a similar technique used by \cite{Str09}, but details could be messy.
\end{remark}

%%%%%%%%%%%%%%%%%%%%%%%%%%%%%%%%%%%%%%%%%%%%%%%%%%%%%%%%%%%%%%%
\section{Technical Results}\label{app_tech}
%%%%%%%%%%%%%%%%%%%%%%%%%%%%%%%%%%%%%%%%%%%%%%%%%%%%%%%%%%%%%%%

\begin{theorem}[Hoeffding Inequality]
Let $X_1,\cdots,X_n$ be independent $[0,1]$-valued random variables
with probability $1$. Then
\eq{
\P{\left|{1 \over n} \sum_{i=1}^n X_i-\E X_i \right| \geq \epsilon} \s{\leq} 2e^{-2\epsilon^2n}.
}
\end{theorem}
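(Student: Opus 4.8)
The plan is to use the standard Chernoff bounding technique, reducing the two-sided tail bound to a moment generating function estimate via Hoeffding's lemma. First I would bound a single tail, say $\P{{1 \over n}\sum_{i=1}^n (X_i - \E X_i) \geq \epsilon}$, and recover the other tail by applying the identical argument to the variables $-X_i$; a union bound over the two tails then produces the factor of $2$.

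For the single tail, introduce a free parameter $\lambda > 0$ and apply Markov's inequality to the exponentiated sum:
\eq{
\P{\sum_{i=1}^n (X_i - \E X_i) \geq n\epsilon} \s{\leq} e^{-\lambda n \epsilon}\, \E \exp\left(\lambda \sum_{i=1}^n (X_i - \E X_i)\right).
}
By independence the expectation factorizes as $\prod_{i=1}^n \E \exp\!\left(\lambda(X_i - \E X_i)\right)$, so it suffices to control each factor separately.

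The crux is \emph{Hoeffding's lemma}: if $Y$ is a zero-mean random variable taking values in an interval of length $1$, then $\E e^{\lambda Y} \leq e^{\lambda^2/8}$. I would prove this by convexity: since $y \mapsto e^{\lambda y}$ is convex, bound it on the interval by the chord joining its endpoint values, take expectations (which eliminates the linear term because $\E Y = 0$), and then verify that the resulting cumulant, viewed as a function of $\lambda$, has second derivative at most $1/4$ everywhere, so that a second-order Taylor expansion about $\lambda = 0$ gives the claimed bound. Each centred variable $X_i - \E X_i$ lies in $[-\E X_i,\, 1 - \E X_i]$, an interval of length exactly $1$, so the lemma applies and the product above is at most $e^{n\lambda^2/8}$.

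Combining the two displays yields an upper bound of $e^{-\lambda n\epsilon + n\lambda^2/8}$ valid for every $\lambda > 0$. I would then optimize by minimizing the exponent over $\lambda$, which is achieved at $\lambda = 4\epsilon$ and gives $e^{-2\epsilon^2 n}$; doubling for the two symmetric tails completes the proof. The main obstacle is establishing Hoeffding's lemma, since it requires the convexity bound together with the second-derivative estimate on the cumulant generating function; the Chernoff step, the factorization over independent terms, and the final optimization over $\lambda$ are all routine.
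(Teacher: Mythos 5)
Your proposal is correct. Note, however, that the paper itself gives no proof of this statement: Hoeffding's inequality appears in the appendix of technical results as a classical theorem, stated without argument, alongside Bernstein's inequality (which is attributed to \citet{Ber24}). So there is nothing in the paper to compare against; what you have written is the standard textbook derivation, and it goes through exactly as you describe. The Chernoff step via Markov's inequality, the factorization over independent summands, the observation that each centred variable $X_i - \E X_i$ lies in an interval of length $1$ so that Hoeffding's lemma gives $\E e^{\lambda(X_i - \E X_i)} \leq e^{\lambda^2/8}$, and the optimization at $\lambda = 4\epsilon$ yielding the exponent $-2\epsilon^2 n$ (with the factor $2$ from the union bound over the two tails) are all correct; the sketch of Hoeffding's lemma itself (chord bound by convexity, then the second-derivative bound of $1/4$ on the cumulant generating function and a Taylor expansion about $\lambda = 0$) is also the standard argument and is sound.
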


\begin{theorem}[Bernstein's Inequality \citep{Ber24}]\label{thm_bernstein}
Let $X_1,\cdots,X_n$ be independent real-valued random variables with zero mean and variance $\Var X_i = \sigma^2_i$.
If $|X_k| < c$ with probability one then
\eq{
\P{\left|{1 \over n} \sum_{i=1}^n X_i\right| \geq \epsilon} \s{\leq} 2e^{-{\epsilon^2 n \over 2 \sigma^2 + 2c \epsilon/3}},
}
where $\sigma^2 := {1 \over n} \sum_{i=1}^n \sigma^2_i$.
\end{theorem}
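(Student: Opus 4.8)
The plan is to prove this classical bound by the exponential-moment (Chernoff) method. First I would reduce to a one-sided tail: since $\{|{1\over n}\sum_i X_i| \geq \epsilon\}$ is the union of $\{\sum_i X_i \geq n\epsilon\}$ and $\{\sum_i (-X_i) \geq n\epsilon\}$, the union bound gives $\P{|{1\over n}\sum_i X_i| \geq \epsilon} \leq 2\,\P{\sum_i X_i \geq n\epsilon}$, because the variables $-X_i$ are again zero-mean, bounded by $c$, and have the same variances; this accounts for the factor of $2$ in the statement. Then, for any $\lambda > 0$, Markov's inequality applied to $e^{\lambda \sum_i X_i}$ together with independence yields the Chernoff bound $\P{\sum_i X_i \geq n\epsilon} \leq e^{-\lambda n \epsilon}\prod_{i=1}^n \E e^{\lambda X_i}$.

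The core step is to control each moment generating function $\E e^{\lambda X_i}$. Expanding the exponential and using $\E X_i = 0$ removes the linear term, leaving $\E e^{\lambda X_i} = 1 + \sum_{k \geq 2}{\lambda^k \E X_i^k / k!}$. Since $|X_i| < c$, I would bound $|\E X_i^k| \leq c^{k-2}\sigma_i^2$ for every $k \geq 2$, giving $\E e^{\lambda X_i} \leq 1 + {\sigma_i^2 \over c^2}\sum_{k \geq 2}{(\lambda c)^k / k!}$. The key technical estimate is $k! \geq 2\cdot 3^{k-2}$, which turns the remaining series into a geometric one and produces $\sum_{k\geq 2}(\lambda c)^k/k! \leq {(\lambda c)^2/2 \over 1 - \lambda c /3}$ for $\lambda c < 3$. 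Combined with $1 + x \leq e^x$, this yields the clean bound $\E e^{\lambda X_i} \leq \exp\!\left({\sigma_i^2 \lambda^2/2 \over 1 - \lambda c/3}\right)$, whose denominator $1 - \lambda c/3$ is precisely what makes the final exponent variance-sensitive.

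Taking the product over $i$ and writing $\sigma^2 = {1\over n}\sum_i \sigma_i^2$, the Chernoff bound becomes $\P{\sum_i X_i \geq n\epsilon} \leq \exp\!\left(-\lambda n \epsilon + {n\sigma^2 \lambda^2/2 \over 1 - \lambda c/3}\right)$. It then remains to optimize over $\lambda$; rather than differentiating, I would substitute the convenient choice $\lambda = {\epsilon \over \sigma^2 + c\epsilon/3}$, which satisfies $\lambda c < 3$. A short calculation shows $1 - \lambda c/3 = \sigma^2/(\sigma^2 + c\epsilon/3)$, so the second term in the exponent collapses to $\lambda n \epsilon/2$ and the whole exponent reduces to $-\lambda n \epsilon/2 = -{n\epsilon^2 \over 2\sigma^2 + 2c\epsilon/3}$. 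Doubling for the two-sided bound gives exactly the stated inequality.

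I expect the main obstacle to be the moment-generating-function estimate: obtaining the denominator $1 - \lambda c/3$, rather than a weaker sub-Gaussian bound, requires the precise factorial lower bound and the geometric resummation, and this is what produces the variance-dependent exponent on which the rest of the paper relies. Once that bound is in hand, the final optimization is only a matter of guessing the right $\lambda$, which I would verify by direct substitution rather than by calculus.
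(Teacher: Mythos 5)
Your proposal is correct, but there is nothing in the paper to compare it against: the paper states Bernstein's inequality as a classical result, citing \citet{Ber24}, and gives no proof at all. Your argument is the standard exponential-moment (Chernoff) proof, and every step checks out: the symmetrization via $-X_i$ correctly accounts for the factor of $2$; the moment bound $|\E X_i^k| \leq c^{k-2}\sigma_i^2$ is valid since $|X_i| < c$ and $\E X_i = 0$ kills the linear term; the factorial estimate $k! \geq 2 \cdot 3^{k-2}$ holds for all $k \geq 2$ by induction and yields the geometric resummation $\sum_{k \geq 2} (\lambda c)^k/k! \leq \frac{(\lambda c)^2/2}{1 - \lambda c/3}$ for $\lambda c < 3$; and your choice $\lambda = \epsilon/(\sigma^2 + c\epsilon/3)$ does satisfy $\lambda c < 3$, gives $1 - \lambda c/3 = \sigma^2/(\sigma^2 + c\epsilon/3)$, and collapses the exponent to $-\lambda n \epsilon/2 = -\frac{n\epsilon^2}{2\sigma^2 + 2c\epsilon/3}$, exactly matching the stated bound. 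So where the paper treats this as a black box imported from the literature, you have supplied a complete, self-contained derivation; the only observation worth adding is that this is precisely why such a verification is useful here, since the variance-sensitive denominator $2\sigma^2 + 2c\epsilon/3$ (as opposed to a crude sub-Gaussian bound) is the feature the whole paper's improvement from $1/(1-\gamma)^4$ to $1/(1-\gamma)^3$ rests on.
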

We can use Hoeffding and Bernstein to bound the gaps $|p - \hp|$ and $|\hp - \tp|$ we now want to combine these together in a nice way
to bound $|p - \tp|$.
\begin{lemma}\label{lem_estimate}\label{lem_estimate_bootstrap}
Let $p, \hp, \tp \in [0, 1]$ satisfy
\eq{
|p - \hp| \leq \min\left\{ CI_1, CI_2 \right\},
}
where
\eq{
CI_1 &:= \sqrt{{2p(1 - p) \over n} \log{2 \over \delta}} + {2 \over 3n} \log{2 \over \delta}
& CI_2 &:=\sqrt{{1 \over 2n} \log{2 \over \delta}}.
}
Then
\eq{
|p - \tp| &\s{\leq}
 \sqrt{{8\tp(1 - \tp) \over n} \log{2 \over \delta}} + 2\left({1 \over n} \log{2 \over \delta}\right)^{{3 \over 4}} +
{4 \over 3n} \log{2 \over \delta}
}
\end{lemma}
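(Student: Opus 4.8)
The plan is to reach the three-term bound by a triangle-inequality bootstrap through the empirical estimate $\hp$. Write $L := \log\frac{2}{\delta}$. The hypothesis supplies $|p-\hp|\le\min\{CI_1,CI_2\}$; together with the companion constraint $|\tp-\hp|\le\min\{CI_1',CI_2\}$ that $\tp$ satisfies by virtue of lying in the model class (this is precisely how the present lemma is invoked inside Lemma~\ref{lem_estimate_p_hp}), where $CI_1'$ is the analogue of $CI_1$ with $p$ replaced by $\tp$, I have both $|p-\hp|\le\sqrt{2p(1-p)L/n}+\frac{2L}{3n}$ and $|\tp-\hp|\le\sqrt{2\tp(1-\tp)L/n}+\frac{2L}{3n}$, as well as the weaker $p$-free bounds $|p-\hp|\le\sqrt{L/(2n)}$ and $|\tp-\hp|\le\sqrt{L/(2n)}$. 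First I would apply $|p-\tp|\le|p-\hp|+|\hp-\tp|$ with the $CI_1$-form of each gap, giving $|p-\tp|\le\sqrt{2p(1-p)L/n}+\sqrt{2\tp(1-\tp)L/n}+\frac{4L}{3n}$. All that remains is to rewrite the true variance $p(1-p)$ in the first root in terms of the model variance $\tp(1-\tp)$, since the target is stated entirely through $\tp$.

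For that conversion I would use the elementary Lipschitz estimate $p(1-p)\le\tp(1-\tp)+|p-\tp|$, which follows from $p(1-p)-\tp(1-\tp)=(p-\tp)(1-p-\tp)$ and $|1-p-\tp|\le1$. To control $|p-\tp|$ here I deliberately use the cruder, variance-free $CI_2$ bound on both gaps, obtaining $|p-\tp|\le 2\sqrt{L/(2n)}=\sqrt{2L/n}$, hence $p(1-p)\le\tp(1-\tp)+\sqrt{2L/n}$. Substituting and then splitting the root with $\sqrt{a+b}\le\sqrt a+\sqrt b$ yields
$$\sqrt{2p(1-p)L/n}\;\le\;\sqrt{2\tp(1-\tp)L/n}+\sqrt{2\sqrt2}\,(L/n)^{3/4},$$
which is exactly the step that manufactures the fractional $(L/n)^{3/4}$ term appearing in the claim.

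Finally I would collect terms. The two copies of $\sqrt{2\tp(1-\tp)L/n}$ combine as $2\sqrt{2\tp(1-\tp)L/n}=\sqrt{8\tp(1-\tp)L/n}$; since $\sqrt{2\sqrt2}=2^{3/4}<2$ the fractional contribution is at most $2(L/n)^{3/4}$; and the two remainders $\frac{2L}{3n}$ add to $\frac{4L}{3n}$. This reproduces the stated inequality exactly. The only genuinely delicate point, and the one I would take care to state cleanly, is the variance substitution: it must use the weak $CI_2$ bound rather than $CI_1$ so that $|p-\tp|$ is controlled independently of the unknown true variance $p(1-p)$; everything else is bookkeeping of constants and the concavity split of the square root.
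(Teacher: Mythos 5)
Your proof is correct and follows essentially the same route as the paper's: a triangle-inequality bootstrap through $\hp$, the variance conversion $p(1-p)\le\tp(1-\tp)+|p-\tp|$ with $|p-\tp|$ controlled by the variance-free $CI_2$ bound, and the concavity split $\sqrt{a+b}\le\sqrt{a}+\sqrt{b}$ manufacturing the $2^{3/4}(L/n)^{3/4}$ term, exactly as in the paper (which performs the variance conversion before, rather than after, the triangle inequality --- an immaterial reordering). Your explicit statement of the companion hypothesis $|\tp-\hp|\le\min\{CI_1',CI_2\}$, which the lemma as written leaves implicit but which both the paper's proof and its invocation inside Lemma~\ref{lem_estimate_p_hp_hard} require, is a point of added clarity rather than a different approach.
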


\begin{proof}
Using the first confidence interval
\eq{
|p - \hp| \leq \sqrt{{2p(1 - p) \over n} \log{2 \over \delta}} + {2 \over 3n} \log{2 \over \delta}
}
Assume without loss of generality that $1 - p \geq 1 - \tp$ (the case where $p \geq \tp$ is identical.
Therefore
\eq{
|p - \hp|
&\leq \sqrt{{2\tp(1 - \tp) \over n} \log{2 \over \delta}} + \sqrt{{2(p-\tp)(1 - \tp) \over n} \log{2 \over \delta}} +
{2 \over 3n} \log{2 \over \delta} \\
&\leq \sqrt{{2\tp(1 - \tp) \over n} \log{2 \over \delta}} + \sqrt{{4\sqrt{{1 \over 2n} \log{2 \over \delta}} \over n}\log{2 \over \delta}} +
{2 \over 3n} \log{2 \over \delta} \\
&= \sqrt{{2\tp(1 - \tp) \over n} \log{2 \over \delta}} + 8^{1\over 4}\left({1 \over n} \log{2 \over \delta}\right)^{{3 \over 4}} +
{2 \over 3n} \log{2 \over \delta},
}
where we used the second confidence interval and algebra. Bounding $|\hp - \tp|$ by the first confidence interval leads to
\eq{
|p - \tp| &\s{\leq}
 \sqrt{{8\tp(1 - \tp) \over n} \log{2 \over \delta}} + 2\left({1 \over n} \log{2 \over \delta}\right)^{{3 \over 4}} +
{4 \over 3n} \log{2 \over \delta}
}
as required.
\end{proof}

%%%%%%%%%%%%%%%%%%%%%%%%%%%%%%%%%%%%%%%%%%%%%%%%%%%%%%%%%%%%%%%
\section{Proof of Lemma \ref{lem_hard}}\label{app_hard}
%%%%%%%%%%%%%%%%%%%%%%%%%%%%%%%%%%%%%%%%%%%%%%%%%%%%%%%%%%%%%%%
We need to define some higher ``moments'' of the value function. This is somewhat unfortunate as it complicates
the proof, but may be unavoidable.
\begin{definition}\label{def_recurrence}
We define the space of bounded value/reward functions $\mathcal R$ by
\eq{
\mathcal R(i) \s{:=} \left\{v \in \left[0, \left({1 \over 1 - \gamma}\right)^i\right]^{|S|} \right\} \subset \R^{|S|}.
}
Let $\pi$ be some stationary policy. For $r_d \in \mathcal R(d)$ define values $V^\pi_d$ by the
Bellman equations
\eq{
V^\pi_d(s) &\s{=} r_d(s) + \gamma \sum_{s'} p_{s,\pi}^{s'} V^\pi_d(s').
}
Additionally,
\eq{
\sigma^{\pi}_d(s)^2 \s{:=} p_{s,\pi} \cdot {V^\pi_d}^2 - \left[p_{s,\pi} \cdot V^\pi_d\right]^2.
}
Note that $V_d \in \mathcal R(d+1)$ and $\sigma^2_d \in \mathcal R(2d + 2)$.
Let $r_0 \in \mathcal R(0)$ be the true reward function $r_0(s) := r(s)$ and define a {\it recurrence} by $r_{2d+2}(s) := \sigma^\pi_{d}(s)^2$.
We define $\tilde r_d$, $\hat r_d$, $\tV^\pi_d$, $\hV^\pi_d$ and $\ts_d^\pi$, $\hat \sigma_d^\pi$ similarly but where all parameters have hat/tilde.
\end{definition}
The following lemma generalises Lemma \ref{lem_estimate_p_hp}.

\begin{lemma}\label{lem_estimate_p_hp_hard}
Let $M \in \M_k$ at time-step $t$ then
\eq{
|(p_{s,\pi} - \tp_{s,\pi}) \cdot \tV^\pi_d| \leq \sqrt{8L_1 \ts^\pi_d(s)^2 \over n_t(s)} +
2\left({L_1 \over n_t(s)} \right)^{3 \over 4} {1 \over (1 - \gamma)^{d+1}} +
{4L_1 \over 3n_t(s)(1 - \gamma)^{d+1} }
}
\end{lemma}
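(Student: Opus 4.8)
The plan is to use Assumption \ref{ass} to turn the vector inner product into a single scalar and then apply the bootstrap bound of Lemma \ref{lem_estimate_bootstrap}. Under Assumption \ref{ass} the transition vectors $p_{s,\pi}$ and $\tp_{s,\pi}$ are supported on $\{\sap, \sam\}$, and since $p_{s,\pi}^{\sam} = 1 - p_{s,\pi}^{\sap}$ and $\tp_{s,\pi}^{\sam} = 1 - \tp_{s,\pi}^{\sap}$ their difference is rank one:
\[
(p_{s,\pi} - \tp_{s,\pi}) \cdot \tV^\pi_d = (p_{s,\pi}^{\sap} - \tp_{s,\pi}^{\sap})\left[\tV^\pi_d(\sap) - \tV^\pi_d(\sam)\right].
\]
Taking absolute values reduces the task to bounding the scalar transition error $|p_{s,\pi}^{\sap} - \tp_{s,\pi}^{\sap}|$ and the value gap $|\tV^\pi_d(\sap) - \tV^\pi_d(\sam)|$.

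For the scalar error I would invoke Lemma \ref{lem_estimate_bootstrap} with $p = p_{s,\pi}^{\sap}$, $\hp = \hp_{s,\pi}^{\sap}$, $\tp = \tp_{s,\pi}^{\sap}$ and $\delta = \delta_1$ (so that $\log(2/\delta_1) = L_1$). The two hypotheses of that lemma are supplied exactly by model-class membership at the pair $(s,\pi(s))$: because $M \in \M_k$ the true probability satisfies the confidence-interval constraint stated in terms of $p_{s,\pi}^{\sap}$ (giving $|p - \hp| \leq \min\{CI_1, CI_2\}$), while $\tM \in \M_k$ holds by construction and supplies $|\hp - \tp| \leq CI_1$ stated in terms of $\tp_{s,\pi}^{\sap}$. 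Lemma \ref{lem_estimate_bootstrap} then yields
\[
|p_{s,\pi}^{\sap} - \tp_{s,\pi}^{\sap}| \leq \sqrt{8L_1 \tp_{s,\pi}^{\sap}(1 - \tp_{s,\pi}^{\sap}) \over n_t(s)} + 2\left({L_1 \over n_t(s)}\right)^{3/4} + {4L_1 \over 3 n_t(s)}.
\]

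I would then multiply this bound by $|\tV^\pi_d(\sap) - \tV^\pi_d(\sam)|$ and treat the three terms separately. The crucial identity is that for a two-point distribution the variance factorises: since $\tp_{s,\pi}$ is supported on $\{\sap, \sam\}$,
\[
\ts^\pi_d(s)^2 = \tp_{s,\pi}^{\sap}(1 - \tp_{s,\pi}^{\sap})\left[\tV^\pi_d(\sap) - \tV^\pi_d(\sam)\right]^2,
\]
which makes the first term collapse to precisely $\sqrt{8L_1 \ts^\pi_d(s)^2 / n_t(s)}$. For the remaining two terms I would use that $\tV^\pi_d \in \mathcal R(d+1)$, so every entry lies in $[0, (1-\gamma)^{-(d+1)}]$ and hence $|\tV^\pi_d(\sap) - \tV^\pi_d(\sam)| \leq (1-\gamma)^{-(d+1)}$; this introduces exactly the advertised $(1-\gamma)^{-(d+1)}$ factors. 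Collecting the three contributions gives the claimed inequality, and specialising to $d = 0$ (where $\mathcal R(1)$ bounds the value range by $1/(1-\gamma)$) recovers Lemma \ref{lem_estimate_p_hp}.

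This argument is essentially bookkeeping once the rank-one reduction is in hand, so rather than a genuine obstacle there are only two points requiring care: confirming that the two distinct $\M_k$-membership facts, one for $M$ and one for $\tM$, together match the two hypotheses of Lemma \ref{lem_estimate_bootstrap} (the lemma internally bounds $|\hp - \tp|$ by the confidence interval evaluated at $\tp$, which is exactly the defining constraint of $\M_k$), and verifying the two-point variance identity, which is immediate but is the step responsible for producing the $\ts^\pi_d$ term with the correct constant $8$.
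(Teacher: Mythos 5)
Your proposal is correct and follows essentially the same route as the paper's own proof: reduce the inner product to the scalar $(p_{s,\pi}^{\sap} - \tp_{s,\pi}^{\sap})\bigl[\tV^\pi_d(\sap) - \tV^\pi_d(\sam)\bigr]$ via Assumption \ref{ass}, apply Lemma \ref{lem_estimate_bootstrap} using $M, \tM \in \M_k$, bound the value gap by $(1-\gamma)^{-(d+1)}$ since $\tV^\pi_d \in \mathcal R(d+1)$, and collapse the leading term with the two-point variance identity $\tp(1-\tp)\bigl[\tV_d(\sap) - \tV_d(\sam)\bigr]^2 = \ts_d(s)^2$. The only difference is cosmetic ordering (you do the rank-one reduction before invoking the bootstrap lemma, the paper does it after), and your remarks on matching the two $\M_k$-membership facts to the lemma's hypotheses are exactly the bookkeeping the paper performs implicitly.
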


\begin{proof}
Drop references to $\pi$ and
let $p := p_{s,\pi}^{\sap}$, $\tp := \tp_{s,\pi}^{\sap}$ and $n := n_t(s)$. Since $M, \tM \in \M_k$ then apply Lemma \ref{lem_estimate_bootstrap} to obtain
\eq{
|p - \tp| \leq \sqrt{{8L_1\tp(1 - \tp) \over n} } + 2\left({L_1 \over n} \right)^{{3 \over 4}} +
{4L_1 \over 3n}
}
Assume without loss of generality that $\tV_d(\sap) \geq \tV_d(\sam)$.
Therefore we have
\eqn{
\nonumber|(p_{s,\pi} - \tp_{s,\pi}) \cdot \tV_d| &\leq \sqrt{8L_1\tp(1 - \tp) \over n}\left(\tV_d(\sap) - \tV_d(\sam)\right) +
2\left({L_1 \over n} \right)^{3 \over 4}{1 \over (1 - \gamma)^{d+1}}  \\
\label{eq-lem-estimate} &\qquad + \;\;{4L_1 \over 3n(1 - \gamma)^{d+1}},
}
where we used Assumption \ref{ass} and the fact that $V_d \in \mathcal R_{d+1}$.
\eq{
\tp(1 - \tp)\left(\tV_d(\sap) - \tV_d(\sam)\right)^2
&= \tp(1 - \tp)\left(\tV_d(\sap)^2 + \tV_d(\sam)^2 - 2 \tV_d(\sap)\tV_d(\sam)\right) \\
&= \tp\tV_d(\sap)^2 + (1 - \tp)\tV_d(\sam)^2 - \left(\tp\tV_d(\sap) + (1 - \tp)\tV_d(\sam)\right)^2  \\
&= \ts_d(s)^2.
}
Substituting into \eqr{eq-lem-estimate} completes the proof.
\end{proof}

\begin{proofof}{ of Lemma \ref{lem_hard}}
For ease of notation we drop $\pi$ and $t$ super/subscripts.
Let
\eq{
\Delta_d := \left|\sum_{s \in S} [w(s) - \tw(s)]r_d(s)\right| \equiv |\tV_d(s_t) - V_d(s_t)|.
}
Using Lemma \ref{lem_transform}
\eq{
\Delta_d &\s{=} \gamma \left|\sum_{s\in S} w(s) (p_s - \tp_s) \cdot \tV_d\right| \\
&\s{\leq} {\epsilon \over 4(1 - \gamma)^d} + \left|\sum_{s \in X} w(s) (p - \tp) \cdot \tV_d\right| \\
&\s{\leq} {\epsilon \over 4(1 - \gamma)^d} + A_d + B_d + C_d,
}
where
\eq{
A_d &:= \sum_{s \in X} w(s) \sqrt{8L_1\ts^2_d \over n(s)} &
B_d &:= \sum_{s \in X} w(s) {4L_1 \over 3n(s)(1 - \gamma)^{d+1}} &
C_d &:= \sum_{s \in X} w(s) 2{\left({L_1 \over n(s)}\right)}^{3/4}.
}
The expressions $B_d$ and $C_d$ are substantially easier to bound than $A_d$. First we give a naive bound on $A_d$, which we use later.
\eqn{
\label{eqh-1} A_d &\s{\leq} \sum_{s \in X} \sqrt{8w(s)\ts^2_d(s)L_1\over n(s)}
\s{\equiv} \sum_{\ki \in \KI} \sum_{s \in K(\ki)} \sqrt{8w(s)\ts^2_d(s)L_1 \over n(s)} \\
\label{eqh-2}&\s{\leq} \sum_{\ki \in \KI} \sqrt{ {8L_1 |K(\ki)| \over m \kappa} \sum_{s \in K(\ki)} w(s)\ts^2_d(s)}
\s{\leq} \sum_{\ki \in \KI} \sqrt{ {8L_1 \over m} \sum_{s \in K(\ki)} w(s)\ts^2_d(s)} \\
\label{eqh-3}&\s{\leq} \sqrt{{8|\KI|L_1 \over m} \sum_{\ki \in \K} \sum_{s \in K(\ki)} w(s)\ts^2_d(s)}
\s{\leq} \sqrt{{8|\KI|L_1 \over m} \sum_{s \in X} w(s)\ts^2_d(s)} \\
\label{eqh-4}&\s{\leq} \sqrt{{8|\KI|L_1 \over m (1 - \gamma)^{2d+3}}},
}
where in \eqr{eqh-1} we used the definitions of $A_d$ and $\K$. In \eqr{eqh-2} we applied Cauchy-Schwartz and the assumption that $|K(\kappa)| \leq \kappa$.
In \eqr{eqh-3} we used Cauchy-Schwartz again and the definition of $\K$. Finally we apply the trivial bound of $\sum w(s) \ts^2_d(s) \leq 1/(1 - \gamma)^{2d+3}$.
Unfortunately this bound is not sufficient for our needs. The solution is approximate $w(s)$ by $\tilde w(s)$ and use Lemma \ref{lem_bound} to
improve the last step above.
\eqn{
\label{eqh-5} A_d &\s{\leq} \sqrt{{8|\KI|L_1 \over m} \sum_{s \in S} w(s)\ts^2_d(s)} \\
\label{eqh-6} &\s{\equiv} \sqrt{{8|\KI|L_1 \over m} \sum_{s \in S} \tilde w(s)\ts^2_d(s) +
{8|\KI|L_1 \over m} \sum_{s \in S} (w(s) - \tilde w(s)) \ts^2_d(s)} \\
\label{eqh-7} &\s{\leq} \sqrt{{8|\KI|L_1 \over m(1 - \gamma)^{2d+2}} +
{8|\KI|L_1 \over m} \Delta_{2d+2}},
}
where \eqr{eqh-5} is as in the naive bound. \eqr{eqh-6} is substituting $w(s)$ for $\tilde w(s)$ and \eqr{eqh-6} uses the definition of $\Delta$.
Therefore
\eq{
\Delta_d &\s{\leq} {\epsilon \over 4(1 - \gamma)^d} + B_d + C_d +
\sqrt{{8L_1|\KI| \over m}\left[{1 \over (1 - \gamma)^{2d+2}}\right] } + \sqrt{{8|\KI|L_1 \over m}\Delta_{2d+2}}.
}
Expanding the recurrence up to $\beta$ leads to
\eqn{
\nonumber \Delta_0 &\s{\leq} 8\sum_{d \in \mathcal D - \left\{\beta\right\}}
\left({L_1|\KI| \over m}\right)^{d/(d+2)} \left[{\epsilon \over 4(1 - \gamma)^d} + B_d + C_d + \sqrt{{L_1|\KI| \over m}\left[{1  \over (1 - \gamma)^{2d+2}}\right]} \right]^{2/(d+2)} \\
\label{eqn-hard} &\quad +
8 \left({L_1|\KI| \over m}\right)^{\beta/(\beta + 2)} \left[2\sqrt{{L_1|\KI| \over m(1 - \gamma)^{2\beta+3}}} + B_\beta + C_\beta \right]^{2/(\beta+2)},
}
where we used the naive bound to control $A_\beta$. The bounds on $B_d$ and $C_d$ are somewhat easier, and
follow similar lines to the naive bound on $A_d$.
\eq{
B_d &\s{\equiv} \sum_{s \in X} w(s) {4L_1 \over 3n(s)(1 - \gamma)^{d+1}}
\s{=} {4L_1 \over 3(1 - \gamma)^{d+1}} \sum_{\ki \in \K} {|K(\ki) \over m \kappa}
\s{\leq} {4|\KI|L_1 \over 3m (1 - \gamma)^{d+1}} \\
C_d &\s{\equiv} 2\sum_{s \in X} w(s) \left({L_1 \over n(s)}\right)^{3\over 4} {1 \over (1 - \gamma)^{d+1}}
\s{\leq} {2 \over (1 - \gamma)^{d+1 + 1/4}}\left({|\KI|L_1 \over m}\right)^{3 \over 4}.
}
Letting $m := \constm$ completes the proof.
\end{proofof}

\newpage
%%%%%%%%%%%%%%%%%%%%%%%%%%%%%%%%%%%%%%%%%%%%%%%%%%%%%%%%%%%%%%%
\section{Constants}\label{app_const}
%%%%%%%%%%%%%%%%%%%%%%%%%%%%%%%%%%%%%%%%%%%%%%%%%%%%%%%%%%%%%%%
The proof of Theorem \ref{thm_upper} uses many constants, which can be hard to keep track of. For convenience we list them below, including
approximate upper/lower bounds as appropriate.
\setlength{\LTleft}{-0.2cm}
\setlength\LTright\fill
\begin{longtable}{l l}
{\bf Constant} & {\bf O/$\Omega$} \\[0.5cm]
$\iotamax := \constiotamax$ & $\O{\log {|S| \over \epsilon(1 - \gamma)}}$ \\[0.5cm]
$\beta := \constbeta$ & $\Omega\left(\log {1 \over 1 - \gamma}\right)$ \\[0.5cm]
$|\mathcal D| := |\mathcal Z(\beta)| $ & $\O{\log \log {1 \over 1 - \gamma}}$ \\[0.5cm]
$|\K| := |\mathcal Z(|S|)|$ & $\O{\log |S|}$ \\[0.5cm]
$|\I| := \iotamax + 1$ & $\O{\log {|S| \over \epsilon(1 - \gamma)}}$ \\[0.5cm]
$|\KI| := |\K||\I|$ & $\O{\log |S| \log {|S| \over \epsilon(1 - \gamma)}}$ \\[0.5cm]
$H := \constH$ & $\O{{1 \over 1 - \gamma} \log{ |S| \over \epsilon(1 - \gamma)}}$ \\[0.5cm]
$\wmin := \constwmin$ & $\Omega\left(\epsilon(1 - \gamma) \over |S|\right)$ \\[0.5cm]
$\delta_1 := \constdeltaone$ & $\O{\delta \over |\SA|^2 \log|S| \log{|S| \over \epsilon(1 - \gamma)}}$ \\[0.5cm]
$L_1 := \log{2 \over \delta_1}$ & $\O{\log{|\SA| \over \delta \epsilon(1 - \gamma)}}$ \\[0.5cm]
$m := \constm$ & $\O{{1 \over \epsilon^2(1 - \gamma)^2 }
\log{|\SA| \over \delta\epsilon(1 - \gamma)} \log{|S|} \log {|S| \over \epsilon(1 - \gamma)} \log^2 \log{1 \over 1 - \gamma}}$ \\[0.5cm]
$N := \constN$ & $\O{{|\SA| \over \epsilon^2(1 - \gamma)^2}
\log{|\SA| \over \delta\epsilon(1 - \gamma)} \log{|S|} \log {|S| \over \epsilon(1 - \gamma)}\log^2 \log{1 \over 1 - \gamma}}$ \\[0.5cm]
$\expmax := \constexpmax$ & $\O{{|\SA| \over \epsilon^2(1 - \gamma)^2}
\log{|\SA| \over \delta\epsilon(1 - \gamma)} \log^2 {|S|} \log^2 {|S| \over \epsilon(1 - \gamma)} \log^2 \log{1 \over 1 - \gamma}}$ \\[0.5cm]
$\umax := \constumax$ & $\O{|\SA| \log|S| \log {|S| \over \epsilon(1 - \gamma)}}$ \\[0.5cm]
\end{longtable}

\newpage
%%%%%%%%%%%%%%%%%%%%%%%%%%%%%%%%%%%%%%%%%%%%%%%%%%%%%%%%%%%%%%%
\section{Table of Notation}\label{app_not}
%%%%%%%%%%%%%%%%%%%%%%%%%%%%%%%%%%%%%%%%%%%%%%%%%%%%%%%%%%%%%%%

\begin{longtable}{p{2cm} p{11cm}}
$S, A$ & Finite sets of states and actions respectively. \\[0.2cm]
$\gamma$ & The discount fact. Satisfies $\gamma \in (0, 1)$. \\[0.2cm]
$\epsilon$ & The required accuracy. \\[0.2cm]
$\delta$ & The probability that an algorithm makes more mistakes than its sample-complexity. \\[0.2cm]
$\N$ & The natural numbers, starting at $0$. \\[0.2cm]
$\log$ & The natural logarithm. \\[0.2cm]
$\wedge, \vee$ & Logical and/or respectively. \\[0.2cm]
$\E X, \Var X$ & The expectation and variance of random variable $X$ respectively. \\[0.2cm]
$z_i$ & $z_i := 2^i - 2$. \\[0.2cm]
$\mathcal Z(a)$ & Defined as a set of all $z_i$ up to and including $a$.
Formally $\mathcal Z(a) := \left\{ z_i : i \leq \argmin_{i} \left\{z_i \geq a\right\}\right\}$. Contains approximately $\log a$ elements. \\[0.2cm]
$\pi$ & A policy. \\[0.2cm]
$p$ & The transition function, $p: S\times A \times S\to[0,1]$.
We also write $p_{s,a}^{s'} := p(s, a, s')$ for the probability of transitioning to state
$s'$ from state $s$ when taking action $a$. $p_{s,\pi}^{s'} := p_{s,\pi(s)}^{s'}$.
$p_{s,a} \in [0,1]^{|S|}$ is the vector of transition probabilities. \\[0.2cm]
$\hat p, \tilde p$ & Other transition probabilities, as above. \\[0.2cm]
$r$ & The reward function $r:S \to A$. \\[0.2cm]
$M$ & The true MDP. $M := (S, A, p, r, \gamma)$. \\[0.2cm]
$\hM$ & The MDP with empirically estimated transition probabilities. $\hM := (S, A, \hp, r, \gamma)$. \\[0.2cm]
$\tM$ & An MDP in the model class, $\mathcal M$. $\tM := (S, A, \tp, r, \gamma)$. \\[0.2cm]
$V^\pi_M$ & The value function for policy $\pi$ in MDP $M$. Can either be viewed as a function $V^\pi_M : S\to \R$ or vector $V^\pi_M \in \R^{|S|}$. \\[0.2cm]
$\tV^\pi, \hV^\pi$ & The values of policy $\pi$ in MDPs $\tM$ and $\hM$ respectively. \\[0.2cm]
$\pi^* \equiv \pi^*_M$ & The optimal policy in MDP $M$. \\[0.2cm]
$\tilde \pi^* \equiv \pi^*_{\tM}$ & The optimal policy in $\tM$. \\[0.2cm]
$\hat \pi^* \equiv \pi^*_{\hM}$ & The optimal policy in $\hM$. \\[0.2cm]
$\pi_k$ & The (stationary) policy at used in episode $k$. \\[0.2cm]
$n_t(s, a)$ & The number of visits to state/action pair $(s, a)$ at time-step $t$. \\[0.2cm]
$n_t(s, a, s')$ & The number of visits to state $s'$ from state $s$ when taking action $a$ at time-step. \\[0.2cm]
$n_t(s)$ & The number of visits to state/action pair $(s, \pi_t(s))$ at time-step $t$. \\[0.2cm]
$v_{t_k}(s,a)$ & If $t_k$ is the start of an exploration phase then this is the total number of visits to state $(s, a)$ in that
exploration phase. \\[0.2cm]
$s_t, a_t$ & The state and action in time-step $t$ respectively. \\[0.2cm]
$V^\pi_d$ & A higher ``moment'' value function. See Definition \ref{def_recurrence}. \\[0.2cm]
$\sigma^\pi_d(s)^2$ & The variance of $V_d(s')$ when taking action $\pi(s)$ in state $s'$. Defined in Definition \ref{def_recurrence}. \\[0.2cm]
$L_1$ & Defined as $\log(2 / \delta_1)$. \\[0.2cm]
$\mathcal D$ & Defined as $\mathcal Z(\beta)$.  \\[0.2cm]
$w_t(s)$ & The expected discounted number of visits to state $s, \pi_k(s)$ while following policy $\pi_k$. \\[0.2cm]
$X_t$ & The active set containing states $s$ where $w(s) \geq \wmin$. \\[0.2cm]
$\K$ & A set if indices, $\K := \mathcal Z(|S|)$. \\[0.2cm]
$\I$ & A set of indices, $\I := \left\{0,1,2,\cdots, \iotamax\right\}$. \\[0.2cm]
$K_t(\ki)$ & A set of states that have
\eq{
w_t(s) \in [w_\iota, 2w_\iota) \wedge n_t(s) \in m[\kappa w_{\iota}, (2\kappa + 2) w_{\iota}).
}
Note that $\bigcup_{\ki} K_t(\ki)$ contains all states with $w(s) \geq \wmin$. \\[0.2cm]
\end{longtable}

\end{document}
%----------------------End-of-pacmdp.tex----------------------%